\def\arXiv{1}
\author{
  Daniel Levy\thanks{Equal contribution. Work was done during an 
  internship at Google Research.}$^{~\, ,1}$ ~~~~~Ziteng Sun$^{*, 2}$ ~~~~~ 
  Kareem Amin$^{3}$ ~~~~~ Satyen Kale$^3$\\
  \arxiv{Alex Kulesza$^3$ ~~~~~ Mehryar Mohri$^{3, 4}$ ~~~~~ Ananda 
  	Theertha Suresh$^3$}
  \notarxiv{\textbf{Alex Kulesza$^3$ ~~~~~ Mehryar Mohri$^{3, 4}$ ~~~~~ 
  Ananda 
  Theertha Suresh$^3$} }\\ \\
  $^1$Stanford University ~~~ $^2$Cornell University ~~~ $^3$Google Research ~~~ $^4$Courant Institute \\
  \texttt{\href{danilevy@stanford.edu}{danilevy@stanford.edu}, ~~ \href{zs335@cornell.edu}{zs335@cornell.edu},}\\
    \texttt{\{\href{kamin@google.com}{kamin}, \href{satyenkale@google.com}{satyenkale}, \href{kulesza@google.com}{kulesza}, \href{mohri@google.com}{mohri}, \href{theertha@google.com}{theertha}\}@google.com}
  \date{}
}
\newcommand{\notarxiv}[1]{foo}
\newcommand{\arxiv}[1]{ba}
	\renewcommand{\arxiv}[1]{#1}%
	\renewcommand{\notarxiv}[1]{\ignorespaces}%
	\renewcommand{\arxiv}[1]{\ignorespaces}%
	\renewcommand{\notarxiv}[1]{#1}%
\newtheorem{assumption}{Assumption}
\theoremstyle{plain}
\newtheorem{theorem}{Theorem}
\newtheorem{lemma}{Lemma}
\newtheorem{claim}{Claim}
\newtheorem{proposition}{Proposition}
\newtheorem{corollary}{Corollary}
\newtheorem{definition}{Definition}
\theoremstyle{definition}
\newtheorem{remark}{Remark}
\newtheorem*{example*}{Example}
\newcommand{\ignore}[1]{}
\newcommand{\EE}{\mathbb{E}}
\newcommand{\RR}{\mathbb{R}}
\newcommand{\expectation}[1]{\EE\left[#1\right]}
\newcommand{\expectsub}[2]{\EE_{#1}\left[#2\right]}
\def \cA     {{\cal A}}
\def \cD     {{\cal D}}
\def \cL     {{\cal L}}
\def \cP     {{\cal P}}
\def \cQ     {{\cal Q}}
\def \cS     {{\cal S}}
\def \cZ     {{\cal Z}}
\newcommand{\Var}{{\rm Var}}
\newcommand{\ie}{\textit{i.e.,}\xspace}  %
\newcommand{\newzs}[1]{{\color{red}#1}}
\def \ceil#1{{\lceil{#1}\rceil}}
\def \floor#1{{\lfloor{#1}\rfloor}}
\def \Paren#1{{\left({#1}\right)}}
\newcommand{\probof}[1]{\P\Paren{#1}}
\def\ignore#1{}
\newcommand{\bi}{\begin{itemize}}
\newcommand{\ei}{\end{itemize}}
\def\orpro{\mathop{\mathchoice
   {\vee\kern-.49em\raise.7ex\hbox{$\cdot$}\kern.4em}
   {\vee\kern-.45em\raise.63ex\hbox{$\cdot$}\kern.2em}
   {\vee\kern-.4em\raise.3ex\hbox{$\cdot$}\kern.1em}
   {\vee\kern-.35em\raise2.2ex\hbox{$\cdot$}\kern.1em}}\limits}
\def\andpro{\mathop{\mathchoice
 {\wedge\kern-.46em\lower.69ex\hbox{$\cdot$}\kern.3em}
 {\wedge\kern-.46em\lower.58ex\hbox{$\cdot$}\kern.25em}
 {\wedge\kern-.38em\lower.5ex\hbox{$\cdot$}\kern.1em}
 {\wedge\kern-.3em\lower.5ex\hbox{$\cdot$}\kern.1em}}\limits}
\def\simge{\mathrel{%
   \rlap{\raise 0.511ex \hbox{$>$}}{\lower 0.511ex \hbox{$\sim$}}}}
\def\simle{\mathrel{
   \rlap{\raise 0.511ex \hbox{$<$}}{\lower 0.511ex \hbox{$\sim$}}}}
\newcommand{\mc}[1]{\mathcal{#1}}
\newcommand{\mrm}[1]{\mathrm{#1}}
\newcommand{\msf}[1]{\mathsf{#1}}
\newcommand{\norms}[1]{\|{#1}\|} %
\DeclarePairedDelimiter{\abs}{\lvert}{\rvert} 
\DeclarePairedDelimiter{\brk}{[}{]}
\DeclarePairedDelimiter{\crl}{\{}{\}}
\DeclarePairedDelimiter{\prn}{(}{)}
\DeclarePairedDelimiter{\norm}{\|}{\|}
\DeclarePairedDelimiter{\tri}{\langle}{\rangle}
\newcommand{\defeq}{\coloneqq}
\newcommand{\eqdef}{\eqqcolon}
\newcommand{\what}[1]{\widehat{#1}} %
\newcommand{\indic}[1]{1\!\left\{#1\right\}} %
\newcommand{\R}{\mathbb{R}}
\newcommand{\N}{\mathbb{N}}
\newcommand{\E}{\mathbb{E}} %
\renewcommand{\P}{\mathbb{P}} %
\newcommand{\var}{{\rm Var}} %
\newcommand{\simiid}{\stackrel{\rm iid}{\sim}}
\newcommand{\dkl}[2]{D_{\rm kl}\left({#1} \mid \mid {#2}\right)}
\providecommand{\argmin}{\mathop{\rm argmin}}
\providecommand{\minimize}{\mathop{\rm minimize}}
\title{Learning with User-Level Privacy}
\newcommand\tsup[2][2]{%
 \def\useanchorwidth{T}%
  \ifnum#1>1%
    \stackon[-.5pt]{\tsup[\numexpr#1-1\relax]{#2}}{\scriptscriptstyle\sim}%
  \else%
    \stackon[.5pt]{#2}{\scriptscriptstyle\sim}%
  \fi%
}
\newcommand{\tstop}{T_s}
\newcommand{\dtv}[2]{\norm{#1 - #2}_{\msf{TV}}}
\newcommand{\lap}[1]{\text{Lap}\Paren{#1}}
\newcommand{\smooth}{H}
\newcommand{\zu}{\z^{(u)}}
\newcommand{\z}{z}
\newcommand{\zz}{Z}
\newcommand{\ZZ}{\mc{Z}}
\newcommand{\eps}{\varepsilon}
\newcommand{\A}{\mathsf{A}}
\newcommand{\dham}{\mathrm{d}_{\mathsf{Ham}}}
\newcommand{\f}{\ell}
\newcommand{\ff}{\mc{L}}
\newcommand{\minimaxpurebdd}{\mathfrak{M}^{\msf{user}}_{m, n}(\Theta, \mc{F}_B, \epsilon)}
\renewcommand{\O}{O}
\newcommand{\Ologlog}{\tilde{\tilde}{O}}
\newcommand{\Olog}{\tilde{O}}
\newcommand{\Omloglog}{\tilde{\Omega}}
\newcommand{\alg}{\msf{A}}
\newcommand{\Auser}{\mc{A}_{\eps, \delta}^{\msf{user}}}
\newcommand{\Auserp}{\mc{A}_{\eps}^{\msf{user}}}
\newcommand{\Gbar}{\underline{G}}
\newcommand{\Gtr}{\msf{N}^\mrm{tr}}
\newcommand{\Gsig}{\widetilde{G}}
\newcommand{\whmat}{\mathbf{H}}
\newcommand{\dotp}[2]{\langle #1, #2\rangle}
  \newcommand{\minimaxitem}{\mathfrak{M}^{\msf{item}}_n(\Theta, \mc{F}, \epsilon)}
\long\def\@makecaption#1#2{
  \vskip 0.8ex
  \setbox\@tempboxa\hbox{\small {\bf #1:} #2}
  \parindent 1.5em  %
  \dimen0=\hsize
  \advance\dimen0 by -3em
  \ifdim \wd\@tempboxa >\dimen0
  \hbox to \hsize{
    \parindent 0em
    \hfil 
    \parbox{\dimen0}{\def\baselinestretch{0.96}\small
      {\bf #1.} #2
    } 
    \hfil}
  \else \hbox to \hsize{\hfil \box\@tempboxa \hfil}
  \fi
}
\newcommand{\hetero}{\Delta}%
\newcommand{\poly}{\mathsf{poly}}
\newcommand{\normgrad}{G}
\newcommand{\normpara}{R}
\newcommand{\ploss}{\cL}
\newcommand{\vecz}{\mathbf{z}}
\newcommand{\theerthaedit}[2]{{\color{red}{{#2}}}}
\newcommand{\dl}[1]{{\bf\color{blue} [DL]: #1}}
\renewcommand{\theerthaedit}[1]{}
\begin{document}

\maketitle

\begin{abstract}
We propose and analyze algorithms to solve a range of learning tasks under
user-level differential privacy constraints. Rather than guaranteeing only the
privacy of individual samples, user-level DP protects a user's entire
contribution ($m \ge 1$ samples), providing more stringent but more realistic
protection against information leaks.  We show that for high-dimensional mean
estimation, empirical risk minimization with smooth losses, stochastic convex
optimization, and learning hypothesis classes with finite metric entropy, the
privacy cost decreases as $O(1/\sqrt{m})$ as users provide more samples. In
contrast, when increasing the number of users $n$, the privacy cost decreases at
a faster $O(1/n)$ rate.  We complement these results with lower bounds showing
the minimax optimality of our algorithms for mean estimation and stochastic
convex optimization. Our algorithms rely on novel techniques for private mean
estimation in arbitrary dimension with error scaling as the concentration radius
$\tau$ of the distribution rather than the entire range.
 \end{abstract}

\section{Introduction}

Releasing seemingly innocuous functions of a data set can easily compromise the
privacy of individuals, whether the functions are simple counts
\citep{HomerSzReDuTeMuPeStNeCr08} or complex machine learning models like deep
neural networks \citep{ShokriSh15, FredriksonJhRi15}. To protect against such
leaks, \citeauthor{DworkMcNiSm06} proposed the notion of \emph{differential
  privacy} (DP). Given some data from $n$ participants in a study, we say that a
statistic of the data is differentially private if an attacker who already knows
the data of $n - 1$ participants cannot reliably determine from the statistic
whether the $n$-th remaining participant is Alice or Bob. With the recent
explosion of publicly available data, progress in machine learning, and
widespread public release of machine learning models and other statistical
inferences, differential privacy has become an important standard and is widely
adopted by both industry and government \citep{GooglePrivacy19, ApplePrivacy17,
  ding2017collecting, USCensus18}.

The standard setting of DP described in~\cite{DworkMcNiSm06} assumes that
each participant contributes a \emph{single} data point to the
dataset, and preserves privacy by ``noising'' the output in a way that
is commensurate with the maximum contribution of a single
example. This is not the situation faced in many applications of
machine learning models, where users often contribute \emph{multiple}
samples to the model---for example, when language and image
recognition models are trained on the users' own data, or in federated
learning settings \citep{kairouz2019advances}. As a result, current
techniques either provide privacy guarantees that degrade with a
user’s increased participation or naively add a substantial amount of
noise, relying on the group property of differential privacy, which
significantly harms the performance of the deployed model.

To remedy this issue, we consider \emph{user-level} DP, which instead of
guaranteeing privacy for individual samples, protects a user's \emph{entire
  contribution} ($m\ge 1$ samples). This is a more stringent but more realistic
privacy desideratum. To hold, it requires that the output of our algorithm does
not significantly change when changing user's entire
contribution---i.e. possibly swapping up to $m$ samples in total. We make this
formal in Definition~\ref{def:user_dp}.
 Very recently, for the reasons outlined
above, there has been increasing interest in user-level DP for applications such
as estimating discrete distributions under user-level privacy constraints
\citep{LiuSuYuKuRi20}, PAC learning with user-level privacy \citep{ghazi2021user},  and bounding user contributions in ML models
\citep{amin2019bounding, epasto2020smoothly}. Differentially private SQL with
bounded user contributions was proposed in~\cite{wilson2019differentially}. 
User-level privacy has
been also studied in the context of learning models via federated learning
\citep{mcmahan2017learning, mcmahan2018general,wang2019beyond,
  augenstein2019generative}.

In this paper, we tackle the problem of \emph{learning} with user-level privacy
in the central model of DP. In particular, we provide algorithms and analyses
for the tasks of mean estimation, empirical risk minimization (ERM), stochastic
convex optimization (SCO), and learning hypothesis classes with finite metric
entropy.  Our utility analyses assume that all users draw their samples
i.i.d. from related distributions, a setting we refer to as \emph{limited
  heterogeneity}. On these tasks, naively applying standard mechanisms, such as
Laplace or Gaussian, or using the group property with item-level DP estimators,
both yield a privacy error independent of $m$.
We first develop novel private
mean estimators in high dimension with statistical and privacy error scaling
with the (arbitrary) concentration radius rather than the range.
Our algorithms rely on (privately) answering a sequence of adaptively chosen
queries using users' samples, e.g., gradient queries in stochastic gradient
descent algorithms. We show that for these tasks, the additional error due to
privacy constraints decreases as $O(1/\sqrt{m})$, contrasting with the naive
rate---independent of $m$. %
Interestingly, increasing $n$, the number of users, decreases the privacy
cost at a faster $O(1/n)$ rate.

Importantly, our results imply concrete practical recommendations on sample
collection, \emph{regardless of the level of heterogeneity}. Indeed, increasing
$m$ will yield the most value in the i.i.d.\ setting and will yield no
improvement when the users’ distributions are arbitrary. As the real-world will
lie somewhere in between, our results exhibit a regime where, for any
heterogeneity, it is strictly better to collect more users (increasing $n$) than
more samples per user (increasing $m$).

\subsection{Our Contributions and Related Work} \label{sec:results}

We provide a theoretical tool to construct estimators for tasks with
user-level privacy constraints and apply it to a range of learning
problems.
\paragraph{Optimal private mean estimation and uniformly concentrated queries
  (Section~\ref{sec:sq})} \arxiv{In this section, we present our main technical
tool. }We show that for a random variable in $[-B, B]$ concentrated in an unknown
interval of radius $\tau$ (made precise in Definition~\ref{def:concentrated}),
we can privately estimate its mean with \arxiv{statistical and private }error
proportional to $\tau$ rather than \arxiv{the whole range }$B$, as we would 
obtain using
standard private mean estimation techniques such as Laplace mechanism
~\citep{dwork2014algorithmic}. 
When data is concentrated in
$\ell_\infty$-norm, several papers show that one can achieve an error scaling
with $\tau$ rather than $B$, either asymptotically~\citep{smith2011privacy}, for
Gaussian mean-estimation~\citep{karwa2017finite, kamath2019privately}, for
sub-Gaussian symmetric distributions~\citep{cai2019cost, bun2019average} or for
distributions with bounded $p$-th moment~\citep{kamath2020private}. We propose a
private mean estimator (Algorithm~\ref{alg:winsorized_highd}) with error scaling
with $\tau$ that works in arbitrary dimension when data is concentrated in
$\ell_2$-norm (Theorem~\ref{thm:winsorized_highd}). Our algorithm 
avoids a superfluous $\sqrt{d}$ factor compared to naively applying 
previous 
approaches coordinate-wise. In
Corollary~\ref{coro:bounded}, %
we show it (optimally) solves mean estimation under user-level privacy
constraints for random vectors bounded in $\ell_2$-norm. In
\notarxiv{Appendix~\ref{sec:uc}}
\arxiv{Theorem~\ref{thm:sq_highd}}, 
we show that for uniformly concentrated queries
(see Definition~\ref{def:uniform_concentration}), sequentially applying
Algorithm~\ref{alg:winsorized_highd} privately answers $K$ adaptively chosen
queries with privacy cost $\tilde{O}(\tau\sqrt{K}/n\eps)$.

Our conclusions relate to the growing literature in adaptive data 
analysis.
While a sequence of work~\citep{dwork2015preserving, 
bassily2016algorithmic,
	feldman2017median, feldman2018calibrating} 
use techniques from differential privacy and their answers are $(\eps,
\delta)$-DP with $\eps =
\Theta(1)$, our work guarantees privacy for arbitrary
$\eps$ with the additional assumption of uniform concentration.

\paragraph{Empirical risk minimization (Section~\ref{sec:erm})}
An influential line of papers studies ERM under item-level privacy
constraints~\citep{chaudhuri2011differentially, kifer2012private,
  bassily2014private}. Importantly, these papers assume \emph{arbitrary} data,
i.e., not necessarily samples from users' distributions. The exact analog of ERM
in the user-level setting is consequently less interesting as, for $n$ data
points $\crl{z_1, \ldots, z_n}$, in the worst case, each user $u\in\brk{n}$
contributes $m$ copies of $z_u$ and the problem reduces to the item-level
setting. Instead, we consider the (related) problem of ERM when users 
contribute
points sampled i.i.d.
Assuming some regularity (\ref{ass:smoothness} and~\ref{ass:subG}), 
we develop and analyze algorithms for ERM under user-level
DP constraints for convex, strongly-convex, and non-convex losses
(Theorem~\ref{thm:erm}). \arxiv{We show in Theorem~\ref{thm:erm} that the cost
  of privacy decreases as $O(1/\sqrt{m})$ for the convex and non-convex cases
  and $O(1/m)$ for the strongly convex case as user contribution increases.
}%

\paragraph{Optimal stochastic convex optimization
  (Section~\ref{sec:sco})}
Under item-level DP (or equivalently, user-level DP with $m = 1$), a sequence of
work~\citep{chaudhuri2011differentially, bassily2014private, bassily2019private,
  bassily2020stability, feldman2020private} establishes the constrained minimax
risk as
$\tilde{\Theta} (\arxiv{\normgrad \normpara(}1/\sqrt{n} + \sqrt{d}
/(n\eps)\arxiv{)})$\arxiv{ when the loss is $G$-Lipschitz and the parameter
  space has diameter less than $R$}. In this paper, with the additional
assumptions that the losses are individually smooth\footnote{We note that the
  results only require $\tilde{O}(n^{3/2})$-smooth losses. For large
  $n$---keeping all other problem parameters fixed---this is a very weak
  assumption. More precisely, when $n > \text{poly}\prn{d, m, 1/\eps}$, our
  algorithm on a smoothed version $\tilde{\f}$ of $\f$ (e.g., using the Moreau
  envelope~\citep{GuzmanNe15}) yields optimal rates for non-smooth
  losses. Whether the smoothness assumption can be removed altogether is an open
  question.}  \arxiv{and the stochastic gradients are $\sigma^2$-sub-Gaussian,
  we prove an upper bound %
of
$\tilde{O} (\normpara\sqrt{\normgrad \Gbar }/\sqrt{mn} + \normpara\Gsig
\sqrt{d}/n\sqrt{m} \eps)$ and a lower bound of
$\Omega (\normpara \Gbar /\sqrt{mn} + R \Gbar \sqrt{d}/n\sqrt{m} \eps)$ on the
population risk, where $\Gsig = \sigma\sqrt{d}$ and
$\Gbar = \min \{ \normgrad, \Gsig\}$. We present precise statements in
Theorems~\ref{thm:dp_sco_upper} and~\ref{thm:dp_sco_lower}. When
$G = \Omega(\sigma\sqrt{d})$, the privacy rates match and when
$G = O(\sigma\sqrt{d})$, the statistical rates match (in both cases up to
logarithmic factors). We leave closing the gap outside of this regime to future
work.}  \notarxiv{and the gradients are sub-Gaussian random vectors, we prove
matching upper (Theorem~\ref{thm:dp_sco_upper}) and lower bounds
(Theorem~\ref{thm:dp_sco_lower}) of order
$\tilde{\Theta}(1/\sqrt{nm} + \sqrt{d} / (n\sqrt{m}\eps))$ in a regime we make
precise. We leave closing the gap outside of this regime to future work.}

\arxiv{
\paragraph{Function classes with finite metric entropy under pure DP
  (\arxiv{Section}\notarxiv{Appendix}~\ref{sec:warm-up})}
Our previous results only hold for approximate user-level DP. Turning to pure
DP, we consider function classes with bounded range and finite metric
entropy. We provide an estimator combining our mean estimation techniques with
the private selection mechanism of \cite{liu2019private}. For a finite class of
size $K$, we achieve an excess risk of
$\tilde{O}(\sqrt{(\log K)/mn} + (\log K)^{3/2}/n\sqrt{m}\eps)$. We further prove
a lower bound of $\Omega(\sqrt{(\log K)/mn} + \log K/n\sqrt{m}\eps)$. For SCO
with $G$-Lipschitz gradients and $\ell_\infty$-bounded domain, a covering number
argument implies an excess risk of
$\tilde{O}(\normgrad \normpara \sqrt{d/mn} + d^{3/2}/n\sqrt{m}\eps)$ under pure
user-level DP constraints. While the statistical rate is optimal, we observe a
gap of order $\sqrt{d}$ for the privacy cost compared to the lower bound, which
we leave as future work.}

\paragraph{Limit of learning with a fixed number of users
  (Appendix~\ref{sec:limit})} Finally, we resolve a conjecture
of~\cite{amin2019bounding} and prove that with a fixed number of users, even in
the limit $m\to\infty$ (i.e., each user has an infinite number of samples), we cannot
reach zero error. In particular, we prove that for all the learning tasks we
consider, the risk under user-level privacy constraints is at least
$\Omega(e^{- \eps n})$ regardless of $m$.  Note that this does not contradict
the results above since they require $n = \Omega((\log m)/\eps)$.

\notarxiv{Finally, we provide results in Appendix~\ref{sec:warm-up}
  for learning under \emph{pure} user-level DP for function classes with finite
  metric entropy. We apply these to SCO with $\ell_\infty$ constraints
  (Remark~\ref{rm:l_infity}) and achieve (near)-optimal rates.}
\section{Preliminaries}

\paragraph{Notation.} Throughout this work, $d$ denotes the dimension, $n$ the
number of users%
, and $m$ the number of samples per user. Generically, $\sigma$ will denote the
sub-Gaussian parameter, $\tau$ the concentration radius, $\nu$ the variance of a
random vector and $P$ a data distribution. We denote the optimization variable
with $\theta\in\Theta\subset\R^d$, use $\z$ (or $\zz$ when random) to denote the
data sample supported on a space $\ZZ$, and $\f\colon \Theta\times\ZZ\to\R$ for
the loss function. Gradients (denoted $\nabla$) are always taken with respect to
the optimization variable $\theta$. For a convex set $\mc{C}$, $\Pi_\mc{C}$
denotes the euclidean projection on $\mc{C}$, i.e.\
$\Pi_\mc{C}(y) \defeq \argmin_{z\in\mc{C}}\norms{y-z}_2$. We use $\msf{A}$ to
refer to (possibly random) private mechanisms and $X^n$ as a shorthand for the
dataset $(X_1, \ldots, X_n)$. For two distributions $P$ and $Q$, we denote by
$\dtv{P}{Q}$ their total variation distance and $\dkl{P}{Q}$ their
Kullback-Leibler divergence. For a random vector $X \sim P$ supported on 
$\R^d$, we use $\var(P)$ or $\var(X)$ to denote $\EE\left[\|X - 
\EE[X]\|_2^2\right]$, which is equal to the trace of the covariance matrix of 
$X$.

Next, we consider differential privacy in the most general way, which only
requires specifying a dataset space $\mathbb{S}$ and a distance $\mathrm{d}$ on
$\mathbb{S}$.
\begin{definition}[Differential Privacy] \label{def:user_dp} Let $\eps, \delta 
\ge 0$. Let
  $\A\colon \mathbb{S}\to\Theta$ be a (potentially randomized) 
mechanism. We say
  that $\A$ is $(\eps, \delta)$-DP with respect to $\mrm{d}$ if for any
  measurable subset $O \subset \Theta$ and all $S, S'\in\mathbb{S}$ 
satisfying
  $\mrm{d}(S, S') \le 1$,
\begin{equation}
\probof{\A(S) \in O} \le e^\eps\probof{\A(S') \in O} + \delta.
\end{equation}
If $\delta=0$, we refer to this guarantee as pure differential privacy.
\end{definition}
For a data space $\ZZ$, choosing $\mathbb{S} = \ZZ^n$ and
$\mrm{d}(S, S') = \dham(S, S') = \sum_{i = 1}^n \indic{z_i \neq z'_i}$ 
recovers
the canonical setting considered in most of the literature---we refer to 
this as
\emph{item-level} differential privacy. When we wish to guarantee privacy 
for \emph{users} rather than
individual samples, we instead assume a \emph{structured} dataset
into which each of $n$ users contributes $m > 1$ samples. This 
corresponds to
$\mathbb{S} = \prn{\ZZ^m}^n$ such that for $\cS \in \mathbb{S}$, we 
have
\begin{equation*}
\cS = \prn{S_1, \ldots, S_n}, \mbox{~where~} S_u = \crl*{\zu_1, \ldots, 
\zu_m}%
\mbox{~and~}
\mrm{d}_{\msf{user}}(\cS, \cS') \defeq \sum_{u = 1}^n
\indic{S_u \neq S'_u},%
\end{equation*}
which means that, in this setting, two datasets are neighboring if at most 
one
of the user's contributions differ. We henceforth refer to this setting as
\emph{user-level} differential privacy.

\paragraph{Distributional assumptions.} In the case of user-level privacy with
$n$ users each providing $m$ samples, we assume existence of a collection of
distributions $\crl{P_u}_{u\in\brk{n}}$ over $\mc{Z}$. One then observes the
following user-level dataset\footnote{For simplicity, we assume that
  $\abs{S_u} = m$ but our guarantees directly extend to the setting where users
  have different number of samples with $m$ replaced by
  $\mathsf{median}(m_1, \ldots, m_n)$ using techniques
  from~\cite{LiuSuYuKuRi20}. We leave eliciting the optimal rates in settings
  when $m_u$ is an arbitrary random variable to future work.}
\begin{equation}\label{eq:sampling}
  \mc{S} = \prn*{S_1, \ldots, S_n} \mbox{~~where~~} S_u \simiid P_u.
\end{equation}
In this paper, we consider the \emph{limited heterogeneity} setting, i.e.\ when
the users have related distributions. This setting is more reflective of
practice, especially in light of growing interest towards federated learning
applications~\cite{kairouz2019advances, WoodworthPaSr20}. 
\begin{assumption}[Limited heterogeneity setting]\label{ass:heterogeneous} There
  exists a distribution $P_0$  over $\mc{Z}$ such that all the user 
  distributions
  are close to $P_0$ in total variation distance, i.e.\
  \begin{equation*}
    \max_{u\in\brk{n}} \dtv{P_u}{P_0} \le \hetero,
  \end{equation*}
  where $\hetero \ge 0$ quantifies the level of heterogeneity. Note that
  $\hetero = 0$ corresponds to assumption~\ref{ass:homogeneous}.
\end{assumption}

Note that our TV-based definition is natural in this setting as it is closely
related to the notion of \emph{discrepancy} (or \emph{$d_A$ distance}) which
plays a key role in domain adaption scenarios
\citep{MansourMohriRostamizadeh2009, BenDavidBlitzerCrammerPereira2007}. Lower
bound results have been given in terms of the discrepancy measure (see
\citep{Ben-DavidLuLuuPal2010}), which further justify the adoption of this
definition in the presence of multiple distributions.

In the case that $\hetero=0$, \ref{ass:heterogeneous}~reduces to the standard
\emph{homogeneous setting}. Many fundamental papers choose this setting when
explicating minimax rates under constraints (e.g.\ in distributed optimization
and federated learning~\cite{WoodworthWaSmMcSr18} or under communication
constraints~\cite{DuchiJoWaZh14, BravermanGaMaNgWo16}).
\begin{assumption}[Homogeneous setting]\label{ass:homogeneous} The distributions
  of individual users are equal, meaning there exists $P_0$ such that for all
  $u\in\brk{n}$, $P_u = P_0$.
\end{assumption}

In this paper, we develop techniques and provide matching upper and lower bounds
for solving learning tasks in the homogeneous setting. In
Appendix~\ref{sec:extension}, we prove that our techniques naturally apply to
the heterogeneous setting in a black-box fashion,
and for all considered problems provide meaningful 
guarantees under Assumption~\ref{ass:heterogeneous}. Moreover, the 
algorithm achieves almost optimal 
rate whenever $\hetero$
is (polynomially) small. See the detailed statement in 
Theorem~\ref{thm:reduction}.%

\subsection{ERM and stochastic convex optimization}

\paragraph{Assumptions on the loss.} Throughout this work, we assume that the
parameter space $\Theta$ is closed, convex, and satisfies
$\norm{\theta - \vartheta}_2 \le R$ for all $\theta, \vartheta \in\Theta$. We
also assume that the loss $\f\colon \Theta \times \ZZ \to \R$ is $G$-Lipschitz
w.r.t. the $\ell_2$-norm\footnote{It is straightforward to develop analogs of
  the results of Sections~\ref{sec:sq} and \ref{sec:erm} for arbitrary norms,
  but we restrict our attention to the $\ell_2$ norm in this work for
  clarity.}, meaning that for all $\z \in \ZZ$, for all $\theta\in\Theta$,
$\norm{\nabla \f(\theta;\z)}_2 \le G$. We further consider the following
assumptions.

\begin{assumption}\label{ass:smoothness} The function $\f(\cdot;\z)$ is
  $\smooth$-smooth.
  In other words, the gradient $\nabla \f(\theta;\z)$
  is $\smooth$-Lipschitz in the variable $\theta$ for all $\z\in\ZZ$.
\end{assumption}
\begin{assumption}\label{ass:subG} The random vector $\nabla \f(\theta;Z)$ is
  $\sigma^2$-sub-Gaussian for all $\theta \in \Theta$ and $Z \sim
  P_0$. Equivalently, for all $v \in \R^d$,
  $\tri{v, \nabla \f(\theta;\zz)}$ is a $\sigma^2$-sub-Gaussian random variable, i.e.,
  \[
    \E\left[ \exp\prn*{\tri*{v, \nabla \f(\theta;\zz) -
          \E[\nabla \f(\theta;\zz)]}}\right] \le \exp\prn*{\norms{v}_2^2
        \sigma^2/2}.
  \]
\end{assumption}
In this work, our rates often depend on the sub-Gaussianity and Lipschitz
parameters $\sigma$ and $G$, and thus
we define the shorthands $\Gsig \defeq \sigma\sqrt{d}$ and
$\Gbar \defeq \min\crl{G, \Gsig}$. %
Intuitively, the $G$-Lipschitzness assumption bounds the gradient in a ball
around $0$ (independently of $\theta$), while sub-Gaussianity implies that, for
each $\theta$, $\nabla \f(\theta;Z)$ likely lies in
$\mathbb{B}_2^d(\nabla \ff(\theta;P_0), \Gsig)$. Generically, there is no ordering
between $G$ and $\Gsig$: for linear loss
$\f(\theta;z) = \tri{\theta, z}$, depending on $P_0$, it can hold that
$G \ll \Gsig$ (e.g., $P_0 = \msf{Unif}\crl{-v, v}$ for $v\in\R^d$), $\Gsig \ll G$
(e.g., $P_0$ is $\msf{N}(\mu, \sigma^2I_d)$ truncated in a ball around $\mu$, with
$\norm{\mu}_2 \gg \sigma\sqrt{d}$) or $G \approx \Gsig$ (e.g.,
$P_0 = \msf{Unif}\crl{-1, +1}^d$).

We introduce the tasks we consider in this work, namely empirical risk
minimization (ERM) and stochastic convex optimization (SCO). For a collection of
samples from $n$ users $\cS = \prn{S_1, \ldots, S_n}$, where each
$S_u = \crl{\zu_1, \ldots, \zu_m} \in \ZZ^m$, we define the empirical 
risk objectives
\begin{equation}\label{eq:erm-user}
  \ff(\theta;S_u) \defeq \frac{1}{m} \sum_{i =1}^m \f(\theta;\zu_i) \mbox{~~and~~}
  \ff(\theta;\cS) \defeq \frac{1}{n} \sum_{u = 1}^n
  \ff(\theta;S_u) = \frac{1}{mn}\sum_{u = 1}^n \sum_{i = 1}^m 
  \f\prn{\theta;\zu_i}.
\end{equation}
In the user-level setting we wish to minimize $\ff(\theta;\cS)$ under user-level
privacy constraints. Going beyond the empirical risk, we also solve
SCO~\citep{ShalevShwartz2009StochasticCO}, i.e.\ minimizing a convex population
objective when provided with samples from each users' distributions. In
the user-level setting, for a convex loss $\f$ and a convex constraint set
$\Theta$, we observe
$\cS = (S_1, \ldots, S_n) \sim \otimes_{u \in \brk{n}}(P_u)^m$ and wish to
\begin{equation}\label{eq:pop_hetero}
  \minimize_{\theta\in\Theta} \, \frac{1}{n}\sum_{u\in\brk{n}} \ff(\theta;P_u) \defeq
  \frac{1}{n} \sum_{u\in\brk{n}} \E_{P_u}\brk{\f(\theta;\zz)}.%
\end{equation}

In the homogeneous case (Assumption~\ref{ass:homogeneous}), this reduces to the
classic SCO setting:
\begin{equation}\label{eq:pop}
\minimize_{\theta\in\Theta} \, \ff(\theta;P_0) \defeq
 \E_{P_0}\brk{\f(\theta;\zz)}.%
\end{equation}
%
%
%
%
%
%
%
%
%
%
%
%
%
%
%
%
%
%
%
%
%
%

\subsection{Uniform concentration of queries}
Let $\phi:  \cZ \rightarrow \RR^d$  be a $d$-dimensional query function. 
We define 
concentration of 
random 
variables and
uniform concentration of multiple queries as follows.

\begin{definition}\label{def:concentrated} A (random) sample $X^n$ supported on
	$\brk{-B, B}^d$
	is $(\tau, \gamma)$-concentrated (and we call $\tau$ the ``concentration radius'') if there exists $x_0 \in [-B, B]^d$ such that with
        probability at least $1 -
        \gamma$, %
	\[
	\max_{i \in [n]} \norm{X_i - x_0}_2 \le \tau.
	\]
\end{definition}

\begin{definition}[Uniform concentration of vector
  queries] \label{def:uniform_concentration} Let
  $\cQ_B^d = \{ \phi\colon \cZ \rightarrow [-B, B]^d\}$ be a family of queries
  with bounded range. For $Z^n = (Z_1, \ldots, Z_n) \simiid P$, we say that
  $(Z^n, \cQ_B^d)$ is $(\tau, \gamma)$-uniformly-concentrated if with probability at least $1-\gamma$, we have
	\[
	\max_{i \in [n]} \sup_{\phi \in \cQ_B^d} \, \norm[\Big]{\phi(Z_i) - \EE_{Z
				\sim P}[\phi(Z)]}_2 \leq \tau.
	\]
\end{definition}
In this work, we will often consider $\sigma^2$-sub-Gaussian random variables
(or vectors), which are concentrated according to
Definition~\ref{def:concentrated}. For example, if $X^n$ is drawn i.i.d. from a
$\sigma^2$-sub-Gaussian random vector supported on $\brk{-B, B}^d$, then it is
$(\sigma \sqrt{d \log(2 n/\gamma)}, \gamma)$-concentrated around its mean (see,
e.g.,~\cite{Vershynin19}).
Finally, we define a distance between random variables (and estimators).
\begin{definition}[$\beta$-close Random Variables]
  For any two random variables $X_1\sim P_1$ and $X_2\sim P_2$, we say $X_1$ and
  $X_2$ are $\beta$-close, if $\dtv{P_1}{P_2} \le \beta$. %
  We use the notation
  $X_1 \sim_{\beta} X_2$ if $X_1$ and $X_2$ are $\beta$-close.
\end{definition}
$\beta$-closeness is useful as, in many of our results, the private estimator
we propose returns a simple unbiased estimate with high
probability %
and is bounded otherwise. Thus, it suffices to do the analysis in the ``nice''
case and crudely bound the error otherwise.
\section{High Dimensional Mean Estimation and Uniformly Concentrated 
Queries} 
\label{sec:sq}
In this section, we present a private mean estimator with
privacy cost proportional to the concentration radius. Using these techniques,
we show 
that, under uniform concentration, we answer adaptively-chosen queries with
privacy cost proportional to the concentration radius instead of the whole
range. Our theorems guarantee that the estimator is $\beta$-close (with $\beta$
exponentially small in $n$) to a simple unbiased estimator with small
noise. \arxiv{This formulation makes the analysis much simpler when applying the
technique to gradient methods. }We further show how to directly translate these
results into bounds on the estimator error, which we demonstrate by
providing tight bounds on estimating the mean of $\ell_2$-bounded
random vectors under user-level DP constraints 
(Corollary~\ref{coro:bounded}).

Given i.i.d samples $X^n$ from a distribution $P$ supported on $\RR^d$ with mean
$\mu$, the goal of mean estimation is to design a private estimator that
minimizes the $\expectation{\norm{\alg(X^n) - \mu}_2^2}$. We focus on distributions 
with bounded supprot $[-B, B]^d$. However, our algorithm also generalize to the case 
when the mean is guaranteed to be in $[-B, B]^d$. In
the user-level setting (in the homogeneous case), one observes a dataset $\cS$
sampled as in~\eqref{eq:sampling} and wishes to minimize
$\E\brk{\norm{\alg(\cS)- \E P_0}_2^2}$ under user-level privacy constraints.
       %
%
%
%
%
%
%
%
%
%
%
%
%
%
%
%
%
%
%
%
%
%
%
%
%
%
%
%
%
%
%
%
%
%
%
%
%
%
%
%
%
%
%
%
%
%
%
%
%
%
%
%
%
%
%
%
%
%
%
We first focus on the scalar case. %

\begin{algorithm}[h]
	\caption{\textbf{WinsorizedMean1D($X^n, \eps, \tau, B$)}: Winsorized 
		Mean Estimator (WME)}
	\begin{algorithmic}[1] \label{alg:winsorized} \REQUIRE
		$X^n := (X_1, X_2, ..., X_n) \in [-B, B]^n$, $\tau:$ concentration
		radius, privacy parameter $\eps > 0$.
		\STATE $[a, b] = \textbf{PrivateRange}(X^n, \eps/2, \tau, B)$ with
		$|b - a| = 4\tau$. \hfill
		\COMMENT{Algorithm~\ref{alg:priv_range}
			in
				Appendix~\ref{sec:private-range}.
			}
		\STATE Sample
		$\xi \sim \lap{0, \frac{8\tau}{\eps n}}$ and return
		\[
		\bar{\mu} = \frac{1}{n} \sum_{i = 1}^n \Pi_{[a, b]}(X_i) + \xi,
		\]
		where $\Pi_{[a, b]}(x) = \max \{ a, \min \{x , b\}\}$.
	\end{algorithmic}
\end{algorithm}

\paragraph{Mean estimation in one dimension.} The algorithm uses a two-stage
procedure, similar in spirit to those of \cite{smith2011privacy},
\cite{karwa2017finite}, and \cite{kamath2020private}. In the first stage of this
procedure, we use the approximate median estimation in~\cite{feldman2017median},
detailed in Algorithm~\ref{alg:priv_range}
in Appendix~\ref{sec:private-range}, 
to privately estimate a crude interval in
which the means lie, with accuracy $\Theta(\tau)$. The second stage clips the
mean around this interval, reducing the sensitivity from $O(B)$ to $O(\tau)$,
and adds the appropriate Laplace noise. With high probability, we can recover
the guarantee of the Laplace mechanism with smaller sensitivity since the
samples are concentrated in a radius $\tau$. We present the formal guarantees of
Algorithm~\ref{alg:winsorized} in Theorem~\ref{thm:winsorized} and defer its
proof to Appendix~\ref{proof:thm-winsorized}.

\begin{restatable}{theorem}{restateWinsorized}\label{thm:winsorized}
  Let $X^n$ be a dataset supported on $[-B, B]$. The output of
  Algorithm~\ref{alg:winsorized}, denoted by $\alg(X^n)$, is
  $\eps$-DP. Furthermore, if $X^n$ is $(\tau, \gamma)$-concentrated, it holds
  that
  \[
    \alg(X^n) \sim_{\beta} \frac{1}{n}\sum_{i = 1}^n X_i + \lap{\frac{8\tau}{n
        \eps}},
  \]
  where
  $\beta = \min \left\{1, \gamma + \frac{B}{\tau} \exp \Paren{-\frac{n\eps}{8}} \right\}$.
  Moreover, Algorithm~\ref{alg:winsorized} runs in time
  $\tilde{O}(n + \log(B/\tau))$.
\end{restatable}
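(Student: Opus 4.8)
The plan is to establish the three claims — $\eps$-DP, $\beta$-closeness, and the runtime bound — separately, treating the guarantee of \textbf{PrivateRange} (Algorithm~\ref{alg:priv_range}) as a black box: it is $(\eps/2)$-DP and, whenever the input happens to lie in some interval of width $2\tau$, with probability at least $1 - \frac{B}{\tau}\exp(-n\eps/8)$ it returns an interval $[a,b]$ of width $4\tau$ containing all of $X_1,\dots,X_n$ — the $B/\tau$ factor being the number of width-$\tau$ "bins" in $[-B,B]$ and the $\exp(-n\eps/8)$ the failure probability of the underlying approximate-median routine of~\cite{feldman2017median} run with budget $\eps/2$. Given this, privacy is immediate: the first stage is $(\eps/2)$-DP; for any fixed interval $[a,b]$ with $|b-a|=4\tau$, changing one coordinate of $X^n$ changes $\frac1n\sum_{i=1}^n \Pi_{[a,b]}(X_i)$ by at most $\frac{|b-a|}{n}=\frac{4\tau}{n}$, so adding $\lap{8\tau/(\eps n)}$ noise makes the second stage $\frac{4\tau/n}{8\tau/(\eps n)}=\eps/2$-DP for every fixed $[a,b]$; since $[a,b]$ is itself the output of the first stage and $\bar\mu$ is a post-processing of the pair $([a,b],X^n)$, adaptive composition for pure DP yields $\eps$-DP overall.

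For the $\beta$-closeness claim I would use a coupling argument. Couple $\alg(X^n)$ and $Y \defeq \frac1n\sum_{i=1}^n X_i + \lap{8\tau/(n\eps)}$ by running both on the same dataset $X^n$ with the same Laplace draw $\xi$, generating $[a,b]$ internally. On the event $\mc{E} \defeq \{X_i \in [a,b] \text{ for all } i \in [n]\}$ one has $\Pi_{[a,b]}(X_i) = X_i$ for every $i$, hence $\bar\mu = \frac1n\sum_i X_i + \xi = Y$ under the coupling, so $\dtv{\alg(X^n)}{Y} \le \probof{\mc{E}^c}$. To bound $\probof{\mc{E}^c}$: by $(\tau,\gamma)$-concentration, except on an event of probability at most $\gamma$ there is a point $x_0$ with all $X_i$ in the width-$2\tau$ interval $[x_0-\tau, x_0+\tau]$; conditioned on that, the PrivateRange guarantee gives that except with probability at most $\frac{B}{\tau}\exp(-n\eps/8)$ the returned width-$4\tau$ interval $[a,b]$ contains $[x_0-\tau,x_0+\tau]$ and hence every $X_i$, i.e. $\mc{E}$ holds. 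A union bound yields $\probof{\mc{E}^c} \le \gamma + \frac{B}{\tau}\exp(-n\eps/8)$, and since any total variation distance is at most $1$, $\dtv{\alg(X^n)}{Y} \le \min\{1, \gamma + \frac{B}{\tau}\exp(-n\eps/8)\} = \beta$.

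Finally, the runtime is $\tilde O(n + \log(B/\tau))$ because PrivateRange runs in $\tilde O(n + \log(B/\tau))$ (sorting the $n$ points, plus work logarithmic in the $B/\tau$ bins) while the second stage computes $n$ clipped values, their mean, and one Laplace sample in $O(n)$ time. Most of this is bookkeeping once PrivateRange is in hand; the one place requiring genuine care is making the coupling legitimate — the two distributions must be coupled through the shared data and shared noise while $[a,b]$ is sampled inside $\alg$ — together with matching constants so that PrivateRange's failure probability comes out exactly as $\frac{B}{\tau}\exp(-n\eps/8)$, which is really where the substance lives and is handled in the analysis of Algorithm~\ref{alg:priv_range} in Appendix~\ref{sec:private-range}.
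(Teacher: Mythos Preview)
Your proposal is correct and follows essentially the same approach as the paper's proof: privacy via basic composition of the $\eps/2$-DP exponential mechanism in \textbf{PrivateRange} and the $\eps/2$-DP Laplace mechanism on the clipped mean; utility by showing that with probability at least $1 - \gamma - \frac{B}{\tau}e^{-n\eps/8}$ no $X_i$ is clipped (the paper phrases this as ``$X_i \in [\hat\mu - 2\tau, \hat\mu + 2\tau]$ for all $i$'' and invokes the $(1/4,3/4)$-quantile guarantee from~\cite{feldman2017median}, which is exactly the black-box guarantee you assume); and runtime by bookkeeping. Your explicit coupling through the shared Laplace draw is a slightly more formal way of reading off the TV bound, but the substance is identical.
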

Compared to~\citep{karwa2017finite, kamath2019privately, 
kamath2020private}, our algorithm runs in time $\tilde{O}(n + \log(B/\tau))$
instead of $\tilde{O}(n + B/\tau)$ owing to the approximate median estimation
algorithm in~\cite{feldman2017median}, which is faster when $\tau \ll B$. 
\paragraph{Mean estimation in arbitrary dimension.} In the general
$d$-dimensional case, if $X^n$ is concentrated in $\ell_\infty$-norm, one
simply applies Algorithm~\ref{alg:winsorized} to each dimension. However, when
$X^n$ is concentrated in $\ell_2$-norm, naively upper bounding
$\ell_\infty$-norm by the $\ell_2$-norm will incur a superfluous $\sqrt{d}$
factor: if $\norm{v}_2 \le \rho$, each $\abs{v_j}$ is possibly as large as
$\rho$. To remedy this issue, we use the random rotation trick
in~\cite{ailon2006approximate, pmlr-v70-suresh17a}.  This guarantees that all
coordinates have roughly the same range: for $v \in \R^d$, with high
probability, $\norm{Rv}_\infty \leq \Olog(\norm{v}_2 / \sqrt{d})$, where $R$ is
the random rotation. We present this procedure in
Algorithm~\ref{alg:winsorized_highd} and its performance in
Theorem~\ref{thm:winsorized_highd}.

\begin{algorithm}[h]
	\caption{\textbf{WinsorizedMeanHighD($X^n, \eps, \delta, \tau, B, 
	\gamma$)}:
		WME - High Dimension}
	\begin{algorithmic}[1]
          \REQUIRE $X^n := (X_1, X_2, ..., X_n), X_i \in [-B, B]^d$,
          $\tau, \gamma$: concentration radius and probability, privacy
          parameter $\eps, \delta> 0$.
		\STATE Let $D = \mathsf{Diag}(\omega)$ where $\omega$ is sampled 
		uniformly
		from $\crl{\pm 1}^{d}$.  \STATE\label{line:random-rot} Set $U = 
		d^{-1/2}\whmat D$,
		where $\whmat$ is a $d$-dimensional Hadamard matrix. For all $i \in [n]$, 
		compute
		\vspace{-8pt}
		\[ Y_i = U X_i.
		\] 
		\vspace{-15pt} \STATE Let
                $\eps' = \frac{\eps}{\sqrt{8d\log(1/\delta)}}, \tau' = 10 \tau
                \sqrt{\frac{\log(dn/\gamma)}{d}}$. For $j \in [d]$, compute
                \vspace{-10pt}
		\[\bar{Y}(j) = \textbf{WinsorizedMean1D}\prn*{\crl{Y_i(j)}_{ i
				\in[n]}, \eps', \tau', \sqrt{d}B}.
		\] 
		\vspace{-15pt}
		\RETURN $\bar{X} = U^{-1} \bar{Y}$.
	\end{algorithmic}
	\label{alg:winsorized_highd}
      \end{algorithm}
      
\begin{restatable}{theorem}{restateWinsorizedhighd}\label{thm:winsorized_highd}
  Let
  $\alg(X^n) = \textbf{WinsorizedMeanHighD} (X^n, \eps, \delta, \tau, B,
  \gamma)$ be the output of Algorithm~\ref{alg:winsorized_highd}.  $\A(X^n)$ is
  $(\eps, \delta)$-DP. Furthermore, if $X^n$ is $(\tau, \gamma)$-concentrated in
  $\ell_2$-norm, there exists an estimator $\alg'(X^n)$ such that
  $\alg(X^n) \sim_{\beta} \alg'(X^n)$ and
  \begin{equation}
    \E\brk*{\alg'(X^n) | X^n} = \frac1n \sum_{i = 1}^n X_i \mbox{~~and~~}
    \Var\prn*{\alg'(X^n) | X^n} \le c_0\frac{d \tau^2 \log(dn/\alpha)%
    \log(1/\delta)}{n^2 \eps^2},
  \end{equation}
  \vspace{-8pt}
  where $c_0 = 102,400$ and
  $\beta = \min\crl*{1, 2\gamma + \frac{d^2 B \sqrt{\log(dn/\gamma)}}{\tau}
  \exp\prn*{-\tfrac{n\eps}{24\sqrt{d \log(1/\delta)}}}}$.
\end{restatable}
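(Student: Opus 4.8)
The plan is to prove the privacy claim and the $\beta$-closeness/moment claims separately, with essentially all the work in the second. \textbf{Privacy.} Lines~1--2 of Algorithm~\ref{alg:winsorized_highd} use only the internal sign vector $\omega$ and never look at the data, so for each fixed $\omega$ the map $X^n\mapsto Y^n=UX^n$ is data-independent preprocessing, and $U$ is orthogonal ($U^\top U = d^{-1}D\whmat^\top\whmat D = D^2 = I$). Changing one user changes all $d$ of the one-dimensional inputs $\{Y_i(j)\}_{i\in[n]}$, so releasing $(\bar Y(1),\dots,\bar Y(d))$ is the $d$-fold composition of the $\eps'$-DP mechanisms from Theorem~\ref{thm:winsorized}. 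Standard advanced composition then makes this $(\eps,\delta)$-DP: with $\eps'=\eps/\sqrt{8d\log(1/\delta)}$ the leading term $\eps'\sqrt{2d\log(1/\delta)}$ is exactly $\eps/2$ and the quadratic correction consumes the rest of the budget; post-processing by $U^{-1}$ preserves this. Since the argument is valid for every $\omega$, it holds unconditionally.

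\textbf{From $\ell_2$-concentration to coordinatewise $\ell_\infty$-concentration.} Let $x_0$ be the center from Definition~\ref{def:concentrated} and $\mc E_1=\{\max_i\|X_i-x_0\|_2\le\tau\}$, so $\P(\mc E_1)\ge 1-\gamma$. Conditioning on $X^n$ with $\mc E_1$ in force, each $v_i:=X_i-x_0$ is a fixed vector of $\ell_2$-norm at most $\tau$, and the $j$-th coordinate of $Uv_i$ equals $d^{-1/2}\sum_k\whmat_{jk}\omega_k v_i(k)$, a sum of independent mean-zero terms bounded by $d^{-1/2}|v_i(k)|$ since each $\whmat_{jk}\omega_k$ is Rademacher. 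Hoeffding's inequality gives $\P(|(Uv_i)(j)|>s)\le 2\exp(-s^2 d/(2\|v_i\|_2^2))$, so a union bound over the $nd$ pairs $(i,j)$ at $s=\tau\sqrt{2\log(2nd/\gamma)/d}\le\tau'$ defines an event $\mc E_2$ (measurable w.r.t.\ $(X^n,\omega)$) with $\P(\mc E_1\cap\mc E_2)\ge 1-2\gamma$ on which $\max_i|Y_i(j)-y_0(j)|\le\tau'$ for every $j$, where $y_0=Ux_0$. Thus on $\mc E_1\cap\mc E_2$ the coordinate-$j$ data is deterministically $(\tau',0)$-concentrated in the sense of Definition~\ref{def:concentrated}, and it lies in $[-\sqrt dB,\sqrt dB]$ because $\|Y_i\|_\infty\le\|Y_i\|_2=\|X_i\|_2\le\sqrt dB$.

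\textbf{Per-coordinate estimator and assembly.} Define $\alg'(X^n):=\frac1n\sum_{i=1}^n X_i+U^{-1}\xi$, where given $\omega$ the coordinates of $\xi=(\xi_1,\dots,\xi_d)$ are drawn i.i.d.\ from $\lap{8\tau'/(n\eps')}$. On $\mc E_1\cap\mc E_2$, Theorem~\ref{thm:winsorized} in coordinate $j$ (radius $\tau'$, failure probability $0$, range $\sqrt dB$, privacy $\eps'$) gives $\bar Y(j)\sim_{\beta'}\frac1n\sum_i Y_i(j)+\xi_j$ with $\beta'=(\sqrt dB/\tau')\exp(-n\eps'/8)$; the $d$ calls being conditionally independent given $(X^n,\omega)$, the sub-additivity of total variation over product distributions and the data-processing inequality yield $\bar X=U^{-1}\bar Y\sim_{d\beta'}\frac1n\sum_i X_i+U^{-1}\xi=\alg'(X^n)$ on that event. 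Marginalizing, $\alg(X^n)\sim_\beta\alg'(X^n)$ with $\beta\le 2\gamma+d\beta'$, and plugging in $\tau'=10\tau\sqrt{\log(dn/\gamma)/d}$ and $\eps'=\eps/\sqrt{8d\log(1/\delta)}$ and bounding crudely matches the stated $\beta$. For the conditional moments, $U^{-1}$ orthogonal and $\E[\xi\mid\omega]=0$ give $\E[\alg'(X^n)\mid X^n]=\frac1n\sum_i X_i$, while $\Var(\alg'(X^n)\mid X^n)=\E[\|U^{-1}\xi\|_2^2]=\E[\|\xi\|_2^2]=2d(8\tau'/(n\eps'))^2$, which after substituting $\tau'$ and $\eps'$ equals $128\cdot 100\cdot 8\cdot d\tau^2\log(dn/\gamma)\log(1/\delta)/(n^2\eps^2)=c_0\,d\tau^2\log(dn/\gamma)\log(1/\delta)/(n^2\eps^2)$ with $c_0=102{,}400$.

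\textbf{Main obstacle.} The delicate step is the $\ell_2$-to-$\ell_\infty$ reduction: the vectors $X_i-x_0$ we must rotate are themselves data-dependent, whereas Hoeffding's bound is a statement about the independent signs $\omega$, so one has to condition on the $\ell_2$-concentration event \emph{first} to freeze the $v_i$, and then check that the coordinatewise radius $\tau\sqrt{2\log(2nd/\gamma)/d}$ really carries the factor $d^{-1/2}$ that cancels the $\sqrt d$ one would lose by bounding $\|\cdot\|_\infty$ by $\|\cdot\|_2$---this cancellation is the entire point of the random rotation. The rest (invoking Theorem~\ref{thm:winsorized} per coordinate with $\gamma=0$, the union bound over the $d$ coordinates, and tracking constants through $\beta$ and the variance) is routine bookkeeping.
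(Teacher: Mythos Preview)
Your proof is correct and follows essentially the same approach as the paper: random rotation to convert $\ell_2$-concentration into coordinatewise $\ell_\infty$-concentration (the paper states this as a separate lemma proved via McDiarmid, you use Hoeffding directly---equivalent here), then Theorem~\ref{thm:winsorized} per coordinate with $(\tau',0)$-concentration on the good event, subadditivity of total variation over the $d$ conditionally independent coordinates, and orthogonality of $U$ to read off the conditional mean and variance. Your bookkeeping of the constants and the crude relaxation to match the stated $\beta$ are also in line with the paper.
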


We present the proof of Theorem~\ref{thm:winsorized_highd} in
Appendix~\ref{proof:winsorized_highd}. We are able to transfer both
Theorem~\ref{thm:winsorized} and Theorem~\ref{thm:winsorized_highd} into
finite-sample estimation error bounds for various types of concentrated
distributions and obtain near optimal guarantees (see Appendix~\ref{proof:subg}
for an example in mean estimation of sub-Gaussian distributions). The next
corollary characterizes the risk of mean estimation for distributions supported
on an $\ell_2$-bounded domain with user-level DP guarantees (see
Appendix~\ref{proof:bounded} for the proof).
\begin{corollary}\label{coro:bounded}
  Assume~\ref{ass:homogeneous} holds with $P_0$ supported on
  $\mathbb{B}_2^d(0, B)$ with mean $\mu$. Given $\cS = (S_1, S_2, ..., S_n)$, $|S_u| = m$, consisting
  of $m$ i.i.d.  samples from $P_u$. There exists an $(\eps, \delta)$-user-level
  DP algorithm $\alg(\cS)$ such that, if
  $n \ge (c_1 \sqrt{d\log(1/\delta)}/\eps) \log( m(dn + n^2\eps^2))$ for a
  numerical constant $c_1$, we have\footnote{For precise log factors, see
    Appendix~\ref{proof:bounded}.}
	\[
		\expectation{\norm{\alg(\cS) - \mu}_2^2} = 
		\frac{\Var(P_0)}{mn} + \tilde{O}\Paren{ 
		\frac{dB^2}{mn^2\eps^2}}.
	\]
	Note that $\Var(P_0) \le B^2$ for any $P_0$ supported on
	$\mathbb{B}_2^d(0, B)$. Replacing $\Var(P_0)$ by $B^2$, the bound is 
	minimax optimal up to logarithmic 
	factors. When
        only~\ref{ass:heterogeneous} holds with
        $\hetero \le \poly(d, \tfrac{1}{n}, \tfrac{1}{m}, \tfrac{1}{\eps})$, the
        same error bounds holds (up to constant) for estimating
        $\E_{Z \sim P_u}\brk{Z}$ for any $u\in\brk{n}$.
\end{corollary}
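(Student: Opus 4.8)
The plan is to reduce to Theorem~\ref{thm:winsorized_highd} by letting each user summarize its data by its sample mean. Concretely, set $\bar X_u \defeq \frac1m\sum_{i=1}^m \zu_i \in \mathbb{B}_2^d(0,B)$ for each $u$, and output $\alg(\cS) \defeq \textbf{WinsorizedMeanHighD}\big((\bar X_1,\dots,\bar X_n),\eps,\delta,\tau,B,\gamma\big)$ for a concentration radius $\tau$ and failure probability $\gamma$ chosen below. Changing user $u$'s entire contribution $S_u$ alters only the single vector $\bar X_u$ in the input array, and differential privacy is preserved under (data-independent) pre-processing, so the $(\eps,\delta)$-DP guarantee of Algorithm~\ref{alg:winsorized_highd} with respect to one changed coordinate immediately gives that $\alg$ is $(\eps,\delta)$-user-level DP.

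For utility, I would first bound the $\ell_2$ concentration radius of $(\bar X_1,\dots,\bar X_n)$ around $\mu$. Each $\bar X_u$ is an average of $m$ i.i.d.\ vectors of $\ell_2$-norm at most $B$, so $\E\norm{\bar X_u-\mu}_2^2 = \Var(P_0)/m$, and a bounded-differences (McDiarmid) argument applied to $S_u\mapsto\norm{\bar X_u-\mu}_2$ (changing one sample moves it by at most $2B/m$) yields $\norm{\bar X_u-\mu}_2 \le \sqrt{\Var(P_0)/m} + O\big(B\sqrt{\log(n/\gamma)/m}\big)$ with probability $1-\gamma/n$; a union bound shows $(\bar X_1,\dots,\bar X_n)$ is $(\tau,\gamma)$-concentrated around $\mu$ with $\tau = \Theta\big(B\sqrt{\log(n/\gamma)/m}\big)$. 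Feeding this $\tau$ to Theorem~\ref{thm:winsorized_highd} produces $\alg'$ with $\alg\sim_\beta\alg'$, $\E[\alg'\mid\cS] = \frac1n\sum_u\bar X_u = \frac1{mn}\sum_{u,i}\zu_i$, and conditional variance (trace of covariance) at most $c_0\, d\tau^2\log(dn/\gamma)\log(1/\delta)/(n^2\eps^2)$. Since $\frac1{mn}\sum_{u,i}\zu_i$ averages $mn$ i.i.d.\ draws from $P_0$, the bias--variance decomposition gives $\E\norm{\alg'-\mu}_2^2 \le \Var(P_0)/(mn) + \tilde O\big(dB^2/(mn^2\eps^2)\big)$ after substituting $\tau^2 = \tilde O(B^2/m)$; the optimal-in-$d$ privacy rate here is inherited directly from the high-dimensional variance bound of Theorem~\ref{thm:winsorized_highd}, with no naive coordinate-wise $d$-blowup.

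It then remains to move from $\alg'$ to $\alg$. Since $V(\cdot) = \norm{\cdot-\mu}_2^2$ is unbounded I would not use ``$\beta\sup V$'' directly; instead couple $\alg$ and $\alg'$ to agree with probability $\ge 1-\beta$ and, on the failure event, apply Cauchy--Schwarz using that the output of Algorithm~\ref{alg:winsorized_highd} is a clipped empirical mean (at most $\sqrt d\,B$ per coordinate) plus Laplace noise of scale $\tilde O\big(\tau\sqrt{d\log(1/\delta)}/(n\eps)\big)$, hence has fourth moment polynomial in $d,B,\log(1/\delta)$ and $1/(n\eps)$. This bounds the correction by $\sqrt\beta\cdot\poly(d,B,m,n,1/\eps,\log(1/\delta))$; the hypothesis $n\ge (c_1\sqrt{d\log(1/\delta)}/\eps)\log(m(dn+n^2\eps^2))$ is exactly what forces $\beta$---which up to a $\poly$ prefactor and an additive $\gamma$ decays like $\exp(-n\eps/(24\sqrt{d\log(1/\delta)}))$---to be small enough (after also choosing $\gamma$ polynomially small) that the correction is dominated by $dB^2/(mn^2\eps^2)$. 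Combining the displays gives the stated bound; replacing $\Var(P_0)$ by its worst case $B^2$ and invoking the matching lower bounds---the statistical $\Omega(B^2/(mn))$ term from i.i.d.\ sampling and the privacy $\tilde\Omega(dB^2/(mn^2\eps^2))$ term from a standard user-level packing/fingerprinting argument over $\mathbb{B}_2^d(0,B)$---yields minimax optimality up to logarithmic factors.

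For the heterogeneous case, fix the target user $u$. The triangle inequality for total variation gives $\dtv{P_v}{P_u}\le 2\hetero$ for all $v$, and since $\norm{Z}_2\le B$ almost surely we get $\norm{\E_{P_v}[Z]-\E_{P_u}[Z]}_2\le 2B\,\dtv{P_v}{P_u}\le 4B\hetero$. Hence $(\bar X_1,\dots,\bar X_n)$ is $(\tau+O(B\hetero),\gamma)$-concentrated around $\E_{P_u}[Z]$ and $\frac1n\sum_v\bar X_v$ is biased with respect to $\E_{P_u}[Z]$ by at most $4B\hetero$ in $\ell_2$; rerunning the same analysis, the extra contributions are $O(B^2\hetero^2)$ in squared error plus a $\hetero$-driven inflation of $\tau$, both absorbed into $\Var(P_0)/(mn)+\tilde O(dB^2/(mn^2\eps^2))$ once $\hetero\le\poly(d,1/n,1/m,1/\eps)$ with suitably small exponents. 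I expect the delicate steps to be the two concentration/coupling bookkeeping points---the vector bounded-differences bound on the $\bar X_u$ and, above all, controlling the $\beta$-closeness correction with an unbounded error functional via moment bounds and the stated lower bound on $n$---while the rest is routine substitution.
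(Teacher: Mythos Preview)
Your proposal is correct and follows essentially the same route as the paper: form per-user averages $\bar X_u$, feed them to Algorithm~\ref{alg:winsorized_highd}, invoke Theorem~\ref{thm:winsorized_highd}, and do a bias--variance decomposition on $\alg'$. Two minor differences worth noting: (i) rather than your Cauchy--Schwarz/fourth-moment argument for the $\beta$-closeness correction, the paper simply adds $\beta B^2$---this is justified by first projecting the output onto $\mathbb{B}_2^d(0,B)$, which contains $\mu$ and hence can only decrease the error, so you can save yourself that bookkeeping; (ii) for tightness the paper gives a concrete construction via $\ell_\infty$-truncated Gaussians and reduces user-level to item-level Gaussian mean estimation through sufficiency of the sample mean (Proposition~\ref{prop:item-to-user}), then invokes the lower bound of~\cite{kamath2019privately}, whereas your sketch just gestures at a packing/fingerprinting argument; and for the heterogeneous extension the paper uses the black-box coupling of Theorem~\ref{thm:reduction} rather than your direct bias computation, though both work.
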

Note that algorithms in \cite{kamath2019privately, 
		kamath2020private}, which focus on estimating the mean of 
		$d$-dimensional subGaussian distributions, can also be used to 
		estimate the mean of $\ell_2$-bounded distributions since bounded 
		random variables are also subGaussian. However, applying these 
		algorithms directly will incur a superfluous $d$ factor in the mean 
		square error. We void this using the random rotation trick in 
		Algorithm~\ref{alg:winsorized_highd}.

\arxiv{%
\subsection{Uniform concentration: answering many queries privately} 
\label{sec:uc}

The statistical query framework subsumes many learning algorithms. For example,
we easily express stochastic gradient methods for solving ERM in the
language of SQ algorithms (see beginning of
Section~\ref{sec:erm}). In the next theorem, we show that with a uniform
concentration assumption we can answer a sequence of adaptively chosen
queries with variance---or, equivalently, privacy cost---proportional to the
concentration radius of the queries instead of the full range.

\begin{theorem} \label{thm:sq_highd} If $(Z^n, \cQ_B^d)$ is
	$(\tau, \gamma)$-uniformly concentrated, then for any sequence of
	(possibly adaptively chosen) queries
	$\phi_1, \phi_2, ..., \phi_K \in \cQ_B^d$, there exists an
	$(\eps, \delta)$-DP algorithm $\alg$, such that 
	$\alg$ outputs $v_1, v_2, ..., v_K$ satisfying
	$(v_1, v_2, ..., v_K) \sim_{\beta} (v'_1, v'_2, ..., v'_K)$, where
	\[
	\expectation{v'_k|Z^n} = \frac{1}{n}\sum_{i = 1}^n \phi_k(Z_i)
	\mbox{~~and~~} \Var\prn*{v'_k|Z^n} \le \frac{8 c_0 d K\tau^2
		\log(Kdn/\gamma)
		\log^2(2K/\delta)}{n^2\eps^2} = 
		\tilde{O}\prn*{\frac{dK\tau^2}{n^2\eps^2}},
	\]
	where $c_0 = 102400$ and
        $\beta = \min \crl*{1, 2 \gamma + \frac{d^2 K B
            \sqrt{\log(dKn/\gamma)}}{\tau} \exp 
            \prn*{-\frac{n\eps}{48\sqrt{2dK\log(2/\delta)
                \log(2K/\delta)}}}}$.
\end{theorem}

The algorithm for Theorem~\ref{thm:sq_highd} is simply applying
Algorithm~\ref{alg:winsorized_highd} to $\crl{\phi_k(Z_i)}_{i\in[n]}$ with
$\eps_0 = \frac{\eps}{2\sqrt{2K\log(2/\delta)}}$ and
$\delta_0 = \frac{\delta}{2K}$ for each query. Algorithm~\ref{alg:wgd} is an
illustration of an application of this result.\arxiv{ The proof is given in
Appendix~\ref{proof:sq_highd}.}

}

\notarxiv{\paragraph{Answering multiple queries.} We end this section by noting
  that, when a family of queries $\mc{Q}$ is uniformly concentrated (as made
  precise in Definition~\ref{def:uniform_concentration}), we answer sequences of
  $K$ $d$-dimensional, adaptively chosen queries with error scaling as
  $\tilde{O}(\sqrt{dK}\tau / (n\eps))$ by applying
  Algorithm~\ref{alg:winsorized_highd} to $\crl{\phi_k(Z_i)}_{i\in[n]}$ with the
  right $(\eps_0, \delta_0)$. We make this formal in Theorem~\ref{thm:sq_highd}
  in Appendix~\ref{sec:uc}.}
\section{Empirical Risk Minimization with User-Level Differential Privacy}
\label{sec:erm}

In this section, we present an algorithm to solve the ERM objective
of~\eqref{eq:erm-user} under user-level DP constraints. We apply the results of
Section~\ref{sec:sq} by noting that the SQ framework encompasses stochastic
gradient methods. Informally, one can sequentially choose queries
$\phi_k(z) = \nabla \f(\theta_k;z)$ and, for a stepsize $\eta$, update
$\theta_{k+1} = \Pi_\Theta(\theta_k - \eta v_k)$, where $v_k$ is the answer to
the $k$-th query. For the results to hold, we require a uniform concentration
result over the appropriate class of queries.

\paragraph{Uniform concentration of stochastic gradients} The class of queries
for stochastic gradient methods is
$\cQ_\msf{erm} \defeq \crl{ \nabla \f(\theta;\cdot) : \theta \in \Theta }$.  We
prove that when assumptions~\ref{ass:smoothness} and \ref{ass:subG} hold,
$(\{\nabla \f(\cdot; S_u)\}_{u \in [n]}, \cQ_\msf{erm})$ is
$(\tilde{O}(\sigma\sqrt{d/m}), \alpha)$-uniformly concentrated. The next
proposition is a simplification of the result of~\cite{mei2018landscape} under
the (stronger) assumption~\ref{ass:smoothness} that $\f$ is uniformly
$\smooth$-smooth. The proof, which we defer to Appendix~\ref{app:uc}, hinges on
a covering number argument.

\begin{restatable}[Concentration of random gradients]{proposition}
  {restateUCGrad}\label{prop:uc-grad} Let $S_u \simiid P_u$, $|S_u| = m$ 
  for $u\in\brk{n}$ and
  $\alpha \ge 0$. Under Assumptions~\ref{ass:smoothness} and \ref{ass:subG},
  with probability greater than $1-\alpha$ it holds that
  \begin{equation} 
    \max_{u\in\brk{n}}\sup_{\theta \in \Theta} \norm{\nabla \ff (\theta; S_u)
      - \nabla \ff(\theta;P_u)}_2 =
    O\Paren{\sigma\sqrt{\frac{d \log\prn*{\frac{RHm}{d\sigma}}}{m}
        + \frac{\log\prn*{\frac{n}{\alpha}}}{m}}}. \nonumber
  \end{equation}
\end{restatable}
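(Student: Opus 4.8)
The plan is a one-step discretization (``net'') argument that combines a pointwise high-probability bound for the Euclidean norm of an average of sub-Gaussian vectors with a covering of $\Theta$ whose discretization error is controlled by the uniform smoothness assumption. Fix a user $u$ and set $g(\theta) \defeq \nabla \ff(\theta;S_u) - \nabla\ff(\theta;P_u) = \frac1m\sum_{i=1}^m \big(\nabla\f(\theta;\zu_i) - \E_{P_u}[\nabla\f(\theta;\zz)]\big)$, where we used that $\nabla\ff(\theta;P_u) = \E_{P_u}[\nabla\f(\theta;\zz)]$. First I would fix $\theta$ and bound $\norm{g(\theta)}_2$: by Assumption~\ref{ass:subG} each centered summand is $\sigma^2$-sub-Gaussian, hence $g(\theta)$ is $(\sigma^2/m)$-sub-Gaussian, and a standard tail bound for the norm of a sub-Gaussian vector in $\R^d$ (e.g.\ via a $\tfrac14$-net of the sphere, or Hanson--Wright-type control, as in~\cite{Vershynin19}) gives $\norm{g(\theta)}_2 \le c\,\sigma\sqrt{(d + t)/m}$ with probability at least $1 - e^{-t}$ for a universal constant $c$.

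Next I would pass to a uniform bound over $\Theta$. Since $\norm{\theta - \vartheta}_2 \le R$ on $\Theta$, fix a $\rho$-net $\mc{N}_\rho \subset \Theta$ with $\abs{\mc{N}_\rho} \le (3R/\rho)^d$. Assumption~\ref{ass:smoothness} gives $\norm{\nabla\f(\theta;z) - \nabla\f(\vartheta;z)}_2 \le \smooth\norm{\theta-\vartheta}_2$ for every $z$, so both $\theta\mapsto \nabla\ff(\theta;S_u)$ and $\theta\mapsto\nabla\ff(\theta;P_u)$ are $\smooth$-Lipschitz (the latter by Jensen), whence $g(\cdot)$ is $2\smooth$-Lipschitz and $\sup_{\theta\in\Theta}\norm{g(\theta)}_2 \le \max_{\theta\in\mc{N}_\rho}\norm{g(\theta)}_2 + 2\smooth\rho$. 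A union bound over $\mc{N}_\rho$ and over the $n$ users---applying the pointwise bound with $t = \log\!\big(n\abs{\mc{N}_\rho}/\alpha\big) = d\log(3R/\rho) + \log(n/\alpha)$---yields, with probability at least $1-\alpha$,
\[
\max_{u\in[n]}\sup_{\theta\in\Theta}\norm{g(\theta)}_2 \le c\,\sigma\sqrt{\frac{d + d\log(3R/\rho) + \log(n/\alpha)}{m}} + 2\smooth\rho.
\]
Finally I would optimize $\rho$, taking $\rho \asymp \tfrac{\sigma}{\smooth}\sqrt{d/m}$ so that $2\smooth\rho$ is dominated by the first term and $\log(3R/\rho) = O\!\big(\log(R\smooth m/(d\sigma))\big)$; substituting gives exactly the claimed bound $O\big(\sigma\sqrt{d\log(R\smooth m/(d\sigma))/m + \log(n/\alpha)/m}\big)$. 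One caveat: Assumption~\ref{ass:subG} is stated for $P_0$, so in the homogeneous case $P_u = P_0$ the argument is verbatim, and for the heterogeneous extension one invokes the analogous sub-Gaussianity under each $P_u$.

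The main obstacle is not any single estimate---each ingredient is routine---but the bookkeeping of logarithmic factors, so that a single choice of net resolution $\rho$ simultaneously makes the discretization error $2\smooth\rho$ lower-order and produces the stated $\log(R\smooth m/(d\sigma))$ dependence; in particular one must check that the $+d$ term from the vector-norm tail and the $d\log(3R/\rho)$ term from the union bound collapse into the single $d\log(R\smooth m/(d\sigma))$ factor rather than two separate logs. Recovering the sharper constants of~\cite{mei2018landscape} would require a genuine chaining argument, but under the uniform smoothness of Assumption~\ref{ass:smoothness} the crude one-step net already suffices for the stated $O(\cdot)$ rate.
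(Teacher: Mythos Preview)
Your proposal is correct and is essentially the same argument the paper gives: a one-step $\rho$-net on $\Theta$ with discretization error $2H\rho$ from smoothness, a sub-Gaussian vector tail at each net point (the paper does this explicitly via a $1/2$-net of the unit sphere rather than citing a black-box bound), a union bound over net points and users, and the same choice $\rho\asymp \tfrac{\sigma}{H}\sqrt{d/m}$. Your caveat about $P_u$ versus $P_0$ is also apt; the paper's proof is written for a single $P$ and the heterogeneous extension is handled elsewhere.
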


\paragraph{Stochastic gradient methods} We state classical
convergence results for stochastic gradient methods for both convex and
non-convex losses under smoothness. For a function $F:\Theta\to\R$, we assume
access to a first-order stochastic oracle $\msf{O}_{F, \nu^2}$, i.e., a random
mapping such that for all $\theta \in \Theta$,
\begin{equation} \nonumber 
  \msf{O}_{F, \nu^2}(\theta) = \nabla\what{F}(\theta) \mbox{~~with~~}
  \E\brk*{\nabla\what{F}(\theta)} = \nabla F(\theta) \mbox{~~and~~}
  \Var\prn*{\nabla\what{F}(\theta)} \le \nu^2.
\end{equation}

We abstract optimization algorithms in the following way: an algorithm consists
of an output set $\mc{O}$, a sub-routine
$\msf{Query}:\mc{O}\to\Theta$ that takes the last output and indicates the next
point to query and a sub-routine
$\msf{Update}: \mc{O}\times\R^d\to\mc{O}$ that takes the previous output and a
stochastic gradient and returns the next output. After $T$ steps, we call
$\msf{Aggregate}: \mc{O}^* \to \Theta$, which takes all the previous outputs and
returns the final point. (See Algorithm~\ref{alg:gen-opt} in
Appendix~\ref{app:conv-sgd} for how to instantiate generic first-order
optimization in this framework.) \arxiv{For example, the classical projected
SGD algorithm with fixed stepsize $\eta > 0$ corresponds to the following
(with $o_0 = \crl{\theta_0}$):
\begin{equation*}
  \msf{Query}(\crl{\theta_s}_{s\le t}) = \theta_t,
  \, \msf{Update}(o_t, g_t) = o_t \cup \crl{\Pi_\Theta(\theta_t - \eta g_t)} \mbox{~and~}
  \msf{Aggregate}(\crl{o_t}_t) = \tfrac{1}{T}\sum_{\theta_t \in o_T}\theta_t.
\end{equation*}}
We detail in Proposition~\ref{prop:conv-sgd} in Appendix~\ref{app:conv-sgd}
standard convergence results for variations \arxiv{\footnote{For convex
    functions, the algorithm is fixed-stepsize, averaged, projected SGD. For
    strongly-convex functions, the algorithm consists of projected SGD with a
    fixed stepsize and non-uniform averaging followed by a single restart with
    decreasing stepsize. Finally, in the non-convex case, the \textsf{Query} and
    \textsf{Update} sub-routine are also projected SGD with fixed stepsize while
    the \textsf{Aggregate} selects one of the past iterates uniformly at
    random.}}  of (projected) stochastic gradient descent (SGD). We introduce
this abstraction to forego the details of each specific algorithm and instead
focus on the privacy and utility guarantees. \arxiv{We also note the progress
  in recent years to improve convergence rates in the settings we consider (see,
  e.g.,~\citep{Lan12, GhadimiLa12, GhadimiLa13, KulunchakovMa19, Allen17,
    Allen18}). While our privacy guarantees hold for these algorithms---and thus
  practitioners can benefit from the improvements---we do not focus on them as
  projected SGD already allows us to attain the minimax optimal rate for SCO
  (see Section~\ref{sec:sco}).}
\paragraph{Algorithm} We recall the ERM setting with user-level DP. We observe
$\mc{S} = \prn{S_1, \ldots, S_n}$ with $S_u \in \ZZ^m$ for $u\in\brk{n}$
and wish to solve the constrained optimization problem with objective
in~\eqref{eq:erm-user}. 
\arxiv{\begin{equation}\label{eq:min-erm-user} 
  \minimize_{\theta\in\Theta} \, \ff(\theta; \cS) := \frac{1}{mn}
  \sum_{u\in\brk{n}, j\in\brk{m}}
  \f\prn{\theta;z_j^{(u)}}.
\end{equation}}
We present our method in Algorithm~\ref{alg:wgd} and provide utility and privacy
guarantees in Theorem~\ref{thm:erm}.

\begin{algorithm}\label{alg:sgd-erm}
  \caption{Winsorized First-Order Optimization}\label{alg:wgd}
  \begin{algorithmic}[1]
    \STATE \textbf{Input:} Number of iterations $T$, optimization algorithm
    $\crl{\mc{O}, \msf{Query}, \msf{Update}, \msf{Aggregate}}$, privacy
    parameters $(\eps, \delta)$, data $\mc{S} = \prn{S_1, \ldots, S_n}$, initial
    output $o_0$,
    parameter set $\Theta$, concentration radius 
    $\tau$, probability $\gamma$.
    \STATE Set $\eps' = \frac{\eps}{2\sqrt{2T\log(2/\delta)}}$
    and $\delta' = \frac{\delta}{2T}$
    \FOR {$t=0,\ldots,T-1$}
    \STATE
    $\theta_t \gets \msf{Query}(o_t)$.
    \STATE For each user $u \in [n]$,
    compute      \vspace{-8pt}
    \begin{equation*}
      g_t^{(u)} = \nabla \ff(\theta_t;S_u) = \frac1m \sum_{j\in \brk{m}} \nabla \f(\theta_t;z_j^{(u)}).
      \vspace{-15pt}
    \end{equation*}
          \label{step:grad_computation}
    \STATE Compute
    $\bar{g}_t =
    \textbf{WinsorizedMeanHighD}\prn{\crl{g_{t}^{(u)}}_{u\in[n]},\eps',
      \delta', \tau, 
      \normgrad, \gamma}$. \label{step:mean_estimation}
   \STATE $o_{t+1} \gets \msf{Update}(o_t, 
      \bar{g}_t)$. 
    \ENDFOR \RETURN $\bar{\theta} \gets \msf{Aggregate}(o_0, \ldots, o_T)$.
  \end{algorithmic}
\end{algorithm}

\begin{restatable}[Privacy and utility guarantees for ERM]{theorem}
  {restateERM}\label{thm:erm} Assume~\ref{ass:homogeneous} holds and recall that
  $\Gsig = \sigma\sqrt{d}$, assume\footnote{For precise log factors, see
    Appendix~\ref{app:proof-thm-erm}.} $n = \tilde{\Omega}(\sqrt{d T}/\eps)$
  and let $\what{\theta}$ be the output of Algorithm~\ref{alg:wgd}. There exists
  variants of projected SGD (e.g. the ones we present in
  Proposition~\ref{prop:conv-sgd}) such that,
  with probability greater than $1-\gamma$:
  \begin{enumerate}[label=(\roman*)]
  \item If for all $\z \in \cZ, \f(\cdot;\z)$ is convex, then
    \begin{equation}
      \E\brk*{\ff(\what{\theta};\mc{S}) - \inf_{\theta' \in \Theta}\ff(\theta';\mc{S}) ~\middle|~ \mc{S}
      }
      = \Olog\Paren{\frac{R^2H}{T} + 
      R\Gsig\frac{\sqrt{d}}{n\sqrt{m}\eps}}.\nonumber 
    \end{equation}
  \item If for all $\z \in \cZ, \f(\cdot;\z)$ is $\mu$-strongly-convex, then
    \begin{equation}
      \E\brk*{\ff(\what{\theta};\mc{S}) - \inf_{\theta' \in \Theta}\ff(\theta';\mc{S}) ~\middle|~ \mc{S}}
      = \Olog\Paren{GR\exp\prn*{-\tfrac{\mu}{H}T}
        + \Gsig^2\frac{d}{\mu n^2m\eps^2}}.\nonumber 
    \end{equation}
  \item Otherwise, defining the \emph{gradient mapping}\footnote{In the
      unconstrained case---$\Theta=\R^d$---this corresponds to an
      $\epsilon$-stationary point as $\msf{G}_{F, \gamma}(x) = \nabla F(x)$.}
    $\msf{G}_{F, \gamma}(\theta) \defeq \frac{1}{\gamma}\brk*{\theta -
      \Pi_{\Theta}\prn* {\theta - \gamma\nabla F(\theta)}}$, we have
    \begin{equation}
      \E\brk*{\norm{\msf{G}_{\ff(\cdot; \mc{S}), 1/H}(\what{\theta})}_2^2 | \cS} = 
      \Olog\Paren{
        \frac{H^2R}{T} + HR\Gsig\frac{\sqrt{d}}{n\sqrt{m}\eps}}.\nonumber 
    \end{equation}
  \end{enumerate}

  For $\eps \le 1, \delta > 0$, Algorithm~\ref{alg:wgd} instantiated with any
  first-order gradient algorithm is $(\eps, \delta)$-user-level DP. In the case
  that only~\ref{ass:heterogeneous} holds, the same guarantees hold whenever
  $\hetero \le \poly(d, \tfrac{1}{n}, \tfrac{1}{m}, \tfrac{1}{\eps})$.
  
\end{restatable}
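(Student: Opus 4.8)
The plan is to view Algorithm~\ref{alg:wgd} as running a black-box first-order method on the \emph{fixed} empirical objective $\ff(\cdot;\mc{S})$, with the private estimator of Section~\ref{sec:sq} playing the role of a stochastic gradient oracle, and then to invoke off-the-shelf SGD convergence rates. First I would supply the uniform-concentration input: under Assumptions~\ref{ass:smoothness} and~\ref{ass:subG}, together with Assumption~\ref{ass:homogeneous} (so all users share $P_0$), Proposition~\ref{prop:uc-grad} shows that with probability at least $1-\gamma$ over $\mc{S}$, the per-user gradients $\crl{\nabla\ff(\theta;S_u)}_{u\in\brk{n}}$ lie within $\ell_2$-distance $\tau = \Olog(\sigma\sqrt{d/m})$ of the common point $\nabla\ff(\theta;P_0)$, \emph{uniformly} over $\theta\in\Theta$. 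Thus $(\crl{S_u}_{u}, \cQ_{\msf{erm}})$ is $(\tau,\gamma)$-uniformly concentrated for $\cQ_{\msf{erm}} = \crl{\nabla\f(\theta;\cdot):\theta\in\Theta}$, exactly the hypothesis needed to feed Algorithm~\ref{alg:winsorized_highd} the radius $\tau$ at every step.

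\paragraph{Privacy.}
Since altering one user's contribution perturbs all $T$ calls to \textbf{WinsorizedMeanHighD}, privacy follows by composition: each call is $(\eps',\delta')$-DP with $\eps' = \eps/(2\sqrt{2T\log(2/\delta)})$ and $\delta' = \delta/(2T)$ by Theorem~\ref{thm:winsorized_highd}, and advanced composition over $T$ rounds yields $(\eps,\delta)$-user-level DP for $\eps\le 1$ --- this is precisely the accounting already performed in Theorem~\ref{thm:sq_highd} with $K=T$.

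\paragraph{Utility.}
Next I would turn the noised gradients into a stochastic oracle. By Theorem~\ref{thm:sq_highd} applied to the adaptively chosen queries $\phi_t = \nabla\f(\theta_t;\cdot)$, conditioned on $\mc{S}$ the sequence $(\bar g_0,\ldots,\bar g_{T-1})$ is $\beta$-close to an idealized sequence with $\E[\bar g'_t\mid\mc{S},\theta_t] = \nabla\ff(\theta_t;\mc{S})$ and $\Var(\bar g'_t\mid\mc{S},\theta_t)\le\nu^2$, where
\begin{equation*}
  \nu^2 = \Olog\prn*{\frac{dT\tau^2}{n^2\eps^2}} = \Olog\prn*{\frac{\Gsig^2 d T}{m n^2\eps^2}},
\end{equation*}
using $\tau^2 = \Olog(\sigma^2 d/m)$ and $\Gsig = \sigma\sqrt{d}$, and $\beta$ is exponentially small in $n\eps/\sqrt{dT}$, hence negligible under the hypothesis $n = \tilde{\Omega}(\sqrt{dT}/\eps)$. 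On the idealized run this is exactly a first-order oracle $\msf{O}_{\ff(\cdot;\mc{S}),\nu^2}$ for the empirical objective, which inherits $H$-smoothness (and convexity / $\mu$-strong convexity when assumed) from $\f$, has $G$-bounded gradients, and is minimized over a set of diameter $R$. Plugging $\nu^2$ into the three cases of Proposition~\ref{prop:conv-sgd} gives $\Olog(R^2 H/T + R\nu/\sqrt{T})$ (convex), $\Olog(GR\, e^{-\mu T/H} + \nu^2/(\mu T))$ (strongly convex), and $\Olog(H^2R/T + HR\nu/\sqrt{T})$ for the squared gradient mapping (non-convex); since $\nu/\sqrt{T} = \Olog(\Gsig\sqrt{d}/(n\sqrt{m}\eps))$ and $\nu^2/(\mu T) = \Olog(\Gsig^2 d/(\mu m n^2\eps^2))$ are both $T$-independent, these match the stated rates. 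Finally, because each quantity is nonnegative and bounded (by $GR$, resp.\ $O(H^2R^2)$), $\beta$-closeness transfers every bound from the idealized run to Algorithm~\ref{alg:wgd} at the cost of an additive $O(\beta GR)$ absorbed into $\Olog$. The heterogeneous statement then follows from the black-box reduction of Theorem~\ref{thm:reduction}, under which the additional bias is of lower order whenever $\hetero = \poly(d,1/n,1/m,1/\eps)$.

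\paragraph{Main obstacle.}
The hard part will not be any single ingredient --- the delicate analytic step, uniform concentration of the stochastic gradients over the continuous set $\Theta$, is available from Proposition~\ref{prop:uc-grad} --- but the coupled bookkeeping: one must choose $\tau$, $(\eps',\delta')$ and $\nu^2$ so that the privacy accounting closes at $(\eps,\delta)$, so that the condition $n = \tilde{\Omega}(\sqrt{dT}/\eps)$ forces $\beta\ll 1$, and so that the per-step noise $\nu\propto\sqrt{T}$ exactly cancels the $1/\sqrt{T}$ in the SGD rate, leaving a $T$-free privacy error equal to $R\Gsig\sqrt{d}/(n\sqrt{m}\eps)$; one also needs to verify that $\ff(\cdot;\mc{S})$ genuinely inherits the relevant structural assumptions on $\f$ in each of the three regimes.
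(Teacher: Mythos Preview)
Your proposal is correct and follows essentially the same approach as the paper: invoke Proposition~\ref{prop:uc-grad} to get uniform concentration with radius $\tau = \Olog(\sigma\sqrt{d/m})$, feed this into Theorem~\ref{thm:sq_highd} (which simultaneously handles privacy via advanced composition and utility via $\beta$-closeness to an unbiased oracle with variance $\nu^2 = \Olog(\Gsig^2 dT/(mn^2\eps^2))$), then plug $\nu$ into the three cases of Proposition~\ref{prop:conv-sgd} and use $n=\tilde\Omega(\sqrt{dT}/\eps)$ to kill the $\beta$ term. The only cosmetic difference is that the paper absorbs the failure of $\beta$-closeness into the high-probability event (splitting the failure budget so that $\beta<\gamma$) rather than bounding its contribution to the expectation by $O(\beta GR)$, but both treatments are equivalent here.
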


We present the proof in Appendix~\ref{app:proof-thm-erm}. For the utility
guarantees, the crux of the proof resides in Theorem~\ref{thm:sq_highd}: as well
as ensuring small excess loss in expectation, the SQ algorithm produces with
high probability a sample from the stochastic gradient oracle
$\msf{O}_{\ff(\cdot;\cS), \nu^2}$ where
$\nu^2 = \Olog\prn{T\Gsig^2\tfrac{d}{n^2m\eps^2}}$. When this happens for all
$T$ steps, the analysis of stochastic gradient methods provide the
desired regret. The privacy guarantees follow from the strong composition
theorem of~\cite{dwork2010boosting}.

Importantly, when the function exhibits (some) strong-convexity (which will be
the case for any regularized objective), we are able to \emph{localize} the
optimal parameter---up to the privacy cost---in $\Olog(H/\mu)$ steps.  This will
be particularly important in Section~\ref{sec:sco}.

\begin{corollary}[Localization]\label{coro:erm_sc}
  Let $\what{\theta}$ be the output of Algorithm~\ref{alg:wgd} on the ERM
  problem of~\eqref{eq:erm-user}. Assume that $\f(\cdot;\z)$ is
  $\mu$-strongly-convex for all $z\in\cZ$, that $n = \tilde{\Omega} (\sqrt{dH/\mu})$ %
  and set
  $T = \tfrac{H}{\mu}\log\prn*{n^2 m (\Gbar/\Gsig^2) \tfrac{\mu
      R\eps^2}{d}}$ and $\gamma = \frac{\sigma^2d^2}{\mu^2n^2m\eps^2 R^2}$. For
  $\theta_{\mc{S}}^\ast \in \argmin_{\theta'\in\Theta}\ff(\theta';\cS)$, it
  holds\footnote{A logarithmic dependence on $T$ is hiding in the result. Since
    $T = \Olog(H/\mu)$, we implicitly assume $H/\mu$ is polynomial in the stated
    parameters, which is satisfied when we later apply these results to
    regularized objectives.}
  \begin{equation} \nonumber
    \E\brk{\norm{\what{\theta} - \theta_{\mc{S}}^\ast}_2^2} = \Olog\Paren
    {\frac{\sigma^2d^2}{\mu^2 n^2m\eps^2}}.
  \end{equation}
\end{corollary}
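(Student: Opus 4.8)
The plan is to read the parameter‑distance bound off the excess empirical‑risk guarantee of Theorem~\ref{thm:erm}(ii), converting function value to squared distance via strong convexity, and to choose the failure probability $\gamma$ fed to Algorithm~\ref{alg:wgd} so that the low‑probability event on which that theorem may fail contributes only at the target scale.

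First I would run Algorithm~\ref{alg:wgd} with the restarted strongly‑convex variant of projected SGD from Proposition~\ref{prop:conv-sgd} and invoke Theorem~\ref{thm:erm}(ii), whose transient term I take in the sharper form $\Gbar R\, e^{-\mu T/H}$ (the $G$ appearing in the main‑text statement being $\Gbar$ up to the sub‑Gaussian tail of the gradients, as in the precise version behind the theorem). Let $\mc{E}$ denote the event --- measurable with respect to $\cS$, essentially the uniform‑concentration event of Proposition~\ref{prop:uc-grad} --- on which the guarantee of Theorem~\ref{thm:erm} holds; by construction $\P(\mc{E}^c)\le\gamma$, and the hypothesis $n=\tilde\Omega(\sqrt{dH/\mu})$ is exactly what makes the sample‑size requirement $n=\tilde\Omega(\sqrt{dT}/\eps)$ of Theorem~\ref{thm:erm} hold for the prescribed $T$ (of order $\Olog(H/\mu)$), up to the $1/\eps$ and logarithmic factors subsumed in $\tilde\Omega$. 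On $\mc{E}$,
\[
  \E\brk{\ff(\what\theta;\cS) - \inf_{\theta'\in\Theta}\ff(\theta';\cS)\mid\cS}
  = \Olog\Paren{\Gbar R\, e^{-\mu T/H} + \Gsig^2\tfrac{d}{\mu n^2 m\eps^2}}.
\]

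Next, since $\ff(\cdot;\cS)=\tfrac1n\sum_u \ff(\cdot;S_u)$ is an average of $\mu$‑strongly‑convex functions it is itself $\mu$‑strongly convex, so writing $\theta_{\cS}^\ast\in\argmin_{\theta'\in\Theta}\ff(\theta';\cS)$ and using the first‑order optimality inequality $\langle\nabla\ff(\theta_{\cS}^\ast;\cS),\,\what\theta-\theta_{\cS}^\ast\rangle\ge0$ (valid because $\what\theta\in\Theta$) gives $\tfrac\mu2\norm{\what\theta-\theta_{\cS}^\ast}_2^2\le\ff(\what\theta;\cS)-\ff(\theta_{\cS}^\ast;\cS)$. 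Splitting the expectation over $\mc{E}$ and $\mc{E}^c$ --- on $\mc{E}$ via the display above divided by $\mu/2$, on $\mc{E}^c$ via the crude bound $\norm{\what\theta-\theta_{\cS}^\ast}_2\le R$ since both iterates lie in $\Theta$ (of diameter at most $R$) --- yields
\[
  \E\brk{\norm{\what\theta-\theta_{\cS}^\ast}_2^2}
  \le \tfrac2\mu\,\Olog\Paren{\Gbar R\, e^{-\mu T/H} + \Gsig^2\tfrac{d}{\mu n^2 m\eps^2}} + \gamma R^2 .
\]
Finally I would substitute the prescribed values: $T=\tfrac{H}{\mu}\log\prn{n^2 m(\Gbar/\Gsig^2)\mu R\eps^2/d}$ makes $\Gbar R\, e^{-\mu T/H}=\Gsig^2 d/(n^2 m\mu\eps^2)$, and $\gamma=\sigma^2 d^2/(\mu^2 n^2 m\eps^2 R^2)$ makes $\gamma R^2=\sigma^2 d^2/(\mu^2 n^2 m\eps^2)$; using $\Gsig^2=\sigma^2 d$, every surviving term is $\Olog(\sigma^2 d^2/(\mu^2 n^2 m\eps^2))$, which is the claim.

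The step I expect to require the most care is the interplay between the choice of $\gamma$ and the logarithmic factors hidden in both $T$ and Theorem~\ref{thm:erm}: the uniform‑concentration radius that Algorithm~\ref{alg:wgd} is run with (through Proposition~\ref{prop:uc-grad}) depends on $\log(n/\gamma)$, so I must check that taking $\gamma$ polynomially small in the problem parameters only inflates the $\Olog$ by further polylogarithmic factors, and that the definition of $T$ stays self‑consistent (the implicit assumption, flagged in the footnote, that $H/\mu$ is polynomial in the stated parameters). Everything else --- strong convexity, first‑order optimality, and the $R^2$ bound on the bad event --- is routine.
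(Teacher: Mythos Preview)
Your proposal is correct and is exactly the argument the paper intends: the corollary is stated without an explicit proof, being the direct combination of Theorem~\ref{thm:erm}(ii) with the strong-convexity inequality $\tfrac{\mu}{2}\norm{\what\theta-\theta_{\cS}^\ast}_2^2\le \ff(\what\theta;\cS)-\ff(\theta_{\cS}^\ast;\cS)$, the diameter bound $R^2$ on the failure event, and the stated choices of $T$ and $\gamma$. Your observation about $\Gbar$ versus $G$ in the transient term is also on point --- it is a minor inconsistency in the paper that is absorbed by the $\Olog$.
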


\section{Stochastic Convex Optimization with User-level 
	Privacy}
\label{sec:sco}
In this section we address the SCO task of~\eqref{eq:pop} under user-level DP
constraints. Our approach (which we show in Algorithm~\ref{alg:phased_erm})
solves a sequence of carefully regularized ERM problems, drawing on the
guarantees of the previous section.  Recall that $\Gsig = \sigma\sqrt{d}$ and
$\Gbar = \min\crl{G, \Gsig}$, and that $\f$ is $H$-smooth under
assumption~\ref{ass:smoothness}. In this section, we assume that $\f$ is
convex. We first present our results and state an upper and lower bound for SCO
with user-level privacy constraints.

\begin{restatable}[Phased ERM for
  SCO]{theorem}{restateThmSCOUpperBound} \label{thm:dp_sco_upper}
  Algorithm~\ref{alg:phased_erm} is user-level $(\eps,
  \delta)$-DP. When~\ref{ass:homogeneous} holds and
  $n = \tilde{\Omega}({\min\{ \sqrt[3]{d^2m\smooth^2 \normpara^2/(\normgrad
      \Gbar \eps^4)}, \smooth \normpara \sqrt{m} /(\sigma \eps) \}})$, or,
  equivalently,
  $\smooth = \tilde{O}(\sqrt{\frac{n^2\eps^2\sigma^2}{R^2m} + 
  \frac{\normgrad\Gbar n^3 \eps^4}{d^2R^2m}})$
  for all $P$ and $\f$ satisfying Assumptions~\ref{ass:smoothness} and~\ref{ass:subG}, we
  have %
      \vspace{-0.2cm}
  \[
    \expectation{\ff(\alg_{\sf PhasedERM}(\cS);P_0)} - \min_{\theta' \in
      \Theta}\ff(\theta'; P_0) = \tilde{O} \Paren{ \frac{R\sqrt{\normgrad\Gbar 
      }}{\sqrt{mn}} + R\Gsig\frac{\sqrt{d}}{n\sqrt{m} \eps} }.
\]
Furthermore, our results still hold in the heterogeneous setting
(Assumption~\ref{ass:heterogeneous}) whenever
$\hetero \le \poly(d, \tfrac{1}{n}, \tfrac{1}{m},
\tfrac{1}{\eps})$; the risk guarantee being with respect to any user
distribution $P_u$.
\end{restatable}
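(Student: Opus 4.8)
The plan is to adapt the phased-ERM localization strategy of~\cite{feldman2020private} to the user-level regime, with Corollary~\ref{coro:erm_sc} playing the role of the per-phase private optimizer. Algorithm~\ref{alg:phased_erm} partitions the $n$ users into $k=\Theta(\log(nm))$ user-disjoint batches $\mc{S}_1,\dots,\mc{S}_k$ of sizes $n_1,\dots,n_k$; starting from an arbitrary $\what\theta_0\in\Theta$, phase $i$ runs Algorithm~\ref{alg:wgd} (instantiated with the strongly-convex variant of projected SGD) on
\[
  F_i(\theta)\defeq \ff(\theta;\mc{S}_i)+\tfrac{\lambda_i}{2}\norm{\theta-\what\theta_{i-1}}_2^2
  \quad\text{over}\quad \Theta_i\defeq\Theta\cap\mathbb{B}_2^d(\what\theta_{i-1},R_{i-1}),
\]
producing $\what\theta_i$, where the radii $R_i$ shrink and the regularizations $\lambda_i$ grow geometrically. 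Since $\f$ is convex and $H$-smooth, $F_i$ is $\lambda_i$-strongly convex, $(H+\lambda_i)$-smooth, $(G+\lambda_iR_{i-1})$-Lipschitz on $\Theta_i$, with $\sigma^2$-sub-Gaussian gradients, so Corollary~\ref{coro:erm_sc} applies with $\mu\leftarrow\lambda_i$ and $H\leftarrow H+\lambda_i$. For privacy, each phase is $(\eps,\delta)$-user-level DP by Theorem~\ref{thm:erm} and projection onto $\Theta_i$ is post-processing; since each user lies in a single batch and later phases only post-process earlier outputs, parallel composition shows Algorithm~\ref{alg:phased_erm} is $(\eps,\delta)$-user-level DP without splitting the budget. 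This settles the privacy claim.

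For utility I would induct, maintaining after phase $i$ a population-suboptimality bound $\E[\ff(\what\theta_i;P_0)]-\min_{\theta\in\Theta}\ff(\theta;P_0)\le\Delta_i$ and a localization bound $\norm{\what\theta_i-\what\theta_{i-1}}_2\le R_i$, so that a near-minimizer of $\ff(\cdot;P_0)$ stays inside the shrinking balls $\Theta_i$. One inductive step combines three facts: (i) Corollary~\ref{coro:erm_sc} applied to $F_i$ gives $\E\norm{\what\theta_i-\theta_i^\ast}_2^2=\tilde{O}(\sigma^2d^2/(\lambda_i^2n_i^2m\eps^2))$ with $\theta_i^\ast=\argmin_{\Theta_i}F_i$ — the per-phase privacy error — whose hypothesis forces $n_i=\tilde{\Omega}(\sqrt{dH/\lambda_i})$; (ii) Proposition~\ref{prop:uc-grad}, together with a union bound over users, makes $\nabla\ff(\cdot;\mc{S}_i)$ uniformly close to $\nabla\ff(\cdot;P_0)$ over $\Theta_i$, so that the first-order optimality of $\theta_i^\ast$ for $F_i$ plus convexity of $\ff(\cdot;P_0)$ converts empirical optimality into population suboptimality; (iii) strong convexity of $F_i$ bounds $\norm{\theta_i^\ast-\what\theta_{i-1}}_2^2\le\tfrac{2}{\lambda_i}(\ff(\what\theta_{i-1};\mc{S}_i)-\ff(\theta_i^\ast;\mc{S}_i))$, i.e.\ by the previous phase's empirical suboptimality (this yields the new, smaller $R_i$), and also controls the bias of centering the regularizer at $\what\theta_{i-1}$ rather than $\theta^\ast$, using the inductive hypothesis to place (the projection of) $\theta^\ast$ within $O(R_{i-1})$ of $\what\theta_{i-1}$. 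Assembling these gives a recursion that bounds $\Delta_i$ by the regularization bias $\lambda_iR_{i-1}^2$, a generalization term (whose precise form depends on whether $G$ or $\Gsig$ dominates, and which governs the $m,n$ dependence of the statistical rate exactly as in item-level SCO), and the privacy error of (i), together with $R_i^2\lesssim\Delta_{i-1}/\lambda_i+(\text{privacy error})$.

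Finally I would fix the schedule — geometrically halving $R_i^2$, geometrically growing $\lambda_i$, and taking $k=\Theta(\log(nm))$ (roughly geometrically shrinking) batches — so that every term in phase $i$ is a constant fraction of the target $\Delta_i$, making the bounds telescope; optimizing the regularization schedule (in particular $\lambda_1$) to balance the regularization bias against the generalization term yields the statistical rate $\tilde{O}(R\sqrt{G\Gbar}/\sqrt{mn})$, while the per-phase privacy errors of (i) accumulate to $\tilde{O}(R\Gsig\sqrt{d}/(n\sqrt{m}\eps))$; the constraints $n_i=\tilde{\Omega}(\sqrt{dH/\lambda_i})$ across phases (binding at the smallest $\lambda_1$) collapse to exactly the stated lower bound on $n$, equivalently the stated upper bound on the smoothness $H$. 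The heterogeneous statement then follows from the black-box reduction of Theorem~\ref{thm:reduction}: when $\hetero\le\poly(d,\tfrac1n,\tfrac1m,\tfrac1\eps)$ the bias from replacing $P_0$-samples by $P_u$-samples is of lower order than the above rate, and the guarantee transfers to each $P_u$. I expect the main obstacle to be closing the localization recursion in (iii): one must guarantee that $\Theta_i$ actually contains a near-optimal point of the population objective at every phase — which needs a careful high-probability (not merely in-expectation) control of $\norm{\what\theta_{i-1}-\theta^\ast}$, or an argument that absorbs the rare failure event — and verify that the empirical-to-population passages in (ii)–(iii) cost only lower-order terms; once the recursion is in place, choosing $k$, $\{\lambda_i\}$, $\{R_i\}$ is routine bookkeeping.
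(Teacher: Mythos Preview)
Your phased-ERM skeleton, the per-phase use of Corollary~\ref{coro:erm_sc}, and the privacy argument all match the paper's. But the utility analysis diverges from the paper in two substantive ways, and one of them is a genuine gap for proving the theorem as stated.

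First, the paper never restricts to shrinking balls $\Theta_i$: every phase optimizes over the full $\Theta$, and localization is achieved purely through the regularizer $\tfrac{\lambda_t}{2}\norm{\theta-\what\theta_{t-1}}_2^2$. Second, and more importantly, the paper does \emph{not} use Proposition~\ref{prop:uc-grad} to pass from empirical to population. Instead it establishes a \emph{stability}-based generalization bound (Proposition~\ref{prop:stability} in Appendix~\ref{app:sco}, in the spirit of~\cite{bousquet2002stability,ShalevShwartz2009StochasticCO}): the exact minimizer $\theta_t^\ast$ of the regularized empirical objective satisfies
\[
\E[\ff(\theta_t^\ast;P_0)]-\ff(\theta;P_0)\le\tfrac{\lambda_t}{2}\E\norm{\theta-\what\theta_{t-1}}_2^2+\tilde{O}\!\left(\frac{G\Gbar}{\lambda_t m n_t}\right)\quad\text{for all }\theta\in\Theta.
\]
The proof then telescopes $\E[\ff(\what\theta_T;P_0)-\ff(\theta^\ast;P_0)]$ into three pieces (last-phase approximation error, sum over exact minimizers $\theta_t^\ast$, first phase), handles the middle and last pieces by Proposition~\ref{prop:stability}, and plugs in $\E\norm{\theta_{t-1}^\ast-\what\theta_{t-1}}_2^2$ from Corollary~\ref{coro:erm_sc}. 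Everything is in expectation, so the high-probability localization obstacle you flag in (iii) never arises.

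The gap in your route is in step (ii): uniform gradient concentration from Proposition~\ref{prop:uc-grad} applied to $\mc{S}_i$ gives $\sup_\theta\norm{\nabla\ff(\theta;\mc{S}_i)-\nabla\ff(\theta;P_0)}_2=\tilde{O}(\Gsig/\sqrt{mn_i})$, and converting via convexity over a ball of radius $R_{i-1}$ yields a generalization term of order $\Gsig R_{i-1}/\sqrt{mn_i}$, which is \emph{independent of} $\lambda_i$. After summation this produces a statistical rate $\tilde{O}(R\Gsig/\sqrt{mn})$, not the claimed $\tilde{O}(R\sqrt{G\Gbar}/\sqrt{mn})$; these differ precisely when $G<\Gsig$ (then $\sqrt{G\Gbar}=G<\Gsig$), so your argument would not recover the theorem in that regime. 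The stability argument is what buys the $G\Gbar$ dependence: it exploits $G$-Lipschitzness of $\f$ together with the bound $\min\{2G,\tilde{O}(\sigma\sqrt{d})\}$ on the gradient difference at a \emph{single} point $\theta_1$, rather than paying the uniform $\sup_\theta$ of Proposition~\ref{prop:uc-grad}.
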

\begin{restatable}[Lower bound for
  SCO]{theorem}{restateThmSCOLowerBound} \label{thm:dp_sco_lower} There exists a
  distribution $P$ and a loss $\f$ satisfying Assumptions~\ref{ass:smoothness}
  and~\ref{ass:subG} such that for any algorithm $\alg$ satisfying
  $(\eps, \delta)$-DP at user-level, we have
  \[
    \expectation{\ff(\alg(\cS);P) }- \min_{\theta' \in \Theta}\ff(\theta'; P) =
    \Omega \Paren{ \frac{R \Gbar }{\sqrt{mn}} + R 
    \Gbar \frac{\sqrt{d}}{n\sqrt{m} \eps} }.
  \]
\end{restatable}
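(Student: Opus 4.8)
The plan is to prove the two terms of the bound separately---the statistical term $R\Gbar/\sqrt{mn}$ and the privacy term $R\Gbar\sqrt{d}/(n\sqrt{m}\eps)$---and combine them, using that a lower bound of $\Omega(A)$ together with one of $\Omega(B)$ gives $\Omega(A+B)$ since $\max\{A,B\}\ge\tfrac12(A+B)$. Both bounds come from a single family of linear-loss instances. Take $\Theta=\mathbb{B}_2^d(0,R)$, the loss $\f(\theta;z)=-\langle\theta,z\rangle$ (convex and $0$-smooth, hence satisfying Assumption~\ref{ass:smoothness} for any smoothness parameter), and the data distribution $P_v$ under which $Z$ has independent coordinates $Z_j\in\{-a,+a\}$ with $\P(Z_j=av_j)=\tfrac{1+\rho}{2}$, for a hidden sign pattern $v\in\{-1,+1\}^d$ drawn uniformly. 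Choosing $a=\Gbar/\sqrt{d}=\min\{G/\sqrt{d},\sigma\}$ makes the instance $G$-Lipschitz ($\|\nabla\f(\theta;z)\|_2=\|z\|_2=a\sqrt{d}=\Gbar\le G$) and $\sigma$-sub-Gaussian (the coordinate scale is $a\le\sigma$), so Assumption~\ref{ass:subG} holds. The population objective is $\ff(\theta;P_v)=-a\rho\langle\theta,v\rangle$, minimized at $\theta_v^\star=Rv/\sqrt{d}$ with value $-a\rho R\sqrt{d}$, so for any $\hat\theta\in\Theta$ the excess risk equals $a\rho\,(R\sqrt{d}-\langle\hat\theta,v\rangle)$. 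Hence it suffices to show that any admissible algorithm produces $\hat\theta$ whose \emph{correlation} with the hidden $v$ is small, namely $\E\,\langle\hat\theta,v\rangle\le\tfrac12 R\sqrt{d}$; this yields $\E[\text{excess risk}]\ge\tfrac12 a\rho R\sqrt{d}=\tfrac12\Gbar R\rho$, and it then remains to pick $\rho$ as large as possible while keeping the correlation small.

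For the statistical term I would use a coordinate-wise Assouad argument: the two-point test deciding $v_j$ from one sample has Kullback--Leibler divergence $\Theta(\rho^2)$, so with $mn$ samples total it is uninformative once $mn\rho^2=O(1)$; taking $\rho\asymp 1/\sqrt{mn}$ then forces $\E\,\langle\hat\theta,v\rangle\le\tfrac12 R\sqrt{d}$ for \emph{every} (not necessarily private) algorithm, giving excess risk $\Omega(\Gbar R/\sqrt{mn})$.

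The privacy term is the crux. Coordinate-wise DP-Assouad is too weak here---it tolerates only $\rho\lesssim 1/(n\sqrt{m}\eps)$ and would lose a factor $\sqrt{d}$---so instead I would run a high-dimensional fingerprinting / tracing-attack argument (adapting the item-level private lower-bound techniques of~\citep{bassily2014private, bassily2019private} together with an $\ell_2$ fingerprinting lemma) with each user treated as a single observation. The key structural point---the same one that drives the matching upper bounds in Corollary~\ref{coro:bounded} and Theorem~\ref{thm:dp_sco_upper}---is that what a user contributes is the \emph{average} $\bar z_u=\tfrac1m\sum_{i\le m}z_i^{(u)}$, whose deviation from the mean concentrates to $\ell_2$-norm $\Theta(a\sqrt{d}/\sqrt{m})$, i.e.\ a factor $\sqrt{m}$ smaller than a single sample. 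Consequently the privacy side of the fingerprinting identity is charged a factor $\sqrt{m}$ less, upgrading the item-level obstruction $\rho\lesssim\sqrt{d}/(n\eps)$ to $\rho\lesssim\sqrt{d}/(n\sqrt{m}\eps)$ in the user-level model. Taking $\rho\asymp\sqrt{d}/(n\sqrt{m}\eps)$ then shows that every $(\eps,\delta)$-user-DP algorithm (for $\delta$ small enough, as is standard for such lower bounds) has $\E\,\langle\hat\theta,v\rangle\le\tfrac12 R\sqrt{d}$, hence excess risk $\Omega(\Gbar R\sqrt{d}/(n\sqrt{m}\eps))$. Taking the harder of the two constructions for a given setting of the parameters yields $\Omega\big(R\Gbar/\sqrt{mn}+R\Gbar\sqrt{d}/(n\sqrt{m}\eps)\big)$; since the hard instance is homogeneous, it is in particular a valid instance under Assumption~\ref{ass:heterogeneous} with zero heterogeneity.

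The main obstacle is making the $\sqrt{m}$ improvement rigorous inside a differential-privacy argument: DP is worst-case, so one cannot literally replace the single-sample range $a\sqrt{d}$ by the typical average-deviation $a\sqrt{d}/\sqrt{m}$. This forces carrying out the fingerprinting bound on a high-probability event on which all $n$ user averages lie within $\ell_2$-distance $O(a\sqrt{d/m})$ (up to logarithmic factors) of the mean, and crudely bounding the exponentially unlikely complement---precisely the ``nice event'' truncation already used for the mean estimator of Section~\ref{sec:sq}. A secondary point is that fingerprinting most naturally lower-bounds the error of an unconstrained mean estimate, whereas we need a statement about the $\ell_2$-ball-constrained iterate $\hat\theta$; the correlation formulation $\E\langle\hat\theta,v\rangle\le\tfrac12 R\sqrt{d}$ sidesteps this, since for $\rho$ in the hard regime no private algorithm can outperform the trivial estimator $\hat\theta\equiv 0$, whose worst-case-over-$v$ excess risk is exactly $a\rho R\sqrt{d}$.
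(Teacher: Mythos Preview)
Your high-level architecture---linear loss $\f(\theta;z)=-\langle\theta,z\rangle$ on $\Theta=\mathbb{B}_2^d(0,R)$, reduce SCO to recovering the direction of the mean, then lower bound that task under user-level DP---matches the paper. The substantive divergence is in how to extract the $\sqrt{m}$ factor for the privacy term, and this is exactly where your proposal has a gap.

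The paper does \emph{not} run a user-level fingerprinting argument on discrete data. Instead it takes the data distribution to be a (coordinate-wise truncated) Gaussian $\Gtr(\mu,\sigma^2 I_d,G/\sqrt{d})$, shows that the truncation shifts the mean only by $O(\sigma e^{-10d})$, and then invokes a \emph{sufficient-statistic reduction} (Proposition~\ref{prop:item-to-user}): for Gaussians with known covariance, the user average $\bar Z^{(u)}=\tfrac1m\sum_j Z_j^{(u)}$ is sufficient for $\mu$, so any user-level $(\eps,\delta)$-DP algorithm on $(\msf{N}(\mu,\sigma^2 I_d))^{\otimes m}$ can be simulated, with identical risk, by an item-level $(\eps,\delta)$-DP algorithm that sees only the $n$ averages $\bar Z^{(u)}\sim\msf{N}(\mu,\tfrac{\sigma^2}{m}I_d)$. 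This collapses the user-level problem to item-level Gaussian mean estimation with variance $\sigma^2/m$, for which the $\Omega(d\sigma/(n\sqrt m\,\eps))$ bound is read off from \cite{kamath2019privately}. The $\sqrt m$ appears exactly and for free, with no conditioning and no ``nice event''.

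Your route---discrete $\{\pm a\}$ coordinates and a direct user-level fingerprinting bound---loses precisely this lever: the empirical mean is \emph{not} sufficient for a Bernoulli bias, so there is no analogous exact reduction, and you are forced into the worst-case-vs.-typical-deviation issue you flag as ``the main obstacle''. Your proposed fix (condition on the event that all $n$ user averages concentrate to radius $O(a\sqrt{d/m})$, bound the complement crudely) is the right instinct for \emph{upper} bounds (and is indeed how Section~\ref{sec:sq} works), but it does not obviously survive inside a fingerprinting identity, which is an equality relating the algorithm's correlation with $v$ to its per-user sensitivity; conditioning the data breaks that identity and the algorithm is still permitted to behave arbitrarily on the conditioned support. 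So as written the privacy term is not proved. The cleanest repair is to switch to the Gaussian instance and use the sufficiency reduction, after which the item-level bound (fingerprinting or otherwise) is citable.
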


When $\normgrad = \Theta(\sigma \sqrt{d})$, the upper bound matches the lower
bound up to logarithmic factors. We present the algorithm and proof for
Theorem~\ref{thm:dp_sco_upper} in
Section~\ref{sec:sco_upper}. Theorem~\ref{thm:dp_sco_lower} is proved in
Section~\ref{sec:sco_lower}. %
\subsection{Upper bound: minimizing a sequence of regularized ERM
  problems} \label{sec:sco_upper}

We now present Algorithm~\ref{alg:phased_erm}, which achieves the upper bound of
Theorem~\ref{thm:dp_sco_upper}. It is similar in spirit to Phased
ERM~\citep{feldman2020private} and EpochGD~\citep{HazanKa11}, in that at each
round we minimize a regularized ERM problem with fresh samples and increased
regularization, initializing each round from the final iterate of the previous
round. This allows us to localize the optimum with exponentially increasing
accuracy without blowing up our privacy budget. We solve each round using
Algorithm~\ref{alg:wgd} to guarantee privacy and obtain an 
\emph{approximate} minimizer.
We show the guarantee in Corollary~\ref{coro:erm_sc} is enough to
achieve optimal rates. We provide the proof of 
Theorem~\ref{thm:dp_sco_upper}
in Appendix~\ref{app:sco} and present a sketch here.

\begin{algorithm}
  \caption{$\alg_{\sf PhasedERM}$: Phased ERM}
	\begin{algorithmic}[1]
          \REQUIRE Private dataset:
          $\cS =(S_1, \ldots, S_n) \in (\cZ^m)^n: n \times m$ i.i.d samples from
          $P$, %
          $\smooth$-smooth, convex loss function $\ell$, convex set
          $\Theta \subset \RR^d$,
          privacy parameters $\eps \leq 1, \delta \leq 1/n^2$, sub-Gaussian parameter $\sigma$.\\
          \STATE Set
          $T = \ceil{\log_2\prn{\tfrac{Gn\sqrt{m}\eps}{\sigma d}}}$,
          $\lambda = \sqrt{\tfrac{G\Gbar}{nm} + \tfrac{\sigma^2d^2}{n^2m\eps^2}
          } / R$ \FOR{$t=1$ to $T$\,} \STATE Set
          $n_t = \frac{n}{2^t}, \lambda_t = 4^t \lambda$.  \STATE Sample
          $\cS_t$, $n_t$ users that have not participated in previous rounds.
          Using Algorithm~\ref{alg:wgd}, compute an approximate minimizer
          $\what{\theta}_t$, to the accuracy of Corollary~\ref{coro:erm_sc}, for
          the objective\vspace{-8pt}
          \begin{equation}\label{eqn:erm_regularized}
            \vspace{-8pt}
            \ff_{\lambda_t, \what{\theta}_{t-1}}(\theta; \cS_t) = \frac{1}{m n_t} \sum_{u \in \cS_t}
            \sum_{j = 1}^m \ell(\theta, z_j^{(u)}) + \frac{\lambda_t}{2}
            \norm{\theta - \what{\theta}_{t-1}}_2^2.
            \vspace{-5pt}
          \end{equation}
          \ENDFOR

          \RETURN
          $\what{\theta}_{T}$.
	\end{algorithmic}
	\label{alg:phased_erm}
\end{algorithm}
\begin{proof}[Proof sketch of Theorem~\ref{thm:dp_sco_upper}]
  The privacy guarantee comes directly from the privacy guarantee of
  Algorithm~\ref{alg:wgd} and the fact that $\cS_t$ are non-overlapping\arxiv{
    (since $\sum_{t = 1}^T n_t = n \sum_{t = 1}^T \frac{1}{2^t} < n$)}.
  The proof for utility is similar to the proof of Theorem~4.8
  in~\cite{feldman2020private}. In round $t$ of Algorithm~\ref{alg:phased_erm},
  we consider the true minimizer $\theta_t^\ast$ and the approximate minimizer
  $\what{\theta}_t$. By stability~\citep{bousquet2002stability}, we can bound
  the generalization error of $\theta_t^\ast$ (see
  Proposition~\ref{prop:stability} in Appendix~\ref{app:sco}) and, by
  Corollary~\ref{coro:erm_sc}, we can bound
  $\E\norm{\what{\theta}_t - \theta^\ast_t}_2^2$. We finally choose
  $\crl{(\lambda_t, n_t)}_{t\le T}$ such that the assumptions of
  Corollary~\ref{coro:erm_sc} hold and to minimize the final error.
\end{proof}

\notarxiv{\vspace{-0.25cm}}
\subsection{Lower bound: SCO is harder than Gaussian mean 
estimation}\label{sec:sco_lower}

First of all, note that it suffices to prove the lower bounds in the homogeneous
setting as any level of heterogeneity only makes the problem
harder. Theorem~\ref{thm:dp_sco_lower} holds for $(\eps, \delta)$-user-level
DP---importantly, this is a setting for which lower bounds are generally more
challenging (we provide a related lower bound for $\eps$-user-level DP in
\arxiv{Section}\notarxiv{Appendix}~\ref{sec:finite-lb}). We present the proof in
Appendix~\ref{app:sco-lb} and a sketch here.

\begin{proof}[Proof sketch of Theorem~\ref{thm:dp_sco_lower}]
  \arxiv{As is often the case in privacy, }\notarxiv{T}\arxiv{t}he (constrained)
  minimax lower bound decomposes into a statistical rate and a privacy rate. The
  statistical rate is optimal (see, e.g.,~\cite{LevyDu19,AgarwalBaRaWa12}), thus
  we focus on the privacy rate. We consider linear losses \arxiv{\footnote{We truncate
    the data distribution appropriately so the losses remain individually
    Lipschitz.}} of the form $\f(\theta;z) = -\tri{\theta, z}$. We show that
  optimizing $\ff(\theta;P) = \E_P\brk{\f(\theta;Z)}$ over $\theta\in\Theta$ is
  harder than the mean estimation task for $P$. Intuitively,
  $\ff(\theta;P) = -\tri{\theta, \E Z}$ attains its minimum at
  $\theta^\ast = R\E\brk{Z} / \norm{\E\brk{Z}}_2$ and finding $\theta^\ast$
  provides a good estimate of (the direction of) $\E\brk{Z}$. We make this
  formal in Proposition~\ref{prop:sco-gaussian-mean}. Next, for Gaussian mean
  estimation, we reduce, in Proposition~\ref{prop:item-to-user}, user-level DP
  to item-level DP with lower variance by having each user contribute their
  sample average (which is a sufficient statistic). We conclude with the results
  of~\cite{kamath2019privately} (see Proposition~\ref{prop:dp-gauss-lb}) by
  proving in Corollary~\ref{cor:lb-direction} that estimating the direction of
  the mean with item-level privacy is hard.
\end{proof}

\arxiv{%
\section{Function Classes with Bounded Metric Entropy under Pure  
DP}\label{sec:warm-up}

We consider the general task of learning hypothesis class with finite metric
entropy (i.e., such that there exists a finite $\Delta$-cover under a certain
norm) and bounded loss under \emph{pure} user-level DP constraints.

For this setting, we present
Algorithm~\ref{alg:private-selection}, which we complement with an 
information-theoretic lower
bound.
As in the previous sections, we consider a sample set
$\mc{S} = \prn{S_1, \ldots, S_n}$, with
$S_u = \crl{z^{(u)}_j}_{j\in [m]}\subset\ZZ$.  We begin by considering 
the case
of a finite parameter space: for $K \in \N, K < +\infty$, we have
\begin{equation}
  \Theta = \crl*{\theta^{(1)}, \ldots, \theta^{(K)}}.
\end{equation}
For $0 \le B <\infty$, we denote
$\mc{F}_B \defeq \crl{\f\colon \Theta\times\ZZ \to \R: \norms{\ell}_\infty \le
  B}$ the set of $B$-bounded functions and $\Auserp$ the set $\eps$-user-level
DP estimators from $\cZ^n$ to $\Theta$, the goal of this section is to elicit
the constrained minimax rate~\citep{Yu97, BarberDu14, acharya2020differentially}
\begin{equation*}
  \mathfrak{M}^{\msf{user}}_{m, n}(\Theta, \mc{F}_B, \eps) \defeq
  \sup_{\ZZ, \mc{P} \subset \mc{P}(\ZZ)} \inf_{\A \in \Auserp}
  \sup_{\f \in \mc{F}_B, P\in\mc{P}}
  \E_{\cS \simiid \prn{P^m}^n}\brk*{\ff\prn*{\A(\cS);P} - \inf_{\theta \in 
  \Theta}\ff(\theta;P)}.
\end{equation*}
We start with providing the estimator, which
combines the private mean estimator of Section~\ref{sec:sq} with the
private selection techniques of~\cite{liu2019private}. Given a collection
of $\eps$-DP mechanisms, the latter provides an $\eps$-DP way to find
an (approximate) minimum by sampling from each mechanism at random
\emph{with the same data} and returning the maximum of the values
observed. In our setup, each mechanism $\sf{A}_k$ will be a private
release of $\ff(\theta^{(k)}; \mc{S})$.

\subsection{Combining mean estimation and private 
selection}\label{sec:finite-ub}

Our first step is to show that the conditions of Section~\ref{sec:sq}
are met, that is, the data are concentrated with high probability.

\begin{lemma}\label{lem:concentration-finite}
  Let $\cS = (S_1, \ldots, S_n) \simiid (P^m)^n$ and $\alpha \in \prn{0, 1}$. With probability
  greater than $1-\alpha$, it holds that
  \begin{equation}\label{eq:concentration-finite}
    \max_{k \in K} \max_{u \in \brk{n}} \abs*{\ff(\theta^{(k)}; S_u)
      - \ff(\theta^{(k)};P)} \le \frac{B}{2}\sqrt{\frac{\log(\abs{\Theta} \cdot n) + \log(2/\alpha)}{m}}.
  \end{equation}
  In other words, $(\cS, \cQ_\Theta)$ is
  $(B/(2\sqrt{m})\sqrt{\log(2Kn/\alpha)}, \alpha)$ uniformly concentrated where
  $\cQ_\Theta = \crl{\ff(\theta; \cdot): \theta\in\Theta}$.
\end{lemma}

\begin{proof}
  The proof is straightforward: for a fixed $\theta^{(k)} \in \Theta$ and
  $u \in \brk{n}$, the random variable $\ff(\theta^{(k)};S_u)$ is
  $\tfrac{B^2}{4m}$-sub-Gaussian around its mean $\ff(\theta^{(k)};P)$. A union
  bound over the samples and parameters concludes the proof.
\end{proof}
Conditioned on that event, the data are well concentrated and the results of
Theorem~\ref{thm:winsorized} apply. We now describe the algorithm and then go on
to prove privacy and utility guarantees. We call it ``idealized'' because it is 
not
computationally efficient. Roughly, the running time scales as
$\abs{\Theta} / \alpha$ to obtain good accuracy with probability greater than
$1-\alpha$. In certain problems, $\abs{\Theta}$ can be
exponential in the dimension (e.g., the Lipschitz stochastic optimization 
problem considered in Remark~\ref{rm:l_infity}),
which makes it computationally intractable.

\begin{algorithm}
  \caption{Idealized estimator for learning with bounded losses}
  \begin{algorithmic}[1]\label{alg:private-selection}
    \STATE \textbf{Input:} Privacy parameter $\eps$, probability of stopping
    $\gamma \in (0, 1]$, concentration parameter $\tau > 0$, finite parameter set $\Theta$,
    dataset $\cS = \crl{S_1, \ldots, S_n}$ \STATE Denote
  \begin{equation*}
    \msf{A}_k(S) \defeq \textbf{WinsorizedMean1D}\prn*{\crl{\ff(\theta^{(k)};S_u)}_{u\in\brk{n}}, \eps / 3, \tau}
  \end{equation*}
  
  \STATE Initialize $\mc{T} = \emptyset$.  \FOR{$t = 0, \ldots, \infty$} \STATE
  Sample $J_t \sim \msf{Uniform}(\crl*{1, \ldots, \abs{\Theta}})$.  \STATE
  Sample $V_t \sim \msf{A}_{J_t}(\mc{S})$.  \STATE Update
  $\mc{T} \to \mc{T} \cup \crl{(J_t, V_t)}$.  \STATE Sample
  $w_t \sim \msf{Bernoulli}(\gamma)$, if $w_t = 1$, break; \ENDFOR \STATE
  $t^\ast \to \argmin_{t}V_t$.  \RETURN $(J_{t^\ast}, V_{t^\ast})$.
\end{algorithmic}
\end{algorithm}

We state the privacy and utility of our algorithm. The result follows
from the utility guarantees of the mean estimator 
(Algorithm~\ref{alg:winsorized}) and the guarantees of private selection in
\cite{liu2019private}. 
\begin{restatable}{theorem}{restateFiniteHyp}\label{thm:finite-hyp}
  Let $\alpha \in (0, 1]$ and let us consider
  Algorithm~\ref{alg:private-selection} with $q = 1/K = 1/\abs{\Theta}$ and
  $\tau = \tfrac{B}{2}\sqrt{(\log(Kn) + \log(10/\alpha) / m}$. Assuming that
  $n \ge \frac{8}{\eps}\log\prn*{\tfrac{25\log(5/\alpha)}{\alpha^2} \cdot
  	\tfrac{KB}{\tau}}$,
  the following holds:
  \begin{enumerate}[label=(\roman*)]
  \item\label{item:privacy-priv-selection} The mechanism of
    Algorithm~\ref{alg:private-selection} is $\eps$-user-level DP.
  \item\label{item:utility-priv-selection} Let $J_{t^*}$ be the output of
    Algorithm~\ref{alg:private-selection}, with probability greater than
    $1-\alpha$ it achieves the following utility
            \begin{equation}
      \ff(\theta^{(J_{t^\ast})}; \mc{S}) - 
      \inf_{\theta'\in\Theta}\ff(\theta';\mc{S})
      \le 8\frac{B}{n\sqrt{m}\eps}\log\prn*{25 K \cdot 
      \frac{\log(5/\alpha)}{\alpha^2}}
      \sqrt{\log(Kn) + \log(10/\alpha)}.
      \end{equation}
    \end{enumerate}
  \end{restatable}

  \arxiv{
  \begin{proof}[Proof sketch]
    The privacy is immediate since we select the mechanisms uniformly at
    random. We choose the parameter $\gamma$ in
    Algorithm~\ref{alg:private-selection} such that with high probability the
    algorithm queries the \emph{best} parameter $\theta^{(k^\ast)}$. Thus, if
    the algorithm returns any other parameter $\theta^{(J_{t^\ast})}$, it must
    be that the added noise on $V_{J_{t^\ast}}$ and $V_{k^\ast}$ compensate for
    their difference in utility. Since the noise is (with high-probability)
    i.i.d. Laplace noise, we bound the size of the noise for the length of the
    game which gives the final result.
  \end{proof}
}
\notarxiv{

\begin{proof} We first state the privacy guarantee followed by the utility
    guarantee.

    \paragraph{Proof of~\ref{item:privacy-priv-selection}} Since each
    $\msf{A}_k$ is $\eps/3$-user-level DP, Theorem~3.2 in~\cite{liu2019private}
    guarantees that the output of Algorithm~\ref{alg:private-selection} is
    $\eps$-user-level DP.

    \paragraph{Proof of~\ref{item:utility-priv-selection}} The proof is adapted
    from Theorem~5.2 in~\cite{liu2019private}. First of all, with probability
    greater than $1-\alpha_1$, as we prove in
    Lemma~\ref{lem:concentration-finite}, the data are uniformly concentrated for
    all $\theta^{(k)}$, meaning
      \begin{equation*}\label{eq:concentration-finite}
        \max_{k \in K} \max_{u \in \brk{n}} \abs*{\ff(\theta^{(k)}; S_u)
          - \ff(\theta^{(k)};P)} \le
        \crl*{\frac{B}{2}\sqrt{\frac{\log(\abs{\Theta} \cdot n) + 
        \log(2/\alpha_1)}{m}} \eqdef \tau}.
  \end{equation*}
  We condition on this event (Event 1) for the rest of the proof. Let
  $\alpha_1 \in (0, 1]$ and $\gamma \in (0, 1]$. Let $\tstop$ denotes the 
  time that the algortihm exists the loop, which is number of queries the 
  algorithm makes. 
  
  Let us denote $k^\ast$, the best hypothesis in $\Theta$ i.e.
  \begin{equation*}
  	k^\ast = \argmin_{k \le K} \ff(\theta^{(k)}; \mc{S}).
  \end{equation*}
  We choose $\gamma$ such that $k^\ast$ is queried with probability 
  greater than
  $1-\alpha_1$, i.e., if $E_{\neg k^*}$ is the event (denote $\neg E_{\neg 
  k^*}$ as Event 2) that the 
  algorithm 
  finishes
  without querying $k^*$, we choose $\gamma$ such that
  $\P(E_{\neg k^*}) \le \alpha_1$. More precisely,
  \begin{align*}
	\P(E_{\neg k^*})
	& = \sum_{l = 1}^\infty
	\P(E_{\neg k^*}|\tstop = l)
	\P(\tstop = l)  \\
	& = \sum_{l=1}^\infty \prn*{1-\frac1K}^l \cdot 
	\prn*{1-\gamma}^{l-1}\cdot \gamma \\
	&= 
	\prn*{1-\frac1K}\gamma\sum_{l=0}^\infty \brk*{\prn*{1-\frac1K} 
		\prn*{1-\gamma}}^l \\
	& = \frac{\prn*{1-\frac1K}\gamma}{1 - 
		\prn*{1-\frac1K} 
		\prn*{1-\gamma}}.
\end{align*}

  Choosing $\gamma = \alpha_1 / K$ guarantees that
  $\P(E_{\neg k^*}) \le \alpha_1$.   
  Let $L \defeq \tfrac{\log(1/\alpha_1)}{\gamma} = 
  \log(1/\alpha_1)\tfrac{K}{\alpha_1}$, we 
  have
 
   \begin{align*}
 	\P(\tstop > L)  = 
 	\P(\omega_1 = \ldots = \omega_L = 0)  = (1-\gamma)^L  \le \exp(-L 
 	\gamma) = \alpha_1.
 \end{align*}

  Hence with probability at least $1 - \alpha_1$, the algorithm ends in less 
  than
  $L$ throws (Event 3). Conditioned on this event, by 
  Theorem~\ref{thm:winsorized} 
  and union bound, with 
  probability greater than
  $1 - L \cdot \tfrac{B}{\tau}\exp(-n\eps / 8)$, the output of $\msf{A}_{J_t}$
  for all $t \le \tstop$ is
  \begin{equation*}
    \msf{A}_{J_t}(S) = \ff(\theta^{(J_t)}; \cS) + \msf{Lap}\prn*{\frac{8\tau}{n\eps}}
    = \frac{1}{m\cdot n}\sum_{j \in \brk{m}, u \in \brk{n}}\f\prn*{\theta^{(J_t)};z^{(u)}_j}
    + \msf{Lap}\prn*{\frac{8\tau}{n\eps}},
  \end{equation*}
which we denote as Event 4.
  For a Laplace distribution, computing the tail gives that
  $\P(\abs{\msf{Lap}(\lambda)} \ge u) \le \exp(-u / \lambda)$ and with a union
  bound and change of variables it holds that if $Y_1, Y_2, \ldots, Y_L 
  \simiid \msf{Lap}(\frac{8\tau}{n\eps})$, then
  with probability greater than $1-\alpha_1$
  \begin{equation*}
    \max_{i=1,\ldots L} \abs{Y_i} \le 
    \frac{8\tau}{n\eps}\log\prn*{\frac{L}{\alpha_1}}.
  \end{equation*}
  In other words, except with probability $\alpha_1$, the noise is bounded 
  by
  $\tfrac{8\tau}{n\eps}\log(L / \alpha_1)$ (Event 5). Conditioned on all 
  these events, the
  parameter $\theta^{(J_{t^\ast})}$ that the algorithm outputs is 
  sub-optimal by
  at most $\tfrac{16\tau}{n\eps}\log(L / \alpha_1)$ as in the worst-case the 
  noise
  is $+\tfrac{8\tau}{n\eps}\log(L / \alpha_1)$ for $J_{t^*}$ and
  $-\tfrac{8\tau}{n\eps}\log(L / \alpha_1)$ for $k^\ast$.
    Setting %
    $\alpha_1 = \alpha/5$ and as we assume that
  $n \ge \frac{8}{\eps}\log\prn*{\tfrac{25\log(5/\alpha)}{\alpha^2} \cdot
  	\tfrac{KB}{\tau}}$, we conclude the proof by taking a union bound over all 
  	5 events.
  
\end{proof}

 }

\begin{corollary}
  Assume
  $n \ge \Omloglog(1) \, \tfrac{1}{\eps}\max\crl*{\tfrac{1}{Km},
    \log\prn{Km}}$. It holds that
  \begin{equation}
    \minimaxpurebdd =  \Olog\prn*{B\crl*{\sqrt{\frac{\log K}{m\cdot n}}
      + \frac{\log^{3/2}\prn*{Knm\eps}}{n\sqrt{m}\eps}}},
\end{equation}
where $\Olog, \Omloglog$ ignores only numerical constants and log-log 
factors in 
this case.
\end{corollary}

\begin{proof} We get the result directly from
  Theorem~\ref{thm:finite-hyp}, by setting
  $\alpha = \log K/(n\sqrt{m}\eps)$, applying standard uniform
  convergence results for bounded losses with finite parameter set
  (Hoeffding bound) and ignoring log-log factors.
\end{proof}

\begin{corollary}[Parameter sets with finite metric entropy] Let us
  further assume that our loss functions are $G$-Lipschitz with
  respect to some norm $\norm{\cdot}$ with (finite) covering number
  $\msf{N}_{\norm{\cdot}}(\Theta, \Delta)$---i.e. there exists a set
  $\Gamma_{\norm{\cdot},\Delta} \subset \Theta$ such that
  $\abs{\Gamma_{\norm{\cdot}, \Delta}} =
  \msf{N}_{\norm{\cdot}}(\Theta, \Delta)$ and for all
  $\theta \in \Theta$, there exists
  $\tau \in \Gamma_{\norm{\cdot}, \Delta}$ such that
  $\norm{\theta - \tau} \le \Delta$. In this case, for any
  $\Delta > 0$ and applying Algorithm~\ref{alg:private-selection} with
  parameter set $\Gamma$ guarantees that
  \begin{multline*}
    \mathfrak{M}^{\msf{user}}_{m, n}(\Theta, \mc{F}_{B, (G,
      \norm{\cdot})}, \eps) 
    = \tilde{O}(1) \, \inf_{\Delta > 0}\crl*{B\brk*{\sqrt{\frac{\log
            \msf{N}_{\norm{\cdot}}(\Theta, \Delta)}{m\cdot n}} +
        \frac{\log^{3/2}\prn*{\msf{N}_{\norm{\cdot}}(\Theta,
            \Delta)nm\eps}}{n\sqrt{m}\eps}} + G\Delta}.
\end{multline*}
\end{corollary}

\begin{remark} \label{rm:l_infity}For
  $\norm{\cdot} = \ell_2, \Theta=\mathbb{B}^d_\infty(0, 1)$ and
  setting
  $\Delta = \tfrac{B}{G}\crl*{\sqrt{d/(mn)} +
    d^{3/2}/(n\eps\sqrt{m})}$, we directly get
  \begin{equation*}
    \mathfrak{M}^{\msf{user}}_{m, n}(\mathbb{B}^d_\infty(0, 1), \mc{F}_{B, (G, \ell_2)}, \eps)
    =  \Olog\crl*{B\sqrt{\frac{d}{m\cdot n}} + B\frac{d^{3/2}}{n\sqrt{m}\eps}}.
  \end{equation*}
  The first term, which corresponds to the statistical rate, is optimal (see 
  e.g. Proposition~2
  in~\cite{LevyDu19}). Whether the privacy rate is optimal remains open.
\end{remark}

\subsection{Information-theoretic lower bound}\label{sec:finite-lb}

We now prove a lower bound on
$\mathfrak{M}^{\msf{user}}_{m, n}(\Theta, \mc{F}_B, \eps)$ when
$\abs{\Theta} = K < \infty$. We follow the standard machinery of reducing
estimation to testing \citep{Yu97, Wainwright19} but under privacy
constraints \citep{BarberDu14, acharya2020differentially}.
\begin{restatable}[Lower bound for finite-hypothesis class]
  {theorem}{restateFiniteHypLb}\label{thm:finite-hyp-lb} Let
  $K, m, n \in \N, K < \infty, \eps \in \R_+,$ and $0\le B < \infty$. Assume
  $\log_2K \ge 32\log 2$ and
  $n\ge \log_2K \max\crl{\tfrac{1}{192\sqrt{m}\eps}, \tfrac{1}{96m}}$, there
  exists a sample space $\cZ$ and parameter set $\Theta$ with $\abs{\Theta} = K$
  and $\abs{\cZ} = \ceil{\log_2 K}$ such that the following holds
  \begin{equation}
    \mathfrak{M}^{\msf{user}}_{m, n}(\Theta, \mc{F}_B, \eps) = \Omega \prn*{ 
    B\sqrt{\frac{\log \abs{\Theta}}{m\cdot n}}
    +B\frac{\log \abs{\Theta}}{n \sqrt{m} \epsilon}}.
\end{equation}
\end{restatable}

We detail the proof of the theorem \notarxiv{below}\arxiv{in
  Appendix~\ref{app:proof-finite-lb}}. The proof relies on a (standard)
generalization of Fano's method, whcih reduces optimization to multiple
hypothesis tests. We refer to the results of~\cite{acharya2020differentially} to
obtain the lower bounds in the case of a constrained---in this case,
$\epsilon$-DP---estimators. For the user-level case, we simply consider that
samples from an $m$-fold product of measures---the separation does not change but
the KL-divergence increase by at most a $m$ factor and TV-distance increase by
at most a $\sqrt{m}$ factor thus yielding the final answer.

\notarxiv{
\begin{proposition}[{\citet[][Corollary~4]{acharya2020differentially}}]\label{prop:private-fano}
  Let $\mc{P}$ be a collection of distributions over a common sample space $\ZZ$
  and a loss function $\f:\Theta\times\ZZ \to \R_+$. For $P, Q\in\mc{P}$, define
  \begin{equation*}
    \mathsf{sep}_{\ff}(P, Q; \Theta) := \sup\left\lbrace \delta \ge 0
      \;\middle|\; \mbox{for all~}
      \theta\in\Theta, \begin{array}{c} \ff(\theta, P) \leq \delta ~\mbox{implies}~
                         \ff(\theta, Q) \geq \delta \\
                         \ff(\theta, Q) \leq \delta ~\mbox{implies}~
                         \ff(\theta, P) \geq \delta
                       \end{array}\right\rbrace.
                   \end{equation*}

                   Let $\mc{V}$ be a finite index set and
                   $\mc{P}_{\mc{V}} \defeq \crl*{P_v}_{v\in\mc{V}}$ be a
                   collection of distributions contained in $\mc{P}$ such that
                   for $\Delta \ge 0$,
                   $\min_{v\neq v'}\msf{sep}(P_v, P_{v'}, \Theta) \le \Delta$.
                   Then
                   \begin{equation*}
                     \minimaxitem \ge
                     \frac{\Delta}{4}\max\crl*{1 - \frac{I(X_1^n;V) + \log 2}{\log \abs{\mc{V}}},
                       \min\crl*{1, \frac{\abs{\mc{V}}}{\exp(c_0 n\eps \mathrm{d}_\msf{TV}(\mc{P}_\mc{V}))}}},
                   \end{equation*}
                   where
                   $V \sim \msf{Uniform}(\mc{V}), c_0 = 10,
                   \mrm{d}_\msf{TV}(\mc{P}_\mc{V}) \defeq \max_{v\neq
                     v'}\norm{P_v - P_{v'}}_\msf{TV}$ and $I(X; Y)$ is the
                   (Shannon) mutual information.
                 \end{proposition}

\begin{proof}[Proof of Theorem~\ref{thm:finite-hyp-lb}] We follow the 
standard
    steps: we first compute the separation, we bound the testing error for any
    (constrained) estimator in the item-level DP case (with
    Proposition~\ref{prop:private-fano}) and finally, we show how to adapt the
    proof to obtain the user-level DP lower bound.
    \paragraph{Separation} For simplicity, assume $K = 2^d$, if not, the problem
    is harder than for $\underline{K} = 2^{\floor{\log_2K}} \le K$ which is of
    the same order. Let us define the sample space $\ZZ$, the parameter set
    $\Theta$ and the loss function $\f$ we consider.

  We define
  \begin{equation*}
    \ZZ = \Theta \defeq \crl{-1, +1}^d\mbox{~~and~~}
    \f(\theta; z) \defeq B\sum_{j \le d}\mathbf{1}_{\theta_j = z_j}.
  \end{equation*}

  We consider $\mc{V}$ an $d/2$-$\ell_1$ packing of $\crl{\pm 1}^d$ of size at
  least $\exp(d/8)$---which the Gilbert-Varshimov bound (see e.g.,
  {\cite[][Ex. 4.2.16]{Vershynin19}}) guarantees the existence of---and consider
  the following family of distribution $\mc{P} = \crl{P_v: v\in\mc{V}}$ such
  that if $X \sim P_v$ then
  \begin{equation}\label{eqn:bcube_prob}
    X = \begin{cases}
      v_j e_j & \mbox{~~with probability~~} \frac{1+\Delta}{2d}\\
      -v_j e_j & \mbox{~~with probability~~} \frac{1-\Delta}{2d}.
    \end{cases}
  \end{equation}
  For $\theta\in\Theta$, we have that
  \begin{equation*}
    \ff(\theta;P_v) = \E_{P_v}\brk*{B\sum_{j\le d} \mathbf{1}_{\theta_j = Z_j}} 
    =
    B\sum_{j \le d}\frac{1+\theta_j v_j \Delta}{2d}.
  \end{equation*}
  Naturally, $\ff(\theta;P_v)$ achieves its minimum at $\theta_v^* = -v$ such
  that $\inf_{\theta'\in\theta}\ff(\theta;P_v) = B\tfrac{1-\Delta}{2}$. We now
  compute the separation by noting that
  \begin{equation}\label{eqn:bcube_loss}
    \mathsf{sep}_{\ff}(P_v, P_{v'}, \Theta) \ge \frac{1}{2} \min_{\theta'\in\Theta}
    \crl*{\ff(\theta'; P_v) + \ff(\theta';P_{v'}) - \ff(\theta_v^*;P_v) - \ff(\theta_{v'}^*; P_{v'})}.
  \end{equation}
  A quick computation shows that $\msf{sep}_{\ff}(P_v, P_{v'}, \Theta) \ge 
  \tfrac{B\Delta}{8}$ by
  noting that $\mrm{d}_{\msf{Ham}}(v, v') \ge d/4$.

  \paragraph{Obtaining the item-level lower bound} We can now use the results of
  Proposition~\ref{prop:private-fano}. We have that
  $\min_{v\neq v'}\msf{sep}_{\ff}(P_v, P_{v'}, \Theta) \ge
  \tfrac{B\Delta}{8}$. The identity
  $\mathrm{D}_{\mathrm{KL}}(P_v, P_{v'}) = 
  \Delta\log\tfrac{1+\Delta}{1-\Delta}
  \le 3\Delta^2$ implies that $I(Z^n;V) \le 3n\Delta^2$. Similarly, 
  Pinsker's inequality guarantees that
  \begin{equation*}
    \mrm{d}_\msf{TV} \le \sqrt{\frac{1}{2}\max_{v\neq v'}
      \mrm{D}_\mrm{KL}(P_v, P_{v'})} \le \sqrt{3/2}\Delta.
  \end{equation*}
  
  We put everything together and it holds that for $\Delta \in \brk{0, 1}$,
  \begin{equation}
    \minimaxitem \ge \frac{B\Delta}{32}\max\crl*{1 - \frac{3n\Delta^2 + 
    \log 2}{d/8},
      \min\crl*{1, \frac{\exp(d/8)}{\exp(30n\eps \Delta)}}}.
  \end{equation}
  Since $d \ge 32\log 2$, $\Delta = \sqrt{d/(96n)}$ guarantees that
  $1 - \frac{3n\Delta^2 + \log 2}{d/8} \ge 1/2$. On the other hand, setting
  $\Delta = \tfrac{5}{960}\tfrac{d}{n\eps}$, guarantees that
  $\min\crl*{1, \frac{\exp(d/8)}{\exp(30n\eps \Delta)}} \ge 1/2$. The 
  assumption
  on $n$ guarantees that these two values are in $\brk{0, 1}$ and thus setting
  $\Delta^* = \max\crl*{{\sqrt{d/(96n)}}, \tfrac{1}{192}\tfrac{d}{n\eps}}$ 
  which
  implies that
  \begin{equation*}
    \minimaxitem \ge \frac{B}{32}\crl*{\sqrt{\frac{d}{96n}} + \frac{1}{192}\frac{d}{n\eps}}.
  \end{equation*}

  \paragraph{Concluding for user-level DP} Let $m\in \N, m\ge 1$. For the
  user-level DP lower bound, the proof remains the same except that the
  collection $\mc{P}_{\mc{V}}$ becomes $\crl{P_v^m}_{v\in\mc{V}}$ i.e. the
  $m$-fold product distribution of $P_v$. The separation remains exactly the
  same but we now have
  \begin{equation*}
    \mrm{D}_\mrm{KL}(P_v^m, P_{v'}^m) \le 
    3m\Delta^2\mbox{~~and~~}\mrm{d}_\msf{TV}(\mc{P}_\mc{V})
    \le \sqrt{\frac{3m}{2}}\Delta.
  \end{equation*}
  Under the assumption
  $\Delta^* = \max\crl*{{\sqrt{d/(96mn)}},
    \tfrac{1}{192}\tfrac{d}{n\sqrt{m}\eps}}$ is less than $1$ and thus concludes
  the proof.
\end{proof}
 }

Note, the upper bound of Theorem~\ref{thm:finite-hyp} and the lower bound above
match only up to $\sqrt{\log K}$. Given that $K$ can be exponential in
the dimension---e.g. in the case of $\Theta$ being a cover of an
$\ell_p$ ball---the bound is only tight for ``small'' hypothesis
class. However,  it seems this
extra-factor cannot be removed using the techniques we present in this 
paper, as we need to both obtain uniform
concentration and bound the maximum of i.i.d. noise over $K$
samples---both of which are tight. We leave the problem of finding an
optimal estimator for this problem to future work.

}

\arxiv{%

\ignore{
In this work, we explore the
fundamental limits of learning under user-level privacy
constraints. Importantly, we provide practical algorithms with significantly
improved privacy cost in the regime where the number of samples per user
$m \gg 1$. However, the picture is not yet complete; we sketch out some possible
directions for future work.
First, all of our utility results hinge on the assumption that all users sample
their data from distributions $\crl{P_u}_{u\in\brk{n}}$ such that
$\max_{u, v}\dtv{P_u}{P_v}$ is polynomially small. Establishing utility
guarantees in the case of stronger heterogeneity remains an open question and is
relevant to a number of practical settings (e.g., users from different
geographical regions or who speak different languages). While our algorithms
remain user-level private regardless of heterogeneity, establishing risk bounds
requires further research.
Secondly, our optimization results require that the stochastic gradient be
sub-Gaussian. While the literature sometimes considers this assumption, it is
much less common than assuming that the stochastic gradients are almost surely
bounded in $\ell_2$ norm. Understanding minimax rates in this setting is an
important open question.
Finally, our work focuses on explicit information-theoretic limits but pays
less attention to the runtime of our algorithms. For example, in the case of SCO, our
algorithm runs in $\min\crl{(nm)^{3/2}, n^2m^{3/2}/ \sqrt{d}}$ time, while achieving
the optimal item-level private rate requires at most
$\min\crl{nm, (nm)^2 / d}$ time~\cite{bassily2019private}. Developing faster
algorithms in these settings is theoretically interesting but also
practically very meaningful as user-level privacy becomes widespread.}

\section*{Discussion}

In this work, we explore the fundamental limits of learning under user-level
privacy constraints. Importantly, we provide practical algorithms with
significantly improved privacy cost in the regime where the number of samples
per user $m \gg 1$. However, our work provides generalization guarantees under a
limited heterogeneity assumption. Extending our work to more heterogeneous
settings is an interesting research direction. Secondly, our work focuses on
establishing information-theoretic limits and we do not optimize the runtime of
our algorithms. For example, in the case of SCO, our algorithm runs in
$\min\crl{(nm)^{3/2}, n^2m^{3/2}/ \sqrt{d}}$ time, while achieving the optimal
item-level private rate requires at most $\min\crl{nm, (nm)^2 / d}$
time~\cite{bassily2019private}. Developing faster algorithms in these settings
is a possible future direction.

\notarxiv{\section*{Potential negative societal impact} Our work is theoretical
  in nature and we do not foresee major direct negative societal
  consequences. Because of the growing prevalence of data collection from all
  sources (mobile, browser, medical records etc.), providing meaningful
  guarantees---such as user-level DP---while preserving adequate accuracy is an
  important direction of research. Our work suffers from the same potential
  negative impact as any work in the broad differential privacy area in two
  ways:
  first, a simple way
  to guarantee privacy is to limit data collection or delete data the users
  provided in the past.
  Second, the guarantees we provide are contingent on careful choices of $\eps$
  and $\delta$ as well as rigorous and independent methodologies for evaluating
  the privacy of deployed models.  }

}

\arxiv{
  \subsection*{Acknowledgments}
  The authors would like to thank Hilal Asi and Karan Chadha for comments on an
  earlier draft as well as Yair Carmon, Peter Kairouz, Gautam Kamath, Sai
  Praneeth Karimireddy, Thomas Steinke and Sebastian Stich, for useful
  discussions and pointers to very relevant references.  }
\notarxiv{  \subsection*{Acknowledgments}
	The authors would like to thank Hilal Asi and Karan Chadha for 
	comments on an
	earlier draft as well as Yair Carmon, Peter Kairouz, Gautam Kamath, Sai
	Praneeth Karimireddy, Thomas Steinke and Sebastian Stich, for useful
	discussions and pointers to very relevant references.
}

\bibliographystyle{abbrvnat}

\newpage

\clearpage

\notarxiv{%

\ignore{
In this work, we explore the
fundamental limits of learning under user-level privacy
constraints. Importantly, we provide practical algorithms with significantly
improved privacy cost in the regime where the number of samples per user
$m \gg 1$. However, the picture is not yet complete; we sketch out some possible
directions for future work.
First, all of our utility results hinge on the assumption that all users sample
their data from distributions $\crl{P_u}_{u\in\brk{n}}$ such that
$\max_{u, v}\dtv{P_u}{P_v}$ is polynomially small. Establishing utility
guarantees in the case of stronger heterogeneity remains an open question and is
relevant to a number of practical settings (e.g., users from different
geographical regions or who speak different languages). While our algorithms
remain user-level private regardless of heterogeneity, establishing risk bounds
requires further research.
Secondly, our optimization results require that the stochastic gradient be
sub-Gaussian. While the literature sometimes considers this assumption, it is
much less common than assuming that the stochastic gradients are almost surely
bounded in $\ell_2$ norm. Understanding minimax rates in this setting is an
important open question.
Finally, our work focuses on explicit information-theoretic limits but pays
less attention to the runtime of our algorithms. For example, in the case of SCO, our
algorithm runs in $\min\crl{(nm)^{3/2}, n^2m^{3/2}/ \sqrt{d}}$ time, while achieving
the optimal item-level private rate requires at most
$\min\crl{nm, (nm)^2 / d}$ time~\cite{bassily2019private}. Developing faster
algorithms in these settings is theoretically interesting but also
practically very meaningful as user-level privacy becomes widespread.}

\section*{Discussion}

In this work, we explore the fundamental limits of learning under user-level
privacy constraints. Importantly, we provide practical algorithms with
significantly improved privacy cost in the regime where the number of samples
per user $m \gg 1$. However, our work provides generalization guarantees under a
limited heterogeneity assumption. Extending our work to more heterogeneous
settings is an interesting research direction. Secondly, our work focuses on
establishing information-theoretic limits and we do not optimize the runtime of
our algorithms. For example, in the case of SCO, our algorithm runs in
$\min\crl{(nm)^{3/2}, n^2m^{3/2}/ \sqrt{d}}$ time, while achieving the optimal
item-level private rate requires at most $\min\crl{nm, (nm)^2 / d}$
time~\cite{bassily2019private}. Developing faster algorithms in these settings
is a possible future direction.

\notarxiv{\section*{Potential negative societal impact} Our work is theoretical
  in nature and we do not foresee major direct negative societal
  consequences. Because of the growing prevalence of data collection from all
  sources (mobile, browser, medical records etc.), providing meaningful
  guarantees---such as user-level DP---while preserving adequate accuracy is an
  important direction of research. Our work suffers from the same potential
  negative impact as any work in the broad differential privacy area in two
  ways:
  first, a simple way
  to guarantee privacy is to limit data collection or delete data the users
  provided in the past.
  Second, the guarantees we provide are contingent on careful choices of $\eps$
  and $\delta$ as well as rigorous and independent methodologies for evaluating
  the privacy of deployed models.  }

}

\appendix

\notarxiv{%
\section{Function Classes with Bounded Metric Entropy under Pure  
DP}\label{sec:warm-up}

We consider the general task of learning hypothesis class with finite metric
entropy (i.e., such that there exists a finite $\Delta$-cover under a certain
norm) and bounded loss under \emph{pure} user-level DP constraints.

For this setting, we present
Algorithm~\ref{alg:private-selection}, which we complement with an 
information-theoretic lower
bound.
As in the previous sections, we consider a sample set
$\mc{S} = \prn{S_1, \ldots, S_n}$, with
$S_u = \crl{z^{(u)}_j}_{j\in [m]}\subset\ZZ$.  We begin by considering 
the case
of a finite parameter space: for $K \in \N, K < +\infty$, we have
\begin{equation}
  \Theta = \crl*{\theta^{(1)}, \ldots, \theta^{(K)}}.
\end{equation}
For $0 \le B <\infty$, we denote
$\mc{F}_B \defeq \crl{\f\colon \Theta\times\ZZ \to \R: \norms{\ell}_\infty \le
  B}$ the set of $B$-bounded functions and $\Auserp$ the set $\eps$-user-level
DP estimators from $\cZ^n$ to $\Theta$, the goal of this section is to elicit
the constrained minimax rate~\citep{Yu97, BarberDu14, acharya2020differentially}
\begin{equation*}
  \mathfrak{M}^{\msf{user}}_{m, n}(\Theta, \mc{F}_B, \eps) \defeq
  \sup_{\ZZ, \mc{P} \subset \mc{P}(\ZZ)} \inf_{\A \in \Auserp}
  \sup_{\f \in \mc{F}_B, P\in\mc{P}}
  \E_{\cS \simiid \prn{P^m}^n}\brk*{\ff\prn*{\A(\cS);P} - \inf_{\theta \in 
  \Theta}\ff(\theta;P)}.
\end{equation*}
We start with providing the estimator, which
combines the private mean estimator of Section~\ref{sec:sq} with the
private selection techniques of~\cite{liu2019private}. Given a collection
of $\eps$-DP mechanisms, the latter provides an $\eps$-DP way to find
an (approximate) minimum by sampling from each mechanism at random
\emph{with the same data} and returning the maximum of the values
observed. In our setup, each mechanism $\sf{A}_k$ will be a private
release of $\ff(\theta^{(k)}; \mc{S})$.

\subsection{Combining mean estimation and private 
selection}\label{sec:finite-ub}

Our first step is to show that the conditions of Section~\ref{sec:sq}
are met, that is, the data are concentrated with high probability.

\begin{lemma}\label{lem:concentration-finite}
  Let $\cS = (S_1, \ldots, S_n) \simiid (P^m)^n$ and $\alpha \in \prn{0, 1}$. With probability
  greater than $1-\alpha$, it holds that
  \begin{equation}\label{eq:concentration-finite}
    \max_{k \in K} \max_{u \in \brk{n}} \abs*{\ff(\theta^{(k)}; S_u)
      - \ff(\theta^{(k)};P)} \le \frac{B}{2}\sqrt{\frac{\log(\abs{\Theta} \cdot n) + \log(2/\alpha)}{m}}.
  \end{equation}
  In other words, $(\cS, \cQ_\Theta)$ is
  $(B/(2\sqrt{m})\sqrt{\log(2Kn/\alpha)}, \alpha)$ uniformly concentrated where
  $\cQ_\Theta = \crl{\ff(\theta; \cdot): \theta\in\Theta}$.
\end{lemma}

\begin{proof}
  The proof is straightforward: for a fixed $\theta^{(k)} \in \Theta$ and
  $u \in \brk{n}$, the random variable $\ff(\theta^{(k)};S_u)$ is
  $\tfrac{B^2}{4m}$-sub-Gaussian around its mean $\ff(\theta^{(k)};P)$. A union
  bound over the samples and parameters concludes the proof.
\end{proof}
Conditioned on that event, the data are well concentrated and the results of
Theorem~\ref{thm:winsorized} apply. We now describe the algorithm and then go on
to prove privacy and utility guarantees. We call it ``idealized'' because it is 
not
computationally efficient. Roughly, the running time scales as
$\abs{\Theta} / \alpha$ to obtain good accuracy with probability greater than
$1-\alpha$. In certain problems, $\abs{\Theta}$ can be
exponential in the dimension (e.g., the Lipschitz stochastic optimization 
problem considered in Remark~\ref{rm:l_infity}),
which makes it computationally intractable.

\begin{algorithm}
  \caption{Idealized estimator for learning with bounded losses}
  \begin{algorithmic}[1]\label{alg:private-selection}
    \STATE \textbf{Input:} Privacy parameter $\eps$, probability of stopping
    $\gamma \in (0, 1]$, concentration parameter $\tau > 0$, finite parameter set $\Theta$,
    dataset $\cS = \crl{S_1, \ldots, S_n}$ \STATE Denote
  \begin{equation*}
    \msf{A}_k(S) \defeq \textbf{WinsorizedMean1D}\prn*{\crl{\ff(\theta^{(k)};S_u)}_{u\in\brk{n}}, \eps / 3, \tau}
  \end{equation*}
  
  \STATE Initialize $\mc{T} = \emptyset$.  \FOR{$t = 0, \ldots, \infty$} \STATE
  Sample $J_t \sim \msf{Uniform}(\crl*{1, \ldots, \abs{\Theta}})$.  \STATE
  Sample $V_t \sim \msf{A}_{J_t}(\mc{S})$.  \STATE Update
  $\mc{T} \to \mc{T} \cup \crl{(J_t, V_t)}$.  \STATE Sample
  $w_t \sim \msf{Bernoulli}(\gamma)$, if $w_t = 1$, break; \ENDFOR \STATE
  $t^\ast \to \argmin_{t}V_t$.  \RETURN $(J_{t^\ast}, V_{t^\ast})$.
\end{algorithmic}
\end{algorithm}

We state the privacy and utility of our algorithm. The result follows
from the utility guarantees of the mean estimator 
(Algorithm~\ref{alg:winsorized}) and the guarantees of private selection in
\cite{liu2019private}. 
\begin{restatable}{theorem}{restateFiniteHyp}\label{thm:finite-hyp}
  Let $\alpha \in (0, 1]$ and let us consider
  Algorithm~\ref{alg:private-selection} with $q = 1/K = 1/\abs{\Theta}$ and
  $\tau = \tfrac{B}{2}\sqrt{(\log(Kn) + \log(10/\alpha) / m}$. Assuming that
  $n \ge \frac{8}{\eps}\log\prn*{\tfrac{25\log(5/\alpha)}{\alpha^2} \cdot
  	\tfrac{KB}{\tau}}$,
  the following holds:
  \begin{enumerate}[label=(\roman*)]
  \item\label{item:privacy-priv-selection} The mechanism of
    Algorithm~\ref{alg:private-selection} is $\eps$-user-level DP.
  \item\label{item:utility-priv-selection} Let $J_{t^*}$ be the output of
    Algorithm~\ref{alg:private-selection}, with probability greater than
    $1-\alpha$ it achieves the following utility
            \begin{equation}
      \ff(\theta^{(J_{t^\ast})}; \mc{S}) - 
      \inf_{\theta'\in\Theta}\ff(\theta';\mc{S})
      \le 8\frac{B}{n\sqrt{m}\eps}\log\prn*{25 K \cdot 
      \frac{\log(5/\alpha)}{\alpha^2}}
      \sqrt{\log(Kn) + \log(10/\alpha)}.
      \end{equation}
    \end{enumerate}
  \end{restatable}

  \arxiv{
  \begin{proof}[Proof sketch]
    The privacy is immediate since we select the mechanisms uniformly at
    random. We choose the parameter $\gamma$ in
    Algorithm~\ref{alg:private-selection} such that with high probability the
    algorithm queries the \emph{best} parameter $\theta^{(k^\ast)}$. Thus, if
    the algorithm returns any other parameter $\theta^{(J_{t^\ast})}$, it must
    be that the added noise on $V_{J_{t^\ast}}$ and $V_{k^\ast}$ compensate for
    their difference in utility. Since the noise is (with high-probability)
    i.i.d. Laplace noise, we bound the size of the noise for the length of the
    game which gives the final result.
  \end{proof}
}
\notarxiv{

\begin{proof} We first state the privacy guarantee followed by the utility
    guarantee.

    \paragraph{Proof of~\ref{item:privacy-priv-selection}} Since each
    $\msf{A}_k$ is $\eps/3$-user-level DP, Theorem~3.2 in~\cite{liu2019private}
    guarantees that the output of Algorithm~\ref{alg:private-selection} is
    $\eps$-user-level DP.

    \paragraph{Proof of~\ref{item:utility-priv-selection}} The proof is adapted
    from Theorem~5.2 in~\cite{liu2019private}. First of all, with probability
    greater than $1-\alpha_1$, as we prove in
    Lemma~\ref{lem:concentration-finite}, the data are uniformly concentrated for
    all $\theta^{(k)}$, meaning
      \begin{equation*}\label{eq:concentration-finite}
        \max_{k \in K} \max_{u \in \brk{n}} \abs*{\ff(\theta^{(k)}; S_u)
          - \ff(\theta^{(k)};P)} \le
        \crl*{\frac{B}{2}\sqrt{\frac{\log(\abs{\Theta} \cdot n) + 
        \log(2/\alpha_1)}{m}} \eqdef \tau}.
  \end{equation*}
  We condition on this event (Event 1) for the rest of the proof. Let
  $\alpha_1 \in (0, 1]$ and $\gamma \in (0, 1]$. Let $\tstop$ denotes the 
  time that the algortihm exists the loop, which is number of queries the 
  algorithm makes. 
  
  Let us denote $k^\ast$, the best hypothesis in $\Theta$ i.e.
  \begin{equation*}
  	k^\ast = \argmin_{k \le K} \ff(\theta^{(k)}; \mc{S}).
  \end{equation*}
  We choose $\gamma$ such that $k^\ast$ is queried with probability 
  greater than
  $1-\alpha_1$, i.e., if $E_{\neg k^*}$ is the event (denote $\neg E_{\neg 
  k^*}$ as Event 2) that the 
  algorithm 
  finishes
  without querying $k^*$, we choose $\gamma$ such that
  $\P(E_{\neg k^*}) \le \alpha_1$. More precisely,
  \begin{align*}
	\P(E_{\neg k^*})
	& = \sum_{l = 1}^\infty
	\P(E_{\neg k^*}|\tstop = l)
	\P(\tstop = l)  \\
	& = \sum_{l=1}^\infty \prn*{1-\frac1K}^l \cdot 
	\prn*{1-\gamma}^{l-1}\cdot \gamma \\
	&= 
	\prn*{1-\frac1K}\gamma\sum_{l=0}^\infty \brk*{\prn*{1-\frac1K} 
		\prn*{1-\gamma}}^l \\
	& = \frac{\prn*{1-\frac1K}\gamma}{1 - 
		\prn*{1-\frac1K} 
		\prn*{1-\gamma}}.
\end{align*}

  Choosing $\gamma = \alpha_1 / K$ guarantees that
  $\P(E_{\neg k^*}) \le \alpha_1$.   
  Let $L \defeq \tfrac{\log(1/\alpha_1)}{\gamma} = 
  \log(1/\alpha_1)\tfrac{K}{\alpha_1}$, we 
  have
 
   \begin{align*}
 	\P(\tstop > L)  = 
 	\P(\omega_1 = \ldots = \omega_L = 0)  = (1-\gamma)^L  \le \exp(-L 
 	\gamma) = \alpha_1.
 \end{align*}

  Hence with probability at least $1 - \alpha_1$, the algorithm ends in less 
  than
  $L$ throws (Event 3). Conditioned on this event, by 
  Theorem~\ref{thm:winsorized} 
  and union bound, with 
  probability greater than
  $1 - L \cdot \tfrac{B}{\tau}\exp(-n\eps / 8)$, the output of $\msf{A}_{J_t}$
  for all $t \le \tstop$ is
  \begin{equation*}
    \msf{A}_{J_t}(S) = \ff(\theta^{(J_t)}; \cS) + \msf{Lap}\prn*{\frac{8\tau}{n\eps}}
    = \frac{1}{m\cdot n}\sum_{j \in \brk{m}, u \in \brk{n}}\f\prn*{\theta^{(J_t)};z^{(u)}_j}
    + \msf{Lap}\prn*{\frac{8\tau}{n\eps}},
  \end{equation*}
which we denote as Event 4.
  For a Laplace distribution, computing the tail gives that
  $\P(\abs{\msf{Lap}(\lambda)} \ge u) \le \exp(-u / \lambda)$ and with a union
  bound and change of variables it holds that if $Y_1, Y_2, \ldots, Y_L 
  \simiid \msf{Lap}(\frac{8\tau}{n\eps})$, then
  with probability greater than $1-\alpha_1$
  \begin{equation*}
    \max_{i=1,\ldots L} \abs{Y_i} \le 
    \frac{8\tau}{n\eps}\log\prn*{\frac{L}{\alpha_1}}.
  \end{equation*}
  In other words, except with probability $\alpha_1$, the noise is bounded 
  by
  $\tfrac{8\tau}{n\eps}\log(L / \alpha_1)$ (Event 5). Conditioned on all 
  these events, the
  parameter $\theta^{(J_{t^\ast})}$ that the algorithm outputs is 
  sub-optimal by
  at most $\tfrac{16\tau}{n\eps}\log(L / \alpha_1)$ as in the worst-case the 
  noise
  is $+\tfrac{8\tau}{n\eps}\log(L / \alpha_1)$ for $J_{t^*}$ and
  $-\tfrac{8\tau}{n\eps}\log(L / \alpha_1)$ for $k^\ast$.
    Setting %
    $\alpha_1 = \alpha/5$ and as we assume that
  $n \ge \frac{8}{\eps}\log\prn*{\tfrac{25\log(5/\alpha)}{\alpha^2} \cdot
  	\tfrac{KB}{\tau}}$, we conclude the proof by taking a union bound over all 
  	5 events.
  
\end{proof}

 }

\begin{corollary}
  Assume
  $n \ge \Omloglog(1) \, \tfrac{1}{\eps}\max\crl*{\tfrac{1}{Km},
    \log\prn{Km}}$. It holds that
  \begin{equation}
    \minimaxpurebdd =  \Olog\prn*{B\crl*{\sqrt{\frac{\log K}{m\cdot n}}
      + \frac{\log^{3/2}\prn*{Knm\eps}}{n\sqrt{m}\eps}}},
\end{equation}
where $\Olog, \Omloglog$ ignores only numerical constants and log-log 
factors in 
this case.
\end{corollary}

\begin{proof} We get the result directly from
  Theorem~\ref{thm:finite-hyp}, by setting
  $\alpha = \log K/(n\sqrt{m}\eps)$, applying standard uniform
  convergence results for bounded losses with finite parameter set
  (Hoeffding bound) and ignoring log-log factors.
\end{proof}

\begin{corollary}[Parameter sets with finite metric entropy] Let us
  further assume that our loss functions are $G$-Lipschitz with
  respect to some norm $\norm{\cdot}$ with (finite) covering number
  $\msf{N}_{\norm{\cdot}}(\Theta, \Delta)$---i.e. there exists a set
  $\Gamma_{\norm{\cdot},\Delta} \subset \Theta$ such that
  $\abs{\Gamma_{\norm{\cdot}, \Delta}} =
  \msf{N}_{\norm{\cdot}}(\Theta, \Delta)$ and for all
  $\theta \in \Theta$, there exists
  $\tau \in \Gamma_{\norm{\cdot}, \Delta}$ such that
  $\norm{\theta - \tau} \le \Delta$. In this case, for any
  $\Delta > 0$ and applying Algorithm~\ref{alg:private-selection} with
  parameter set $\Gamma$ guarantees that
  \begin{multline*}
    \mathfrak{M}^{\msf{user}}_{m, n}(\Theta, \mc{F}_{B, (G,
      \norm{\cdot})}, \eps) 
    = \tilde{O}(1) \, \inf_{\Delta > 0}\crl*{B\brk*{\sqrt{\frac{\log
            \msf{N}_{\norm{\cdot}}(\Theta, \Delta)}{m\cdot n}} +
        \frac{\log^{3/2}\prn*{\msf{N}_{\norm{\cdot}}(\Theta,
            \Delta)nm\eps}}{n\sqrt{m}\eps}} + G\Delta}.
\end{multline*}
\end{corollary}

\begin{remark} \label{rm:l_infity}For
  $\norm{\cdot} = \ell_2, \Theta=\mathbb{B}^d_\infty(0, 1)$ and
  setting
  $\Delta = \tfrac{B}{G}\crl*{\sqrt{d/(mn)} +
    d^{3/2}/(n\eps\sqrt{m})}$, we directly get
  \begin{equation*}
    \mathfrak{M}^{\msf{user}}_{m, n}(\mathbb{B}^d_\infty(0, 1), \mc{F}_{B, (G, \ell_2)}, \eps)
    =  \Olog\crl*{B\sqrt{\frac{d}{m\cdot n}} + B\frac{d^{3/2}}{n\sqrt{m}\eps}}.
  \end{equation*}
  The first term, which corresponds to the statistical rate, is optimal (see 
  e.g. Proposition~2
  in~\cite{LevyDu19}). Whether the privacy rate is optimal remains open.
\end{remark}

\subsection{Information-theoretic lower bound}\label{sec:finite-lb}

We now prove a lower bound on
$\mathfrak{M}^{\msf{user}}_{m, n}(\Theta, \mc{F}_B, \eps)$ when
$\abs{\Theta} = K < \infty$. We follow the standard machinery of reducing
estimation to testing \citep{Yu97, Wainwright19} but under privacy
constraints \citep{BarberDu14, acharya2020differentially}.
\begin{restatable}[Lower bound for finite-hypothesis class]
  {theorem}{restateFiniteHypLb}\label{thm:finite-hyp-lb} Let
  $K, m, n \in \N, K < \infty, \eps \in \R_+,$ and $0\le B < \infty$. Assume
  $\log_2K \ge 32\log 2$ and
  $n\ge \log_2K \max\crl{\tfrac{1}{192\sqrt{m}\eps}, \tfrac{1}{96m}}$, there
  exists a sample space $\cZ$ and parameter set $\Theta$ with $\abs{\Theta} = K$
  and $\abs{\cZ} = \ceil{\log_2 K}$ such that the following holds
  \begin{equation}
    \mathfrak{M}^{\msf{user}}_{m, n}(\Theta, \mc{F}_B, \eps) = \Omega \prn*{ 
    B\sqrt{\frac{\log \abs{\Theta}}{m\cdot n}}
    +B\frac{\log \abs{\Theta}}{n \sqrt{m} \epsilon}}.
\end{equation}
\end{restatable}

We detail the proof of the theorem \notarxiv{below}\arxiv{in
  Appendix~\ref{app:proof-finite-lb}}. The proof relies on a (standard)
generalization of Fano's method, whcih reduces optimization to multiple
hypothesis tests. We refer to the results of~\cite{acharya2020differentially} to
obtain the lower bounds in the case of a constrained---in this case,
$\epsilon$-DP---estimators. For the user-level case, we simply consider that
samples from an $m$-fold product of measures---the separation does not change but
the KL-divergence increase by at most a $m$ factor and TV-distance increase by
at most a $\sqrt{m}$ factor thus yielding the final answer.

\notarxiv{
\begin{proposition}[{\citet[][Corollary~4]{acharya2020differentially}}]\label{prop:private-fano}
  Let $\mc{P}$ be a collection of distributions over a common sample space $\ZZ$
  and a loss function $\f:\Theta\times\ZZ \to \R_+$. For $P, Q\in\mc{P}$, define
  \begin{equation*}
    \mathsf{sep}_{\ff}(P, Q; \Theta) := \sup\left\lbrace \delta \ge 0
      \;\middle|\; \mbox{for all~}
      \theta\in\Theta, \begin{array}{c} \ff(\theta, P) \leq \delta ~\mbox{implies}~
                         \ff(\theta, Q) \geq \delta \\
                         \ff(\theta, Q) \leq \delta ~\mbox{implies}~
                         \ff(\theta, P) \geq \delta
                       \end{array}\right\rbrace.
                   \end{equation*}

                   Let $\mc{V}$ be a finite index set and
                   $\mc{P}_{\mc{V}} \defeq \crl*{P_v}_{v\in\mc{V}}$ be a
                   collection of distributions contained in $\mc{P}$ such that
                   for $\Delta \ge 0$,
                   $\min_{v\neq v'}\msf{sep}(P_v, P_{v'}, \Theta) \le \Delta$.
                   Then
                   \begin{equation*}
                     \minimaxitem \ge
                     \frac{\Delta}{4}\max\crl*{1 - \frac{I(X_1^n;V) + \log 2}{\log \abs{\mc{V}}},
                       \min\crl*{1, \frac{\abs{\mc{V}}}{\exp(c_0 n\eps \mathrm{d}_\msf{TV}(\mc{P}_\mc{V}))}}},
                   \end{equation*}
                   where
                   $V \sim \msf{Uniform}(\mc{V}), c_0 = 10,
                   \mrm{d}_\msf{TV}(\mc{P}_\mc{V}) \defeq \max_{v\neq
                     v'}\norm{P_v - P_{v'}}_\msf{TV}$ and $I(X; Y)$ is the
                   (Shannon) mutual information.
                 \end{proposition}

\begin{proof}[Proof of Theorem~\ref{thm:finite-hyp-lb}] We follow the 
standard
    steps: we first compute the separation, we bound the testing error for any
    (constrained) estimator in the item-level DP case (with
    Proposition~\ref{prop:private-fano}) and finally, we show how to adapt the
    proof to obtain the user-level DP lower bound.
    \paragraph{Separation} For simplicity, assume $K = 2^d$, if not, the problem
    is harder than for $\underline{K} = 2^{\floor{\log_2K}} \le K$ which is of
    the same order. Let us define the sample space $\ZZ$, the parameter set
    $\Theta$ and the loss function $\f$ we consider.

  We define
  \begin{equation*}
    \ZZ = \Theta \defeq \crl{-1, +1}^d\mbox{~~and~~}
    \f(\theta; z) \defeq B\sum_{j \le d}\mathbf{1}_{\theta_j = z_j}.
  \end{equation*}

  We consider $\mc{V}$ an $d/2$-$\ell_1$ packing of $\crl{\pm 1}^d$ of size at
  least $\exp(d/8)$---which the Gilbert-Varshimov bound (see e.g.,
  {\cite[][Ex. 4.2.16]{Vershynin19}}) guarantees the existence of---and consider
  the following family of distribution $\mc{P} = \crl{P_v: v\in\mc{V}}$ such
  that if $X \sim P_v$ then
  \begin{equation}\label{eqn:bcube_prob}
    X = \begin{cases}
      v_j e_j & \mbox{~~with probability~~} \frac{1+\Delta}{2d}\\
      -v_j e_j & \mbox{~~with probability~~} \frac{1-\Delta}{2d}.
    \end{cases}
  \end{equation}
  For $\theta\in\Theta$, we have that
  \begin{equation*}
    \ff(\theta;P_v) = \E_{P_v}\brk*{B\sum_{j\le d} \mathbf{1}_{\theta_j = Z_j}} 
    =
    B\sum_{j \le d}\frac{1+\theta_j v_j \Delta}{2d}.
  \end{equation*}
  Naturally, $\ff(\theta;P_v)$ achieves its minimum at $\theta_v^* = -v$ such
  that $\inf_{\theta'\in\theta}\ff(\theta;P_v) = B\tfrac{1-\Delta}{2}$. We now
  compute the separation by noting that
  \begin{equation}\label{eqn:bcube_loss}
    \mathsf{sep}_{\ff}(P_v, P_{v'}, \Theta) \ge \frac{1}{2} \min_{\theta'\in\Theta}
    \crl*{\ff(\theta'; P_v) + \ff(\theta';P_{v'}) - \ff(\theta_v^*;P_v) - \ff(\theta_{v'}^*; P_{v'})}.
  \end{equation}
  A quick computation shows that $\msf{sep}_{\ff}(P_v, P_{v'}, \Theta) \ge 
  \tfrac{B\Delta}{8}$ by
  noting that $\mrm{d}_{\msf{Ham}}(v, v') \ge d/4$.

  \paragraph{Obtaining the item-level lower bound} We can now use the results of
  Proposition~\ref{prop:private-fano}. We have that
  $\min_{v\neq v'}\msf{sep}_{\ff}(P_v, P_{v'}, \Theta) \ge
  \tfrac{B\Delta}{8}$. The identity
  $\mathrm{D}_{\mathrm{KL}}(P_v, P_{v'}) = 
  \Delta\log\tfrac{1+\Delta}{1-\Delta}
  \le 3\Delta^2$ implies that $I(Z^n;V) \le 3n\Delta^2$. Similarly, 
  Pinsker's inequality guarantees that
  \begin{equation*}
    \mrm{d}_\msf{TV} \le \sqrt{\frac{1}{2}\max_{v\neq v'}
      \mrm{D}_\mrm{KL}(P_v, P_{v'})} \le \sqrt{3/2}\Delta.
  \end{equation*}
  
  We put everything together and it holds that for $\Delta \in \brk{0, 1}$,
  \begin{equation}
    \minimaxitem \ge \frac{B\Delta}{32}\max\crl*{1 - \frac{3n\Delta^2 + 
    \log 2}{d/8},
      \min\crl*{1, \frac{\exp(d/8)}{\exp(30n\eps \Delta)}}}.
  \end{equation}
  Since $d \ge 32\log 2$, $\Delta = \sqrt{d/(96n)}$ guarantees that
  $1 - \frac{3n\Delta^2 + \log 2}{d/8} \ge 1/2$. On the other hand, setting
  $\Delta = \tfrac{5}{960}\tfrac{d}{n\eps}$, guarantees that
  $\min\crl*{1, \frac{\exp(d/8)}{\exp(30n\eps \Delta)}} \ge 1/2$. The 
  assumption
  on $n$ guarantees that these two values are in $\brk{0, 1}$ and thus setting
  $\Delta^* = \max\crl*{{\sqrt{d/(96n)}}, \tfrac{1}{192}\tfrac{d}{n\eps}}$ 
  which
  implies that
  \begin{equation*}
    \minimaxitem \ge \frac{B}{32}\crl*{\sqrt{\frac{d}{96n}} + \frac{1}{192}\frac{d}{n\eps}}.
  \end{equation*}

  \paragraph{Concluding for user-level DP} Let $m\in \N, m\ge 1$. For the
  user-level DP lower bound, the proof remains the same except that the
  collection $\mc{P}_{\mc{V}}$ becomes $\crl{P_v^m}_{v\in\mc{V}}$ i.e. the
  $m$-fold product distribution of $P_v$. The separation remains exactly the
  same but we now have
  \begin{equation*}
    \mrm{D}_\mrm{KL}(P_v^m, P_{v'}^m) \le 
    3m\Delta^2\mbox{~~and~~}\mrm{d}_\msf{TV}(\mc{P}_\mc{V})
    \le \sqrt{\frac{3m}{2}}\Delta.
  \end{equation*}
  Under the assumption
  $\Delta^* = \max\crl*{{\sqrt{d/(96mn)}},
    \tfrac{1}{192}\tfrac{d}{n\sqrt{m}\eps}}$ is less than $1$ and thus concludes
  the proof.
\end{proof}
 }

Note, the upper bound of Theorem~\ref{thm:finite-hyp} and the lower bound above
match only up to $\sqrt{\log K}$. Given that $K$ can be exponential in
the dimension---e.g. in the case of $\Theta$ being a cover of an
$\ell_p$ ball---the bound is only tight for ``small'' hypothesis
class. However,  it seems this
extra-factor cannot be removed using the techniques we present in this 
paper, as we need to both obtain uniform
concentration and bound the maximum of i.i.d. noise over $K$
samples---both of which are tight. We leave the problem of finding an
optimal estimator for this problem to future work.

}
\section{Limit of Learning with a Fixed Number of Users} \label{sec:limit}

In this section, we consider the following binary testing 
problem between $P_1$ and $P_2$ supported on $\{+B, -B\}$ where
\begin{align*}
	P_0(+B) = 1, & \;\;\;\;\; P_0(-B) = 0,\\
	P_1(+B) = 0, & \;\;\;\;\; P_1(-B) = 1.\\
\end{align*}
We prove the following result.
\begin{theorem} \label{thm:limit}
	For all user-level $(\eps, \delta)$-DP algorithm $\alg: \{+B, 
	-B\}^{m\times n} \rightarrow [0, 1]$, let $\cS$ be $n 
	\times m$ i.i.d samples from $P_{\vartheta}, \vartheta \in \{0, 1\}$, we 
	have when $\delta < 1/2ne^{n\eps}$,
	\[
		\max_{\theta \in \{0, 1\}} \expectation{ \Paren{\alg(\cS) - 
		\vartheta}^2} = 
		\Omega(e^{-n\eps}).
	\]
\end{theorem}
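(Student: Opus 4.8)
The plan is to recognize that this is a two-point testing problem with \emph{deterministic} inputs and then run the standard group-privacy packing argument. Since $P_0$ is the point mass at $+B$ and $P_1$ the point mass at $-B$, the dataset $\cS$ is determined by $\vartheta$: write $\cS^{(0)}$ for the table with every entry equal to $+B$ and $\cS^{(1)}$ for the table with every entry equal to $-B$. The quantity to lower bound is therefore $\msf{err} \defeq \max\{\expectation{\alg(\cS^{(0)})^2},\, \expectation{(\alg(\cS^{(1)})-1)^2}\}$ over all $(\eps,\delta)$-user-level DP mechanisms $\alg$. The structural observation driving everything is that $\cS^{(0)}$ and $\cS^{(1)}$ differ in \emph{all $n$} user blocks, i.e.\ $\mathrm{d}_{\mathsf{user}}(\cS^{(0)},\cS^{(1)}) = n$, which is exactly why a factor $e^{n\eps}$ appears.

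First I would invoke group privacy: chaining the $(\eps,\delta)$-DP guarantee along $n$ single-user modifications connecting $\cS^{(0)}$ to $\cS^{(1)}$ gives, for every measurable $O$,
\[
  \probof{\alg(\cS^{(0)}) \in O} \;\le\; e^{n\eps}\,\probof{\alg(\cS^{(1)}) \in O} \;+\; n e^{(n-1)\eps}\delta,
\]
where the additive term is the telescoping sum $\sum_{j=0}^{n-1} e^{j\eps}\delta \le n e^{(n-1)\eps}\delta$. Next I would convert the squared-error assumptions into tail bounds by Markov's inequality, applied with the threshold event $O = \{y : y \le 1/2\}$: accuracy under $\vartheta=0$ forces $\probof{\alg(\cS^{(0)}) \le 1/2} \ge 1 - 4\expectation{\alg(\cS^{(0)})^2}$, while accuracy under $\vartheta=1$ (Markov applied to $1-\alg(\cS^{(1)})$) forces $\probof{\alg(\cS^{(1)}) \le 1/2} \le 4\expectation{(\alg(\cS^{(1)})-1)^2}$. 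Substituting both into the group-privacy inequality and bounding each error by $\msf{err}$ yields
\[
  1 - 4\,\msf{err} \;\le\; 4 e^{n\eps}\,\msf{err} \;+\; n e^{(n-1)\eps}\delta.
\]

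Finally, the hypothesis $\delta < 1/(2 n e^{n\eps})$ is exactly what I need to absorb the additive term: it gives $n e^{(n-1)\eps}\delta \le n e^{n\eps}\delta < 1/2$, so $4\,\msf{err}\,(1 + e^{n\eps}) > 1/2$ and hence $\msf{err} > \tfrac{1}{8(1+e^{n\eps})} = \Omega(e^{-n\eps})$. There is no serious technical obstacle here --- this is the canonical group-privacy lower bound --- so the care required is entirely in bookkeeping: getting the $\delta$-accumulation in group privacy right and checking that the stated condition on $\delta$ is precisely the one that makes the additive term harmless, plus keeping track of the Markov constants (a sharper analysis through total variation distance would improve the constant but not the rate). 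I would also add a one-line remark that this is consistent with the paper's upper bounds, since those all require $n = \Omega((\log m)/\eps)$, a regime in which $e^{-n\eps}$ is negligible.
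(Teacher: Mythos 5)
Your argument is correct and is essentially the proof the paper gives: both identify that under $P_0$ and $P_1$ the dataset is deterministic, apply group privacy across all $n$ user blocks to the event $\{\alg(\cS)\le 1/2\}$, convert the squared errors to tail probabilities via Markov/indicator bounds, and solve the resulting inequality using the hypothesis $\delta < 1/(2ne^{n\eps})$ to absorb the additive term. The paper organizes the bookkeeping slightly differently (it writes out both directions of the group-privacy inequality, adds them to get $\beta_0+\beta_1\ge 1/(1+e^{n\eps})$, and then lower-bounds the max by the average), but this is only a cosmetic rearrangement of the same chain of inequalities and yields the same $\Omega(e^{-n\eps})$ rate.
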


Before proving the theorem, we describe the implications of the theorem to 
applications considered in this work. Let $\Auser$ denote the set of all 
user-level $(\eps, \delta)$-DP algorithms.

\paragraph{Reduction from mean estimation} $P_0$ and $P_1$ are 
both bounded distributions. Moreover, we have $\mu_\vartheta = 
B(2\vartheta - 
1)$. For any user-level $(\eps, \delta)$-DP mean estimator 
$\what{\mu}:  \{+B, 
-B\}^{m\times n}  \rightarrow [-B, +B]$, set $\alg_{\what{\mu}}(\cS) = 
(\what{\mu} + B
)/(2B) \in [0, 1]$, we have $\forall \vartheta \in \{0, 1\}$,
\[
	\expectation{ \Paren{\what{\mu}(\cS) - 
	\mu_\vartheta}^2} = 
	4B^2 \expectation{ \Paren{\alg_{\what{\mu}}(\cS) - 
			\vartheta}^2} .
\]
We have
\begin{align*}
	\inf_{\what{\mu} \in \Auser} \max_{\vartheta \in \{0, 1\}} \expectation{ 
	\Paren{\what{\mu}(\cS) - 
			\mu_\vartheta}^2} & = 4B^2\inf_{\what{\mu} \in \Auser} 
			\max_{\vartheta 
			\in 
			\{0, 1\}} \expectation{ 
		\Paren{\alg_{\what{\mu}}(\cS) - 
		\vartheta}^2}\\
	&  \ge 4B^2\inf_{\alg\in \Auser} \max_{\vartheta 
	\in 
\{0, 1\}} \expectation{ 
\Paren{\alg(\cS) - 
\vartheta}^2} = \Omega(B^2e^{-n\eps}).
\end{align*}

\paragraph{Reduction from SCO} Let $\Theta = [-1, 1]$ and
$\ell(\theta, Z) = \theta \cdot Z$. Setting $B = \normgrad$. The loss is linear
(and thus convex), $\normgrad$-Lipschitz and satisfies
Assumptions~\ref{ass:smoothness} and~\ref{ass:subG}. For $P_\vartheta$,
\[
	\ploss(\theta, P_\vartheta) = \theta \normgrad(2\vartheta - 
	1).
\]
Hence the minimizer is $\theta^*_\vartheta = 1 - 2\vartheta$ and 
\[
	\ploss(\theta, P_\vartheta)  - \ploss(\theta^*_\vartheta, P_\vartheta)  = 
	(2\vartheta-1)\normgrad(\theta - 1 + 2\vartheta) = \normgrad(1 - 
	\theta(2\vartheta-1) ) 
	\ge \frac{\normgrad}{2}(\theta - 2\vartheta + 1)^2 = 
	\frac{\normgrad}{2}(\theta - 
	\mu_\vartheta)^2.
\]
With similar arguments as in the mean estimation reduction, we get
\[
	\inf_{\alg\in \Auser} \max_{\vartheta \in \{0, 1\}} \expectation{ 
		\ploss(\alg(\cS); P_\vartheta) - \min_{\theta \in [-1,1]} \ploss(\theta; 
		P_\vartheta)}  = \Omega(\normgrad e^{-n\eps}).
\]
\paragraph{Reduction from Bounded Losses} In the reduction from SCO, the loss is
uniformly bounded and thus this is a sub-problem of the boundeed loss class and
the same bound holds.

Finally, let us prove the theorem. 
\begin{proof}[Proof of Theorem~\ref{thm:limit}]
	Note that there is only two possible sets that each user can observe. 
	Let $S_+$ be the multiset consisting of $m$ copies of $+B$ and Let 
	$S_-$ be the multiset consisting of $m$ copies of $-B$. Let $\beta_1 = 
	\probof{\alg((S_+)^n) < 1/2}$ and $\beta_0 = 
	\probof{\alg((S_-)^n) \ge 1/2}$. We first show that these two 
	probabilities 
	cannot be simultaneously small.
	
	Since $(S_+)^n$ can be changed into $(S_-)^n$ by changing $n$ users' 
	samples, by group property of differential 
	privacy,
	\[
		1 - \beta_1 = \probof{\alg((S_+)^n) \ge 1/2} \le e^{n\eps } 
		\probof{\alg((S_-)^n) \ge 1/2} + ne^{n\eps }\delta =  e^{n\eps } 
		\beta_0 + ne^{n\eps }\delta.
	\]
	Similarly, we get
	\[
		1 - \beta_0 \le e^{n\eps }  
		\beta_1 + ne^{n\eps }\delta.
	\]
	Combining the two, we get:
	\[
		\beta_0 + \beta_1 \ge \frac{2(1 - n\delta e^{n \eps})}{1 + e^{n \eps}} 
		\ge \frac{1}{1 + e^{n \eps}}.
	\]
	Note that when $\vartheta = 1$, we have $\probof{\cS = (S_+)^n} = 1$. 
	Hence
	\[
		\expectsub{P_1}{ \Paren{\alg(\cS) - 
				1}^2} \ge \frac{1}{4} \probof{\alg((S_+)^n) < 1/2}.
	\]
	Similarly,
	\[
	\expectsub{P_0}{ \Paren{\alg(\cS) - 
			0}^2} \ge \frac{1}{4} \probof{\alg((S_-)^n) \ge 1/2}.
	\]
	We conclude the proof by noting that
	\[
		\max_{\vartheta \in \{0, 1\}} \expectation{ \Paren{\alg(\cS) - 
				\vartheta}^2} \ge \frac12\Paren{\expectsub{P_0}{ \Paren{\alg(\cS) 
				- 
					0}^2}  + \expectsub{P_1}{ \Paren{\alg(\cS) - 
					1}^2}}.
	\]
\end{proof} %
\section{Extension to Limited Heterogeneity Setting} \label{sec:extension}
In this section, we show that our results and techniques developed under 
the homogeneous setting (Assumption~\ref{ass:homogeneous}) can be 
extended to 
the setting with limited heterogeneity 
(Assumption~\ref{ass:heterogeneous}). 

In particular, we show that applying the algorithms under the i.i.d setting in 
a 
black-box fashion will work with an additional bounded error under the 
limited 
heterogeneity setting, stated in the 
theorem below.

\begin{theorem}\label{thm:reduction}
	Let $\alg: \cZ^{m \times n} \rightarrow \Theta$ be a learning algorithm 
	and $\ell: \cZ \times \Theta \rightarrow \RR_+$ be a loss function with 
	$\max_{z \in \cZ} \max_{\theta \in \Theta}\ff(\theta; z) \le B$.
	 Given samples
	$\cS = (S_1, \ldots, S_n) \sim \otimes_{u \in \brk{n}}(P_u)^m$, if under 
	Assumption~\ref{ass:homogeneous},  we have
	\[
		\expectation{\ff(\alg(\cS);P_0)} - \min_{\theta' \in
			\Theta}\ff(\theta'; P_0)  \le L(m, n),
	\]
	then under Assumption~\ref{ass:heterogeneous}, we have
	\[
		\max_{u} \left\{\expectation{\ff(\alg(\cS);P_u)} - \min_{\theta' \in
			\Theta}\ff(\theta'; P_u) \right\} \le L(m, n) + B (mn + 2)\hetero.
	\]
\end{theorem}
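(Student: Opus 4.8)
The plan is to prove Theorem~\ref{thm:reduction} by a short change-of-measure (coupling) argument that reduces the heterogeneous guarantee to the homogeneous one, paying a controlled $O(B\hetero)$-sized penalty each time we swap a distribution. Concretely, I would first record two elementary facts. \textbf{Fact 1} (swapping the evaluation distribution): since $0\le\ff(\theta;\cdot)\le B$ pointwise, for every $\theta\in\Theta$ and every $u\in\brk{n}$ we have $\abs{\ff(\theta;P_u)-\ff(\theta;P_0)}=\abs{\E_{P_u}\brk{\ell(\theta;Z)}-\E_{P_0}\brk{\ell(\theta;Z)}}\le B\,\dtv{P_u}{P_0}\le B\hetero$. \textbf{Fact 2} (swapping the sampling distribution): by the standard hybrid argument (sub-additivity of total variation over product measures), $\dtv{\otimes_{u\in\brk{n}}(P_u)^m}{(P_0)^{mn}}\le\sum_{u\in\brk{n}}m\,\dtv{P_u}{P_0}\le mn\hetero$, and for any $[0,B]$-valued measurable $g$ and measures $\mu,\nu$, $\E_{\mu}\brk{g}\le\E_{\nu}\brk{g}+B\,\dtv{\mu}{\nu}$.

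Next, I would fix an arbitrary user $u\in\brk{n}$, let $\theta_u^\star\in\argmin_{\theta\in\Theta}\ff(\theta;P_u)$, and chain four inequalities. (i) Applying Fact 1 pointwise to the (random) output $\alg(\cS)$ and taking expectations, $\E\brk{\ff(\alg(\cS);P_u)}\le\E\brk{\ff(\alg(\cS);P_0)}+B\hetero$. (ii) Since the map $\cS\mapsto\E_{\text{alg.\ randomness}}\brk{\ff(\alg(\cS);P_0)}$ is $[0,B]$-valued, Fact 2 gives $\E_{\cS\sim\otimes(P_u)^m}\brk{\ff(\alg(\cS);P_0)}\le\E_{\cS\sim(P_0)^{mn}}\brk{\ff(\alg(\cS);P_0)}+Bmn\hetero$. (iii) By the hypothesis (the homogeneous guarantee applied with $P_0$ and $mn$ samples arranged as $n$ users of $m$ samples, i.e.\ $\cS\sim(P_0)^{mn}$), $\E_{\cS\sim(P_0)^{mn}}\brk{\ff(\alg(\cS);P_0)}\le\min_{\theta'\in\Theta}\ff(\theta';P_0)+L(m,n)$. (iv) Finally, by Fact 1 again, $\min_{\theta'\in\Theta}\ff(\theta';P_0)\le\ff(\theta_u^\star;P_0)\le\ff(\theta_u^\star;P_u)+B\hetero=\min_{\theta'\in\Theta}\ff(\theta';P_u)+B\hetero$.

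Summing (i)--(iv) and cancelling the common term $\min_{\theta'\in\Theta}\ff(\theta';P_u)$ yields $\E\brk{\ff(\alg(\cS);P_u)}-\min_{\theta'\in\Theta}\ff(\theta';P_u)\le L(m,n)+B\hetero+Bmn\hetero+B\hetero=L(m,n)+B(mn+2)\hetero$, and since $u$ was arbitrary, taking the maximum over $u\in\brk{n}$ completes the proof. There is no genuine obstacle here: the only points that need care are the product-measure total-variation bound in Fact 2 (a routine hybrid/telescoping argument) and remembering to push the algorithm's internal randomness inside the expectation before invoking the change-of-measure inequality $\abs{\E_\mu\brk{g}-\E_\nu\brk{g}}\le B\,\dtv{\mu}{\nu}$ for nonnegative bounded $g$. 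I would also note in passing that nothing in the argument touches privacy, so the $(\eps,\delta)$-DP property of $\alg$ is inherited verbatim, which is why the black-box reduction is ``for free'' on the privacy side.
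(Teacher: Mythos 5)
Your proof is correct and takes essentially the same route as the paper's: a four-term telescoping decomposition that swaps the evaluation distribution (costing $B\hetero$ twice) and the sampling distribution (costing $Bmn\hetero$ via sub-additivity of TV over the product), then invokes the homogeneous guarantee. The only cosmetic differences are that you apply the TV change-of-measure inequality $\abs{\E_\mu g-\E_\nu g}\le B\,\dtv{\mu}{\nu}$ directly where the paper spells out a maximal coupling, and you swap the evaluation distribution before the sampling distribution rather than after; both reorderings are equivalent and yield the same $L(m,n)+B(mn+2)\hetero$ bound.
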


Before proving the theorem, we can see that for any learning 
task, when $\hetero < L(m,n)/(B(mn+2))$, we can get the same performance 
as in 
the homogeneous case up to constant factors. This is only inverse 
polynomial in the problem parameters for all considered tasks.
\begin{proof}
	We first show that $\cS$ have a similar distribution under 
	Assumption~\ref{ass:homogeneous} and~\ref{ass:heterogeneous} when 
	$\hetero$ is small.
	By sub-additivity of total variantion distance. Under 
	Assumption~\ref{ass:heterogeneous}, we have
	\begin{equation} \label{eqn:bound_homo}
	\dtv{\otimes_{u 
			\in \brk{n}}(P_u)^m}{(P_0)^{n \times m}} \le mn\hetero.
	\end{equation}
	By definition of TV distance, there exists a coupling $(\cS, \cS')$ where 
	$\cS \sim \otimes_{u 
		\in \brk{n}}(P_u)^m$, $\cS' \sim (P_0)^{n \times m}$ and 
	\[
		\probof{\cS \neq \cS'} \le mn\hetero.
	\]
	Since $\max_{z \in \cZ} \max_{\theta \in \Theta}\ff(\theta; z) \le B$, we 
	have
	\begin{equation}\label{eqn:diff_coupling}
		\expectation{\ff(\alg(\cS);P_0)} - \expectation{\ff(\alg(\cS');P_0)} \le  
		B 
		\times \probof{\cS \neq \cS'} \le Bmn\hetero.
	\end{equation}
	Under Assumption~\ref{ass:heterogeneous}, for all $u \in [n]$, 
	$\dtv{P_u}{P_0} \le \hetero$. For all $\theta \in \Theta$,
	\[
		\ff(\theta;P_0) - \ff(\theta;P_u)   \le B \hetero,
	\]
	Hence we have
	\begin{equation}\label{eqn:diff_min}
		\min_{\theta' \in
			\Theta}\ff(\theta';P_0) - \min_{\theta' \in
			\Theta}\ff(\theta';P_u)  \le \max_{\theta \in \Theta} 	
			|\ff(\theta;P_0) - \ff(\theta;P_u)    \le B \hetero,
	\end{equation}
	and 
	\begin{equation}\label{eqn:diff_expectation}
		\expectation{\ff(\alg(\cS');P_u)} - 
		\expectation{\ff(\alg(\cS');P_0)}  \le B \hetero.
		\end{equation}
	Therefore, for all  $u \in [n]$,
	\begin{align*}
		&\expectation{\ff(\alg(\cS);P_u)} - \min_{\theta' \in
			\Theta}\ff(\theta'; P_u) \\
		= & \;\;\Paren{ \expectation{\ff(\alg(\cS);P_u)} - 
			\expectation{\ff(\alg(\cS');P_u)} }+ \Paren{ 
			\expectation{\ff(\alg(\cS');P_u)} - 
			\expectation{\ff(\alg(\cS');P_0)}  }\\
			&+\Paren{  \expectation{\ff(\alg(\cS');P_0)}  - 	\min_{\theta' \in
				\Theta}\ff(\theta';P_0) } + \Paren{ \min_{\theta' \in
				\Theta}\ff(\theta';P_0) - \min_{\theta' \in
				\Theta}\ff(\theta';P_u) }  \\
			\le & \;\; L(m, n) + B (mn + 2)\hetero,
	\end{align*}
	where we bound each term using~\eqref{eqn:bound_homo}, 
	\eqref{eqn:diff_coupling}, \eqref{eqn:diff_min} and 
	\eqref{eqn:diff_expectation} respectively.
\end{proof} %
\section{Proofs for Section~\ref{sec:sq}}

\subsection{Private range estimation}\label{sec:private-range}
\begin{algorithm} [h]
	\caption{\textbf{PrivateRange($X^n, \eps, \tau, B$)}: Private Range
		Estimation~\cite{feldman2017median}}
	\begin{algorithmic}[1]\label{alg:priv_range}
		\REQUIRE $X^n := (X_1, X_2, ..., X_n) \in [-B, B]^n$, $\tau:$ 
		concentration
		radius, privacy parameter $\eps > 0$.
		\STATE Divide the interval $[-B, B]$ into $l = B/\tau$ disjoint bins\footnotemark{}, 
		each
		with width $2\tau$. Let $T$ be the set of middle 
		points of
		intervals.  \STATE $\forall i \in [n]$, let
		$X_i' = \min_{x \in T} |X_i - x|$ be the point in $T$ closest to $X_i$.
		\STATE $\forall x \in T$, define cost function
		\[
		c(x) = \max \{ |\{i \in [n] \mid X'_i < x\}|, |\{i \in [n] \mid X'_i >
		x\}|\}.
		\]
		\STATE Sample $x \in T$ based on the following distribution:
		\[
		\probof{\hat{\mu} = x} = \frac{e^{-\eps c(x)/2}}{\sum_{x' \in 
		T}e^{-\eps
				c(x')/2}}.
		\]
		\STATE Return $R = [\hat{\mu} - 2 \tau, \hat{\mu} + 2 \tau]$.
	\end{algorithmic}
\end{algorithm}
\footnotetext{The last interval is of length $2B - (t-1)\tau$ if $\tau$ 
	doesn't
	divide $B$.}

\subsection{Proof of Theorem~\ref{thm:winsorized}}\label{proof:thm-winsorized}

\restateWinsorized*

\begin{proof}
  The privacy guarantee of the algorithm follows from the composition theorem of
  DP and the privacy guarantees of the exponential and Laplace
  mechanisms. For utility, it is enough to show that with probability at
  least
  $1 - (\gamma + \frac{B}{\tau} \exp \Paren{-\frac{n\eps}{8}}), \forall i \in
  [n], X_i$ is not truncated, i.e. $X_i \in [\hat{\mu} - 2\tau, \hat{\mu} + 
  2\tau]$.
	
  Recall that $X'_i$ is the middle of the interval in which $X_i$ falls. By the
  definition of $(\tau, \gamma)$-concentration, with probability at least
  $1 - \gamma, \forall i \in [n]$,
  \[
    |X_i - x_0| \le \tau.
  \]
  This implies that $\forall i \in [n]$,
  \[
    |X'_i - x_0| \le 2\tau,
  \]
  hence so is the $\Paren{\frac14, \frac34}$-quantile of $\{X'_i\}_{i =
    1}^n$. According to~\cite{feldman2017median} (Theorem 3.1),
  Algorithm~\ref{alg:priv_range} outputs $\Paren{\frac14, \frac34}$-quantile of
  $\{X'_i\}_{i = 1}^n$ with probability at least
  $1 - \frac{B}{\tau}e^{-\frac{n\eps}{8}}$. The proof follows by a union bound
  of both events.
\end{proof}

\subsection{Proof of Theorem~\ref{thm:winsorized_highd}} 
\label{proof:winsorized_highd}
\restateWinsorizedhighd*

We start by proving the following Lemma, which states that if the data is 
concentrated in $\ell_2$-norm with radius $\tau$, then after a random 
rotation, the points are concentrated in $\ell_\infty$-norm with radius 
$\tau/\sqrt{d}$ up to logarithmic factors.

\begin{lemma}\label{lem:random-rotation}
	Let $U = \frac{1}{\sqrt{d}} \whmat D$, where $\whmat$ is the  Walsh Hadamard 
	matrix and $D$ is a diagonal matrix with i.i.d. uniformly random $\{ +1, 
	-1 
	\}$ entries. Let $x_1, 
	x_2, \ldots, x_n$ and $x_0$ be vectors in $\R^d$. With probability at 
	least 
	$1-\alpha$, then the following holds.
	\[
	\max_i \| Ux_i - Ux_0\|_\infty \leq  \frac{10\max_i \|x_i - x_0\|_2 \sqrt{\log 
	\frac{nd}{\alpha}}}{\sqrt{d}}.
	\]
\end{lemma}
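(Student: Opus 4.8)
The plan is to exploit that $U=\tfrac{1}{\sqrt d}\whmat D$ is orthogonal (indeed $U^\top U=\tfrac1d D\whmat^\top\whmat D = D^2 = I$, since $\whmat^\top\whmat=dI$), so that $Ux_i-Ux_0=Uv_i$ with $v_i:=x_i-x_0$ satisfies $\|Uv_i\|_2=\|v_i\|_2$, and then to control $\|Uv_i\|_\infty$ coordinatewise by observing that each coordinate of $Uv_i$ is a bounded Rademacher sum with variance proxy $\|v_i\|_2^2/d$. First I would fix $i\in[n]$ and $j\in[d]$ and write
\[
  (Uv_i)_j \;=\; \frac{1}{\sqrt d}\sum_{k=1}^d \whmat_{jk}\,\sigma_k\,(v_i)_k,
\]
where $\sigma_1,\dots,\sigma_d$ are the i.i.d.\ uniform $\{\pm1\}$ diagonal entries of $D$ and $\whmat$ is deterministic. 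Since $|\whmat_{jk}|=1$, the coefficient vector $w$ with $w_k=\whmat_{jk}(v_i)_k$ has $\sum_k w_k^2=\|v_i\|_2^2$, so $(Uv_i)_j=\tfrac{1}{\sqrt d}\sum_k\sigma_k w_k$ is mean-zero and sub-Gaussian with variance proxy $\|v_i\|_2^2/d$ (from the standard bound $\E[e^{t\sum_k\sigma_kw_k}]\le e^{t^2\sum_k w_k^2/2}$).

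Next I would apply Hoeffding's inequality for Rademacher sums: for every $s>0$,
\[
  \probof{\bigl|(Uv_i)_j\bigr|>s}\;\le\;2\exp\!\Paren{-\frac{s^2 d}{2\|v_i\|_2^2}}.
\]
Choosing $s=s_i:=\tfrac{10\,\|v_i\|_2\sqrt{\log(nd/\alpha)}}{\sqrt d}$ makes the right-hand side at most $2(nd/\alpha)^{-50}$, which is $\le \alpha/(nd)$ with room to spare (the constant $10$ is far from tight). A union bound over the $nd$ pairs $(i,j)$ then shows that, with probability at least $1-\alpha$, simultaneously for all $i$ we have $\|Uv_i\|_\infty\le s_i$, and bounding $\|v_i\|_2\le\max_{i'}\|x_{i'}-x_0\|_2$ gives exactly the claimed inequality.

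I do not expect a genuine obstacle here: the only point worth stating carefully is that multiplying the entries of $v_i$ by the fixed signs $\whmat_{jk}$ leaves the $\ell_2$ norm of the coefficient vector — hence the sub-Gaussian proxy of each rotated coordinate — unchanged, so that every one of the $nd$ scalars we union-bound over is genuinely a Rademacher sum with variance proxy $\|v_i\|_2^2/d$; everything else is a routine Hoeffding-plus-union-bound computation. (The same random-rotation mechanism underlies Theorem~\ref{thm:winsorized_highd} and the analogous estimates of \cite{ailon2006approximate, pmlr-v70-suresh17a}.)
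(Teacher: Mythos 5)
Your proof is correct and is essentially the paper's argument: the paper establishes the same per-coordinate tail bound via McDiarmid's bounded-difference inequality on the i.i.d.\ signs $D_k$ (with per-coordinate bounded difference $\tfrac{2}{\sqrt d}|z_{i,k}|$, so $\sum_k c_k^2 = \tfrac{4}{d}\|z_i\|_2^2$), which for a linear function of Rademacher variables is literally the same exponential tail you derive from Hoeffding; the union bound over the $nd$ coordinates then completes both proofs identically. Your opening remark that $U$ is orthogonal is true but not used by either argument, and the algebra with the signs $\whmat_{jk}$ absorbed into the coefficient vector is exactly the right observation.
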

\begin{proof}
	Let $z_i = x_i - x_0$. It suffices to show that
	\[
	\max_i \| U z_i\|_\infty \leq  \frac{10\max_i \|z_i\|_2 \sqrt{\log 
	\frac{nd}{\alpha}}}{\sqrt{d}}.
	\]
	holds with probability at least $1-\alpha$.
	Let $y_i = U z_i$ and let $y_{i,j}$ denote the $j^\text{th}$ coordinate of 
	$y_j$. Let $D_j$ denote that $j^{th}$ diagonal of $D$. Then
	\[
	y_{i,j} =  \frac{1}{\sqrt{d}}  \sum_k \whmat{}_{j, k} D_k z_{i,k}
	\]
	Hence,
	\[
	\E[y_{i,j}] = \frac{1}{\sqrt{d}} \sum_k \whmat{}_{j, k} \E[D_k] z_{i,k} = 0.
	\]
	However, observe that changing one coordinate of $D$, say $D_{k}$ 
	changes the value of $y_{i,j}$ by at most
	\[
	y_{i,j} - y'_{i,j} \leq \frac{2}{\sqrt{d}} z_{i,k}
	\leq  \frac{2\|z_i\|_2 }{\sqrt{d}}.
	\]
	Hence, by the McDiarmid's inequality with probability at least $1- \alpha'$
	\[
	|y_{i,j}| \leq \frac{10\|z_i\|_2 \sqrt{\log \frac{1}{\alpha'}}}{\sqrt{d}}.
	\]
	Choosing $\alpha' = \alpha / n d$ and applying union bound over all 
	coordinates of all vectors yields the desired bound.
\end{proof}

Thus, after applying the random rotation, we have with probability $1 - 2\gamma$
that for all $j\in\brk{d}$, $\{Y_i(j)\}_{u \in [n]}$ is
$(\tau', 0)$-concentrated with $\tau' = 10\tau \sqrt{\log(nd/\alpha)/d}$. Hence
conditioned on this event, by Theorem~\ref{thm:winsorized} and a union bound over
$d$ coordinates, after applying $\textbf{WinsorizedMean1D}$ to each dimension,
we have that for all $j \in [d], \bar{Y}(j) \sim_{\beta} \bar{Y}'(j) $ where
$\beta = 1 - \frac{\sqrt{d}B}{\tau'} \exp \Paren{-\frac{n\eps'}{8}} $ and
	\[
		\bar{Y}'(j)  = \frac{1}{n} \sum_{i= 1}^n Y_i(j) + 
		\lap{\frac{8\tau'}{n \eps'}},
	\]
	Plugging in values of $\tau'$ and $\eps'$, it can be seen that $\bar{Y}'$ 
	satisfies the conditions in the theorem. By subadditivity of TV distance, 
	we have
	\[
		\bar{Y} \sim_{d\beta} \bar{Y}'.
	\]
	The theorem follows by noting the random rotation is an orthogonal transform 
	and preserves variance.

\subsection{Proof of Corollary~\ref{coro:bounded}} \label{proof:bounded} 

For all $i \in [n],$ let $X_i = \frac{1}{m} \sum_{j = 1}^m Z_j^{(i)}$, i.e., the
average of user $i$'s samples. Since $\norm{Z_j^{(i)}} \le B$, we know that
$X^n$ is $(B\sqrt{\log(2 n/\gamma)/(2m)}, \gamma)$-concentrated (e.g.,
see~\cite{JinNeGeKaJo19}). Hence by Theorem~\ref{thm:winsorized_highd}, if we
apply Algorithm~\ref{alg:winsorized_highd} to $X^n$, we have
$\alg(X^n) \sim_{\beta} \alg'(X^n)$ with
$\beta = \min \{1, \gamma + \alpha + \frac{d^2 B \sqrt{\log(dn/\alpha)}}{\tau}
\exp (-\frac{n\eps}{24\sqrt{d \log(1/\delta)}})\}$ with
$\tau = B\sqrt{\log(2 n/\gamma)/(2m)}$ and

\begin{equation} 
\E\brk*{\alg'(X^n) | X^n} = \frac1n \sum_{i = 1}^n X_i \mbox{~~and~~}
\Var\prn*{\alg'(X^n) | X^n} \le c_0\frac{d \tau^2 \log(dn/\alpha)
	\log(1/\delta)}{n^2 \eps^2}. \nonumber
\end{equation}
Hence 
\[
\E\brk*{\alg'(X^n) } = \E \brk{\E\brk*{\alg'(X^n) | X^n} } = \E\brk*{\frac1n 
	\sum_{i = 1}^n X_i } = \mu.
\]
\begin{align}
\Var\prn*{\alg'(X^n)} & = \E\brk*{ \Var\prn*{\alg'(X^n) | X^n} } + 
\Var\prn*{\E\brk*{\alg'(X^n) | X^n} } \nonumber \\
& \le \Var\prn*{\frac1n 
	\sum_{i = 1}^n X_i} + c_0 \frac{d \tau^2 \log(dn/\alpha)
	\log(1/\delta)}{n^2 \eps^2} \nonumber \\
& = \frac{\var(P_0)}{m n} + c_0\frac{d B^2 \log(2n/\gamma) 
	\log(dn/\alpha)
	\log(1/\delta)}{mn^2 \eps^2} \nonumber.
\end{align}
Combining the two, we have
\[
\expectation{\norm{\alg'(X^n) - \mu}_2^2} \le \frac{\var(P_0)}{m n} + 
c_0\frac{d B^2\log(2n/\gamma) 
	\log(dn/\alpha)
	\log(1/\delta)}{mn^2 \eps^2}.
\]
Since $\alg(X^n) \sim_{\beta} \alg'(X^n)$, we have
\[
\expectation{\norm{\alg(X^n) - \mu}_2^2} \le  \frac{\var(P_0)}{m 
n} + 
c_0\frac{d B^2\log(2n/\gamma) 
	\log(dn/\alpha)
	\log(1/\delta)}{mn^2 \eps^2} + \beta B^2.
\]
Taking $\alpha =\gamma = 
\frac{c_0d}{3mn^2\eps^2}$, we have when $n \ge c_1 
\frac{\sqrt{d\log(1/\delta)}}{\eps} \log( d m^{3/2}\eps^2)$ for a 
constant $c_1$, we have
\[
\expectation{\norm{\alg(X^n) - \mu}_2^2}  \le \frac{\var(P_0)}{m n} 
+ 
c_0\frac{ 2dB^2 \log(mn^2\eps^2/d) 
	\log(mn^3\eps^2)) 
	\log(1/\delta)}{mn^2 \eps^2}.
\]

\paragraph{Tightness of Corollary~\ref{coro:bounded}.} The first term is the
classic statistical rate even with unconstrained access to the samples. We prove
the tightness of the second term using the following family of truncated
Gaussian distributions. The proof follows a similar line of argument of the
proof for Theorem~\ref{thm:dp_sco_lower} in Section~\ref{app:sco-lb}. For a mean
$\mu\in\R^d$, a covariance $\Sigma \in \R^{d\times d}$ and $B > 0$, we consider
the family of $\ell_\infty$-truncated Gaussians, meaning
\begin{equation} \label{eqn:truncated_gaussian}
Z \sim \Gtr(\mu, \Sigma, B) \mbox{~~if~~} Z_0 \sim \msf{N}(\mu, \Sigma)
\mbox{~~and set for all $j \in \brk{d}$~~} Z(j) = \frac{Z_0(j)}{\max\crl{1, \abs{Z_0(j)} / 
		B}}.
\end{equation}
In other words, the standard high-dimensional Gaussian distribution where the
mass outside of $\mathbb{B}_{\infty}^d(0, B)$ has been projected back onto the
hyperrectangle coordinate-wise.

In this proof, we will take $\Sigma = \sigma^2 I_d$. We first state the
following Lemma, proved in Section~\ref{app:sco-lb}, which shows that when $B$
is large enough compared to $\norm{\mu}_2$ and $\sigma$, then the expectation of
$\Gtr(\mu, \sigma^2 I_d, B/\sqrt{d})$ and $\mu$ are exponentially close in
$\ell_2$-norm.

\begin{restatable}{lemma}{restateTruncatedMean}\label{lem:mean_diff}
	Suppose $\norm{\mu}_2 + 10\sqrt{d}\sigma < \normgrad$,
	\[
	\norm{ \expectsub{Z \sim \Gtr(\mu, \sigma^2 I_d, 
			\normgrad/\sqrt{d})}{Z} - \mu}_2 =
	\O\Paren{\sigma e^{-10d}}.
	\]
\end{restatable}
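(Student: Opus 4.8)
The plan is to control the clipping bias coordinate by coordinate and bound each coordinate's contribution by a Gaussian tail. Write $Z_0 \sim \mathsf{N}(\mu, \sigma^2 I_d)$ and $Z = \mathrm{clip}_B(Z_0)$ for the coordinatewise projection onto $[-B,B]$ with $B = G/\sqrt{d}$, so that for every $j$ one has $|Z(j) - Z_0(j)| = (|Z_0(j)| - B)_+$, and $Z = Z_0$ unless $\|Z_0\|_\infty > B$. First I would check, from the hypothesis, that $B \ge |\mu_j|$ for all $j$ (in the lower-bound instances the lemma is applied to, $\mu$ is a scaled sign vector, hence balanced with $|\mu_j| = \|\mu\|_2/\sqrt{d}$, so $B - |\mu_j| = (G - \|\mu\|_2)/\sqrt{d} > 10\sigma > 0$). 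Given $B \ge |\mu_j|$, the chain $(|Z_0(j)| - B)_+ \le (|Z_0(j)| - |\mu_j|)_+ \le |Z_0(j) - \mu_j|$ gives the pointwise bound $\|Z - Z_0\|_2 \le \|Z_0 - \mu\|_2$. By Jensen's inequality, $\|\mathbb{E}[Z] - \mu\|_2 = \|\mathbb{E}[Z - Z_0]\|_2 \le \mathbb{E}\big[\|Z_0 - \mu\|_2\,\mathbf{1}\{\|Z_0\|_\infty > B\}\big]$, and Cauchy--Schwarz then yields $\|\mathbb{E}[Z] - \mu\|_2 \le \sqrt{\mathbb{E}\|Z_0 - \mu\|_2^2}\cdot\sqrt{\mathbb{P}(\|Z_0\|_\infty > B)} = \sqrt{d}\,\sigma\cdot\sqrt{\mathbb{P}(\|Z_0\|_\infty > B)}$.

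It then remains to show $\mathbb{P}(\|Z_0\|_\infty > B)$ is exponentially small in $d$. A union bound plus the standard Gaussian tail $\mathbb{P}(|\mathsf{N}(0,\sigma^2)| > t) \le 2e^{-t^2/2\sigma^2}$ (applicable since $B - |\mu_j| > 0$) gives $\mathbb{P}(\|Z_0\|_\infty > B) \le 2\sum_{j=1}^d \exp\!\big(-(B - |\mu_j|)^2/(2\sigma^2)\big)$. Plugging in the per-coordinate margin $B - |\mu_j| = (G - \|\mu\|_2)/\sqrt{d} > 10\sigma$ coming from $\|\mu\|_2 + 10\sqrt{d}\,\sigma < G$ and balancedness, each summand is at most $e^{-50}$, so $\mathbb{P}(\|Z_0\|_\infty > B) \le 2d\,e^{-50}$ and $\|\mathbb{E}[Z] - \mu\|_2 \le \sqrt{2}\,d\,\sigma\,e^{-25}$; absorbing the polynomial factor into the exponent yields a bound of the form $\sigma\,e^{-\Omega(d)}$, as claimed. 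To pin down the exact constant in $O(\sigma e^{-10d})$ I would instead track each coordinate directly, using $|\mathbb{E}[Z(j)] - \mu_j| \le \mathbb{E}[(|Z_0(j)| - B)_+] \le \sigma\big(\phi(t_j) - t_j\bar{\Phi}(t_j)\big) \le \sigma\,\phi(t_j)$ with $t_j = (B - |\mu_j|)/\sigma$ (the rectified-Gaussian identity for $\mathbb{E}[(\mathsf{N}(0,1) - t)_+]$), and then summing the squares over $j$.

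The main obstacle is the quantitative endgame: extracting a usable per-coordinate clipping margin from the $\ell_2$-shaped hypothesis $\|\mu\|_2 + 10\sqrt{d}\,\sigma < G$. This step genuinely relies on the structure of the instances to which the lemma is applied --- namely that $\mu$ has roughly equal-magnitude coordinates --- since for a $\mu$ whose mass concentrates in one coordinate the clip to $[-G/\sqrt d, G/\sqrt d]$ could introduce $\Theta(G)$ bias in that coordinate; everything else reduces to routine Gaussian tail and rectified-Gaussian estimates. I would also take care that the constant $10$ in the hypothesis is exactly what makes the margin exceed the number of standard deviations needed for the target exponential rate, and verify the intermediate monotonicity step $B \ge |\mu_j|$ before invoking it.
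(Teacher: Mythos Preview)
Your approach is the same coordinate-wise argument the paper uses. The paper writes down the one-dimensional truncated-Gaussian mean identity and asserts that ``plugging in $\|\mu\|_2+10\sqrt d\,\sigma<G$'' finishes; your rectified-Gaussian bound $|\mathbb{E}[Z(j)]-\mu_j|\le\sigma\,\phi(t_j)$ with $t_j=(B-|\mu_j|)/\sigma$ is exactly that computation made explicit, and the Cauchy--Schwarz route is a cosmetic variant. You are right to isolate the balancedness issue: the paper's ``plugging in'' tacitly uses $|\mu_j|\le\|\mu\|_2/\sqrt d$ to convert the $\ell_2$ hypothesis into a per-coordinate margin $B-|\mu_j|>10\sigma$, and without that the lemma as literally stated can fail. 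This is a point the paper glosses over and you handle more carefully.

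One quantitative slip to fix: with per-coordinate margin $>10\sigma$ the Gaussian tail exponent you obtain is $-(10)^2/2=-50$, a \emph{constant}, so your bound is $O(\mathrm{poly}(d)\,\sigma\,e^{-50})$. The step ``absorbing the polynomial factor into the exponent yields $\sigma\,e^{-\Omega(d)}$'' is not valid---$d\,e^{-25}$ is not $e^{-\Omega(d)}$. The paper's own computation, read carefully, gives the same constant exponent (the hypothesis $G-\|\mu\|_2>10\sqrt d\,\sigma$ only buys $t_j>10$, not $t_j>c\sqrt d$), so the $e^{-10d}$ in the lemma statement is itself an overstatement. Fortunately nothing downstream needs the $d$ in the exponent: in every application the bias is only required to be negligible against a polynomial-in-$(d,m,n,1/\varepsilon)$ target, and $O(\sigma\,e^{-50})$ suffices for that.
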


\paragraph{Reducing to standard Gaussian mean estimation} We will take
$\sigma = B/20\sqrt{d}$ and $\norm{\mu}_2 \le B/2$, Hence assuming $m, n$ is
polynomial in $d$, $\O\Paren{\sigma e^{-10d}}$ is small compared to the bound in
Corollary~\ref{coro:bounded}. Note that we can always simulate a sample from
$\Gtr(\mu, \sigma^2 I_d, B/\sqrt{d})$ using a sample from
$\msf{N}(\mu, \sigma^2I_d)$ by performing truncation. Taking
$\sigma = B/20\sqrt{d}$, it would be enough to prove the following:
\[
	\inf_{\what{\mu}\in\mc{A}^{\msf{item}}_{\eps, \delta}}
	\sup_{\mu: \norm{\mu}_2 \le B/2}
	\E_{\cS \simiid  \msf{N}(\mu, \sigma^2I_d)}\brk*{\norm*{\what{\mu}(\cS) - \mu}_2^2} = \tilde{\Omega}\Paren{
		\frac{d^2\sigma^2}{mn^2\eps^2}},
\]
where $\mc{A}_{\eps, \delta}^{\msf{user}}$ denotes set of all user-level
$(\eps,\delta)$-DP algorithms. The next proposition, based on the fact that
sample mean is a sufficient statistic for i.i.d Gaussian samples, shows that we
can reduce the problem to Gaussian mean estimation under item-level DP, with a
smaller variance. The proposition is proved in Section~\ref{app:sco-lb}.

\begin{restatable}[From multiple samples to one good sample]{proposition}
  {restateItemtoUser}\label{prop:item-to-user}
	Suppose each user $u\in\brk{n}$ observe
	$(Z_1^{(u)}, \ldots, Z_m^{(u)}) \simiid \msf{N}(\mu, \sigma^2I_d)$. For any
	$(\eps, \delta)$ user-level DP algorithm $\alg^\msf{user}$, there exists an
	$(\eps, \delta)$-item-level DP algorithm $\alg^{\msf{item}}$ that takes as
	input $(\bar{Z}^{(1)}, \ldots, \bar{Z}^{(n)})$ with
	$\bar{Z}^{(u)} \defeq \tfrac{1}{m}\sum_{j\le m}Z^{(u)}_j$ and has the same
	performance as $\alg^{\msf{user}}$.
\end{restatable}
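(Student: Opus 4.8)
\textbf{Proof proposal for Proposition~\ref{prop:item-to-user}.}
The plan is to use the fact that $\bar Z^{(u)} = \tfrac1m\sum_{j\le m}Z^{(u)}_j$ is a sufficient statistic for the i.i.d.\ Gaussian block $(Z^{(u)}_1,\ldots,Z^{(u)}_m)$, so the conditional law of that block given its average does not depend on $\mu$ and can be simulated with no knowledge of $\mu$. Concretely, I would define $\alg^{\msf{item}}$ as follows: on input $(\bar Z^{(1)},\ldots,\bar Z^{(n)})$, independently for each $u\in\brk{n}$ draw $V^{(u)}_1,\ldots,V^{(u)}_m\simiid \msf{N}(0,\sigma^2 I_d)$, set $\bar V^{(u)}=\tfrac1m\sum_{j\le m}V^{(u)}_j$, and form the surrogate samples $\tilde Z^{(u)}_j \defeq \bar Z^{(u)} + V^{(u)}_j - \bar V^{(u)}$ for $j\in\brk{m}$. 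Then output $\alg^{\msf{user}}\prn{(\tilde Z^{(u)}_1,\ldots,\tilde Z^{(u)}_m)_{u\in\brk{n}}}$, using fresh randomness for $\alg^{\msf{user}}$ if it is itself randomized.

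For the performance claim I would show that $(\tilde Z^{(u)}_j)_{u\in\brk{n},j\in\brk{m}}$ has exactly the same joint distribution as $(Z^{(u)}_j)_{u\in\brk{n},j\in\brk{m}}$. The blocks are independent across $u$ in both cases, so it suffices to treat one user. There, $\bar Z^{(u)}\sim \msf{N}(\mu,\tfrac{\sigma^2}{m}I_d)$, and a standard Gaussian computation shows that, conditionally on $\bar Z^{(u)}=\bar z$, the true block $(Z^{(u)}_1,\ldots,Z^{(u)}_m)$ is distributed as $(\bar z + V_1-\bar V,\ldots,\bar z+V_m-\bar V)$ with $V_1,\ldots,V_m\simiid\msf{N}(0,\sigma^2 I_d)$: indeed $(Z^{(u)}_j-\bar Z^{(u)})_{j\le m}$ is jointly Gaussian, has mean zero, is independent of $\bar Z^{(u)}$, and has the same covariance as $(V_j-\bar V)_{j\le m}$. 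Hence the surrogate blocks have the right conditional law and the right marginal for $\bar Z^{(u)}$, so the full joint laws coincide. Consequently the output of $\alg^{\msf{item}}$ has the same distribution as that of $\alg^{\msf{user}}$ for every $\mu$, and in particular the same expected risk.

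For privacy, I would condition on the simulation randomness $\crl{V^{(u)}_j}$, which is drawn independently of the input $(\bar Z^{(1)},\ldots,\bar Z^{(n)})$. For a fixed realization of this randomness, the map $\bar Z^{(u)}\mapsto(\tilde Z^{(u)}_1,\ldots,\tilde Z^{(u)}_m)$ is deterministic and acts blockwise, so two item-level-neighboring inputs (differing only in some single $\bar Z^{(u)}$) are sent to two user-level datasets that differ only in user $u$'s block of $m$ samples, i.e.\ to user-level-neighboring datasets in the sense of Definition~\ref{def:user_dp}. Since $\alg^{\msf{user}}$ is $(\eps,\delta)$-user-level DP, for every measurable $O$ we get $\probof{\alg^{\msf{user}}(\cdot)\in O \mid \text{randomness}} \le e^{\eps}\probof{\alg^{\msf{user}}(\cdot')\in O \mid \text{randomness}} + \delta$; integrating over the (input-independent) simulation randomness preserves this inequality, so $\alg^{\msf{item}}$ is $(\eps,\delta)$-item-level DP.

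The only delicate points are the conditional-law identity for the Gaussian block given its mean, and making precise that the simulation randomness is independent of the data and shared between the two neighboring inputs, so that the "condition, then integrate" argument legitimately transfers the user-level guarantee to an item-level guarantee; I expect the rest to be routine.
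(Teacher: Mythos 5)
Your proposal is correct and takes essentially the same route as the paper: use sufficiency of the sample mean for a Gaussian with known covariance to simulate the conditional law of the full block given its average, then feed the reconstructed dataset to $\alg^{\msf{user}}$. The paper states this in two lines (sample $\tilde S_u$ from the conditional distribution, return $\alg^{\msf{user}}(\tilde S_1,\ldots,\tilde S_n)$) and does not spell out either the explicit simulation formula or the privacy check; your write-up fills in both, and in particular the argument that item-level neighbors of $(\bar Z^{(u)})_u$ map, conditionally on the (input-independent) simulation randomness, to user-level neighbors of $(\tilde S_u)_u$, so that integrating over that shared randomness transfers $(\eps,\delta)$-user-level DP to $(\eps,\delta)$-item-level DP, is a genuinely useful elaboration that the paper leaves implicit.
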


Since $\bar{Z}^{(u)}$ is a sample from $\msf{N}(\mu, \frac{\sigma^2}{m}I_d)$, it remains to prove
\[
\inf_{\what{\mu}\in\mc{A}^{\msf{item}}_{\eps, \delta}}
\sup_{\mu: \norm{\mu}_2 \le B/2}
\E_{Z^n \simiid  \msf{N}(\mu, \frac{\sigma^2}{m}I_d)}\brk*{\norm*{\what{\mu}(Z^n) - \mu}_2^2} = \tilde{\Omega}\Paren{
	\frac{d^2\sigma^2}{mn^2\eps^2}},
\]
where $\mc{A}^{\msf{item}}$ denotes set of all item-level $(\eps,\delta)$-DP
algorithms. This directly follows from~\citet[][Lemma~6.7]{kamath2019privately},
concluding the proof.

\subsection{Mean Estimation of Sub-Gaussian Distribution} 
\label{proof:subg}
In this section, we prove error guarantees for mean estimation of
sub-Gaussian distributions. We note that known results in mean estimation of
Gaussian distributions and moment bounded
distributions~\cite{kamath2019privately, kamath2020private} imply this bound. We
include it here for the sake of completeness to demonstrate the strength of our
techniques.
\begin{corollary} \label{coro:subg} Suppose $P$ is a $\sigma$-sub-Gaussian
  distribution supported on $[-B, B]^d$ with mean $\mu$. Assume
  $n \ge (c_1 \sqrt{d\log(1/\delta)}/\eps) \log( B(dn + n^2\eps^2)/\sigma)$ for
  a numerical constant $c_1 < \infty$, if $X^n \simiid P$, the output
  $\alg(X^n)$ of Algorithm~\ref{alg:winsorized_highd} statisfies\footnote{For
    precise log factors, see Appendix~\ref{proof:subg}.}
	\[
	\expectation{\norm{\alg(X^n) - \mu}_2^2} = 
	\tilde{O}\Paren{\frac{d\sigma^2}{n} + 
		\frac{d^2\sigma^2}{n^2\eps^2}}.
	\]
	Furthermore, the bound is tight up to logarithmic factors.
\end{corollary}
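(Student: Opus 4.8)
The plan is to apply the high-dimensional Winsorized mean estimator (Algorithm~\ref{alg:winsorized_highd}) off the shelf and then run the same $\beta$-closeness argument as in the proof of Corollary~\ref{coro:bounded}, the only difference being that we plug in the sub-Gaussian concentration radius rather than the bounded-data one. First I would note that if $X^n \simiid P$ with $P$ being $\sigma^2$-sub-Gaussian and supported on $[-B,B]^d$, then by the standard sub-Gaussian maximal inequality (the fact recalled right after Definition~\ref{def:concentrated}, see~\cite{Vershynin19}), $X^n$ is $(\tau,\gamma)$-concentrated around $\mu$ in $\ell_2$-norm with $\tau = \sigma\sqrt{d\log(2n/\gamma)}$. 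Feeding this pair $(\tau,\gamma)$ into Theorem~\ref{thm:winsorized_highd} produces an estimator $\alg'(X^n)$ such that $\alg(X^n)\sim_\beta\alg'(X^n)$, with $\E[\alg'(X^n)\mid X^n] = \tfrac1n\sum_i X_i$, $\Var(\alg'(X^n)\mid X^n) \le c_0\, d\tau^2\log(dn/\gamma)\log(1/\delta)/(n^2\eps^2)$, and $\beta = \min\{1,\ 2\gamma + (d^2 B\sqrt{\log(dn/\gamma)}/\tau)\exp(-n\eps/(24\sqrt{d\log(1/\delta)}))\}$ (here I identify the two symbols $\alpha,\gamma$ appearing in the statement of Theorem~\ref{thm:winsorized_highd}).

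Next I would use the bias--variance decomposition for $\alg'$: since, conditionally on $X^n$, the estimator $\alg'$ is unbiased for the empirical mean, the cross term vanishes and $\E\|\alg'(X^n)-\mu\|_2^2 = \Var(P)/n + \E[\Var(\alg'(X^n)\mid X^n)]$. Sub-Gaussianity gives $\Var(P) = \tr\mathrm{Cov}(P) \le d\sigma^2$, and substituting $\tau^2 = \sigma^2 d\log(2n/\gamma)$ into the conditional-variance bound yields $\E[\Var(\alg'\mid X^n)] = \tilde O(d^2\sigma^2/(n^2\eps^2))$. Hence $\E\|\alg'(X^n)-\mu\|_2^2 = \tilde O(d\sigma^2/n + d^2\sigma^2/(n^2\eps^2))$, which is already the target rate.

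It then remains to transfer this from $\alg'$ to $\alg$ using $\beta$-closeness. The point to be careful about is that the output of $\alg$ carries (rotated) Laplace noise and so is not a priori bounded; I would absorb the probability-$\beta$ ``bad'' event either by composing $\alg$ with the privacy-preserving post-processing $\Pi_{\mathbb{B}_2^d(0,B\sqrt d)}$ (harmless since $\mu$ lies in this ball and the projection only contracts the error of $\alg'$), picking up an extra term $4dB^2\beta$, or by a Cauchy--Schwarz bound exploiting that all moments of Laplace noise are finite, picking up $\sqrt\beta\cdot\poly(B,d,\sigma,1/(n\eps))$. Either way, choosing $\gamma \asymp d\sigma^2/(B^2 n^2\eps^2)$ kills the $2\gamma$ contribution, and the hypothesis $n \ge (c_1\sqrt{d\log(1/\delta)}/\eps)\log(B(dn+n^2\eps^2)/\sigma)$ is exactly what forces $\exp(-n\eps/(24\sqrt{d\log(1/\delta)}))$ to be small enough that this extra term is dominated by $\tilde O(d^2\sigma^2/(n^2\eps^2))$; combining gives the claimed upper bound. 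I expect this last step — matching the opaque form of $\beta$ against the precise logarithm $\log(B(dn+n^2\eps^2)/\sigma)$ in the assumption on $n$, and thereby choosing $c_1$ — to be the only real point of friction, everything else being a direct citation of results already in the paper.

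Finally, for tightness I would reuse the construction from the tightness part of Corollary~\ref{coro:bounded}, specialized to $m=1$: the $\ell_\infty$-truncated Gaussians $\Gtr(\mu,\sigma^2 I_d,B/\sqrt d)$ with $\|\mu\|_2\le B/2$ are $\sigma$-sub-Gaussian and supported on $[-B,B]^d$, their means are within $O(\sigma e^{-10d})$ of $\mu$ by Lemma~\ref{lem:mean_diff}, and then the matching lower bounds of order $d\sigma^2/n$ (the classical non-private Gaussian mean-estimation rate) and $d^2\sigma^2/(n^2\eps^2)$ (item-level $(\eps,\delta)$-DP, via \citet[][Lemma~6.7]{kamath2019privately}) transfer to this family up to logarithmic factors, yielding the stated optimality.
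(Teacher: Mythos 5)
Your upper-bound argument is exactly the paper's: one observes that i.i.d.\ $\sigma^2$-sub-Gaussian samples on $[-B,B]^d$ are $(\sigma\sqrt{d\log(2n/\gamma)},\gamma)$-concentrated, feeds this $\tau$ into Theorem~\ref{thm:winsorized_highd}, decomposes $\E\|\alg'-\mu\|_2^2$ into $\Var(P)/n$ plus the conditional (privacy) variance, and converts $\alg'$ to $\alg$ via $\beta$-closeness, tuning $\gamma$ so the leftover is dominated by the privacy term. (You are actually \emph{more} careful than the paper about the fact that the raw output of Algorithm~\ref{alg:winsorized_highd} is unbounded because of the Laplace noise: the paper simply writes an additive $\beta B^2$ slack, which tacitly presumes a projection onto the known ball; your explicit projection-or-Cauchy--Schwarz remedy is the right way to make that step rigorous, and is worth flagging.)

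Where you genuinely depart from the paper is the tightness. The paper cites \citet[Theorem~3.1, Lemma~3.1]{cai2019cost} directly: those results give matching lower bounds for mean estimation of $d$-dimensional distributions supported on $[-\sigma,\sigma]^d$, which are automatically $O(\sigma^2)$-sub-Gaussian, so one gets the $\Omega(d\sigma^2/n + d^2\sigma^2/(n^2\eps^2))$ rate with no extra work. You instead recycle the truncated-Gaussian family $\Gtr(\mu,\sigma^2 I_d, B/\sqrt d)$ from the lower bound of Corollary~\ref{coro:bounded} (specialized to $m=1$) together with Lemma~\ref{lem:mean_diff} and \citet[Lemma~6.7]{kamath2019privately}. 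That route is workable but leaves a small gap you should fill: you have to argue that the $\ell_\infty$-truncated Gaussian remains $\sigma^2$-sub-Gaussian (with the \emph{same} $\sigma$, not just the Hoeffding bound $B^2/d$, which would be the wrong scaling). This is plausible given the standing assumption $\|\mu\|_2 + 10\sqrt d\,\sigma < B$, since the truncation then affects only an exponentially small tail, but it is not a one-liner and the paper's chosen reference sidesteps the question entirely. The trade-off is: your argument is self-contained within the paper's own machinery, whereas the paper's is shorter and avoids the sub-Gaussianity-of-truncation verification by outsourcing to a reference tailored to exactly the bounded class needed.

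One minor correction: the paper writes $\Var\bigl(\tfrac1n\sum_i X_i\bigr)=d\sigma^2/n$; your $\Var(P)\le d\sigma^2$ (hence $\le d\sigma^2/n$ after averaging) is the correct statement, since sub-Gaussianity only upper-bounds the coordinate variances.
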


The proof is almost parallel to the proof of Corollary~\ref{coro:bounded} by 
noting that $X^n$ is $(\sigma \sqrt{d \log(2 
	n/\gamma)}, \gamma)$-concentrated and 
\[
	\Var\prn*{\frac1n 
			\sum_{i = 1}^n X_i} = \frac{d\sigma^2}{n}.
\]
The tightness of the result follows from Theorem~3.1 and Lemma~3.1 in
\citep{cai2019cost}, which proves lower bounds for mean estimation
of $k$-dimensional random variables supported on $[-\sigma, 
\sigma]^k$
under $(\eps, \delta)$-DP constraints.

\arxiv{\subsection{Proof of Theorem~\ref{thm:sq_highd}} \label{proof:sq_highd}}

\notarxiv{%
\subsection{Uniform concentration: answering many queries privately} 
\label{sec:uc}

The statistical query framework subsumes many learning algorithms. For example,
we easily express stochastic gradient methods for solving ERM in the
language of SQ algorithms (see beginning of
Section~\ref{sec:erm}). In the next theorem, we show that with a uniform
concentration assumption we can answer a sequence of adaptively chosen
queries with variance---or, equivalently, privacy cost---proportional to the
concentration radius of the queries instead of the full range.

\begin{theorem} \label{thm:sq_highd} If $(Z^n, \cQ_B^d)$ is
	$(\tau, \gamma)$-uniformly concentrated, then for any sequence of
	(possibly adaptively chosen) queries
	$\phi_1, \phi_2, ..., \phi_K \in \cQ_B^d$, there exists an
	$(\eps, \delta)$-DP algorithm $\alg$, such that 
	$\alg$ outputs $v_1, v_2, ..., v_K$ satisfying
	$(v_1, v_2, ..., v_K) \sim_{\beta} (v'_1, v'_2, ..., v'_K)$, where
	\[
	\expectation{v'_k|Z^n} = \frac{1}{n}\sum_{i = 1}^n \phi_k(Z_i)
	\mbox{~~and~~} \Var\prn*{v'_k|Z^n} \le \frac{8 c_0 d K\tau^2
		\log(Kdn/\gamma)
		\log^2(2K/\delta)}{n^2\eps^2} = 
		\tilde{O}\prn*{\frac{dK\tau^2}{n^2\eps^2}},
	\]
	where $c_0 = 102400$ and
        $\beta = \min \crl*{1, 2 \gamma + \frac{d^2 K B
            \sqrt{\log(dKn/\gamma)}}{\tau} \exp 
            \prn*{-\frac{n\eps}{48\sqrt{2dK\log(2/\delta)
                \log(2K/\delta)}}}}$.
\end{theorem}

The algorithm for Theorem~\ref{thm:sq_highd} is simply applying
Algorithm~\ref{alg:winsorized_highd} to $\crl{\phi_k(Z_i)}_{i\in[n]}$ with
$\eps_0 = \frac{\eps}{2\sqrt{2K\log(2/\delta)}}$ and
$\delta_0 = \frac{\delta}{2K}$ for each query. Algorithm~\ref{alg:wgd} is an
illustration of an application of this result.\arxiv{ The proof is given in
Appendix~\ref{proof:sq_highd}.}

}

\begin{proof}
For each query $\phi_k, k \in [K]$, the algorithm computes 
$\phi_k(Z_i), i \in [n]$ and returns
\[
	v_k = 
	\textbf{WinsorizedMeanHighD}\prn*{\crl{\phi_k(Z_i)}_{i\in[n]},\eps_0,
		\delta_0, \tau, 
		B, \gamma/K}
\]
where
\[
	\eps_0 = \frac{\eps}{2\sqrt{2K\log(2/\delta)}}, \;\;\; \delta_0 =
	\frac{\delta}{2K}.
\]

\paragraph{Privacy guarantee.} The proof is immediate and hinges on the
strong-composition theorem. Under the standard strong
composition results of~\cite[Theorem~III.3]{dwork2010boosting}, for
any $\delta' \in (0, 1]$, the output of Algorithm~\ref{alg:wgd} is
$(\bar{\eps}, \bar{\delta})$-user-level DP with
\begin{equation*}
\bar{\eps} = K\eps_0(\exp(\eps_0) - 1) + \sqrt{2K\ln(1/\delta')}\eps_0, 
\qquad
\delta = K\delta_0 + \delta'.
\end{equation*}
Plugging in values of $\eps_0, \delta_0$
concludes the proof.

\paragraph{Utility guarantee.} The proof follows is very similar to the proof of
Theorem~\ref{thm:winsorized_highd} with $\alpha = \gamma/K$. We conclude by
using the subadditivity of the TV distances (or equivalently, a union bound)
over all $K$ queries.\end{proof}

\section{Proofs from Section~\ref{sec:erm}}

\subsection{Uniform Concentration}\label{app:uc}

\restateUCGrad*

\begin{proof}
  The proof relies on a standard covering number argument. We know that
  $\sup_{\theta_1, \theta_2 \in \Theta}\norm{\theta_1 - \theta_2} \le R$. This
  implies that $\Theta \subset \mathbb{B}^d_2(\theta_0, R)$, where
  $\mathbb{B}^d_2(v, r)$ is the $d$-dimensional $\ell_2$-ball centered at
  $v\in\R^d$ of radius $r$. Without loss of generality, we assume
  $\theta_0 = 0$, i.e. the constraint set $\Theta$ is centered at $0$.

  Let us consider $\Gamma_{\norm{\cdot}_2}(\Theta, \Delta) \eqdef \Gamma$, a
  $\Delta$-net of $\Theta$ for the $\ell_2$ norm, i.e. such that
  $\abs{\Gamma} < \infty$ and that for all $\theta, \vartheta\in\Theta$,
  $\norm{\theta-\vartheta}_2 \le \Delta$. Standard results
  (e.g.~{\citet[][Corollary 4.2.13]{Vershynin19}}) guarantee that there exists
  such a set and that its cardinality is smaller than $(1+2R/\Delta)^d$.

  Since $\ell$ is uniformly $H$-smooth, for any sample $S$ we immediately have that
  \begin{equation*}
    \sup_{\theta\in\Theta}\norm{\nabla \ff(\theta;S) - \nabla\ff(\theta;P)}_2
    \le \max_{\vartheta \in \Gamma} \norm{\nabla \ff(\vartheta;S) - \nabla\ff(\vartheta;P)}_2
    + 2H\Delta.
  \end{equation*}
  Consequently, letting $t > 0$, we have that
  \begin{equation*}
    \P\prn*{\sup_{\theta\in\Theta}\norm{\nabla \ff(\theta;S) - \nabla\ff(\theta;P)}_2 \ge t}
    \le \P\prn*{\max_{\vartheta \in \Gamma}
      \norm{\nabla \ff(\vartheta;S) - \nabla\ff(\vartheta;P)} \ge t/2}
    + \P(H\Delta \ge t/4).
  \end{equation*}
  For the second term, we simply need to ensures that when choosing $t$ and
  $\Delta$, it holds that $H\Delta < t/4$. Let us now bound the first term. Once
  again, let us consider $\Xi$ a $1/2$-net of $\mathbb{B}_2^d(0, 1)$. For any
  $v\in\R^d$, it holds that
  \begin{equation*}
    \norm{v}_2 = \sup_{\norm{u}_2 \le 1} \tri{u, v} \le \max_{\tilde{u} \in \Xi} \tri{\tilde{u}, v}
    + \sup_{w\in\mathbb{B}_2^d(0, 1/2)}\tri{w, v} = \max_{\tilde{u} \in \Xi} 
    \tri{\tilde{u}, v}
    + \frac{1}{2}\norm{v}_2,
  \end{equation*}
  which implies that $\norm{v}_2 \le 2 \max_{\tilde{u} \in \Xi}\tri{\tilde{u}, 
  v}$. Thus,
  \begin{align*}
    \P\prn*{\max_{\vartheta \in \Gamma} \norm{\nabla \ff(\vartheta;S) - \nabla\ff(\vartheta;P)}_2 \ge t/2}
    & \le \P\prn*{\max_{\vartheta\in \Gamma, v \in \Xi} \tri{v, \nabla
    \ff(\vartheta;S) - \nabla\ff(\vartheta;P)} \ge t/4}\\
    & \le \abs{\Gamma} \cdot \abs{\Xi} e^{-\tfrac{mt^2}{2\sigma^2}} \\
    & = 5^d \prn*{1+\tfrac{2R}{\Delta}}^d e^{-\tfrac{mt^2}{2\sigma^2}},
  \end{align*}
  where the penultimate line follows from a union bound and
  Assumption~\ref{ass:subG} which guarantees that $\nabla 
  \ff(\vartheta;S)$ is a
  $\sigma^2/m$-sub-Gaussian vector. We set
  $t = \sigma\sqrt{\tfrac{2}{m}\prn{d\log(5 + 10R / \Delta) +
    \log(n/\alpha)}}$. Picking
  $\Delta = \min\crl{1, \tfrac{\sqrt{2}\sigma}{4H}\sqrt{\tfrac{d}{m}}}$ 
  and applying a union bound over $n$ points conclude 
  the proof.
  
\end{proof}

\subsection{Stochastic gradient algorithms}\label{app:conv-sgd}

\begin{algorithm}\label{alg:sgd-erm}
  \caption{Generic optimization algorithm}\label{alg:gen-opt}
  \begin{algorithmic}[1]
    \STATE \textbf{Input:} Number of steps $T$, stochastic first-order oracle
    $\msf{O}_{F, \nu^2}$, optimization algorithm with
    $\crl*{\mc{O}, \msf{Query}, \msf{Update}, \msf{Aggregate}}$, initial output
    $o_0$.  \FOR{$t=0, \ldots, T-1$} \STATE $\theta_t \gets \msf{Query}(o_t)$.
    \STATE $g_t \gets \msf{O}_{F, \nu^2}(\theta_t)$.  \STATE
    $o_{t+1} \gets \msf{Update}(o_t, g_t)$.  \ENDFOR \RETURN
    $\what{\theta}_T \gets \msf{Aggregate}(o_0, \ldots, o_T)$.
  \end{algorithmic}
\end{algorithm}

\begin{proposition}[Convergence of stochastic gradient
  methods]\label{prop:conv-sgd}

  Let $F:\Theta\to\R$ be an $\smooth$-smooth function. Assume that we 
  have access to a
  stochastic first-order gradient oracle with variance bounded by $\nu^2$, 
  denoted by
  $\msf{O}_{F, \nu^2}$. In each of the following cases, let $T$ be the desired
  number of calls to $\msf{O}_{F, \nu^2}$, there exist an optimization
  algorithm---defined by \textsf{Update}, \textsf{Query} and \textsf{Aggregate}
  and used as in Algorithm~\ref{alg:gen-opt}---with output
  $\what{\theta}_T \in \Theta$ such that the following convergence guarantees
  hold.
  \begin{enumerate}[label=(\roman*)]
  \item\label{item:conv} \cite[][Theorem~6.3]{bubeck2014convex} Assume $F$ is convex, then it holds that
    \begin{equation}
      \E\brk{F(\what{\theta}_T) - \inf_{\theta' \in \Theta}F(\theta')}
      \le O\Paren{\frac{\smooth R^2}{T} + \frac{\nu R}{\sqrt{T}}}.
    \end{equation}
  \item\label{item:sconv} \cite[][Corollary~32]{KulunchakovMa19} Assume that $F$ is
    $\mu$-strongly-convex, and that we have access to $\theta_0\in\Theta$ such that
    $F(\theta_0) - \inf_{\theta'\in\Theta}F(\theta') \le \Delta_0$, then it holds
    that
    \begin{equation}
      \E\brk{F(\what{\theta}_T) - \inf_{\theta' \in \Theta}F(\theta')}
      \le \O\Paren{\Delta_0 \exp\prn*{-\frac{\mu}{\smooth}T} + 
      \frac{\nu^2}{\mu 
      T}}.
    \end{equation}
  \item\label{item:nonconv} \cite[][Corollary~3.6]{DavisDr19} Let us define the
    \emph{gradient mapping} $\msf{G}_{F, \gamma}$
    \begin{equation*}
      \msf{G}_{F, \gamma}(\theta) \defeq \frac{1}{\gamma}\brk*{\theta - \Pi_{\Theta}\prn*
        {\theta - \gamma\nabla F(\theta)}}.
    \end{equation*}
    Assume that we have access to $\theta_0$ such that
    $\norm{\msf{G}_{F, 1/\smooth}(\theta_0)}_2 -\inf_{\theta'}\norm{\msf{G}_{F, 
    1/\smooth}(\theta')}_2 \le \Delta_1$, it
    holds that
    \begin{equation}
      \E\norm{\msf{G}_{F, 1/\smooth}(\what{\theta}_T)}_2^2 \le 
      \O\Paren{\frac{\smooth\Delta}{T} + 
      \nu\sqrt{\frac{\smooth\Delta_1}{T}}}.
    \end{equation}
  \end{enumerate}
\end{proposition}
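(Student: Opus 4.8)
The plan is to prove each of the three items by instantiating a concrete variant of projected stochastic gradient descent inside the \textsf{Query}/\textsf{Update}/\textsf{Aggregate} template of Algorithm~\ref{alg:gen-opt}, and then invoking the corresponding classical convergence theorem — after checking in each case that our oracle $\msf{O}_{F,\nu^2}$ (unbiased for $\nabla F$ with variance at most $\nu^2$) is precisely the stochastic first-order model those theorems assume. Since $\Theta$ is closed and convex, the Euclidean projection $\Pi_\Theta$ is well-defined throughout, and the diameter bound $R$ together with the $H$-smoothness of $F$ supplies the only structural input beyond the noise level.

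For item~\ref{item:conv}, I would take \textsf{Query}$(o_t)=\theta_t$ (the last iterate), \textsf{Update}$(o_t,g_t)=o_t\cup\crl{\Pi_\Theta(\theta_t-\eta g_t)}$ with a fixed stepsize $\eta=\min\crl{1/H,\, R/(\nu\sqrt{T})}$, and \textsf{Aggregate} the uniform average $\frac1T\sum_{t<T}\theta_t$. The bound $O(HR^2/T+\nu R/\sqrt{T})$ then follows from the standard analysis of averaged projected SGD on smooth convex functions (Theorem~6.3 of~\cite{bubeck2014convex}); the only verification is that the noise enters only through $\E\norm{g_t-\nabla F(\theta_t)}_2^2\le\nu^2$, which holds by definition of $\msf{O}_{F,\nu^2}$. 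For item~\ref{item:nonconv}, I would run projected SGD with a fixed stepsize of order $1/H$ and let \textsf{Aggregate} return one of $\theta_0,\ldots,\theta_{T-1}$ chosen uniformly at random; Corollary~3.6 of~\cite{DavisDr19}, applied with the gradient mapping $\msf{G}_{F,1/H}$, smoothness $H$, oracle variance $\nu^2$, and the warm-start quantity $\Delta_1$ bounding $\norm{\msf{G}_{F,1/H}(\theta_0)}_2-\inf_{\theta'}\norm{\msf{G}_{F,1/H}(\theta')}_2$, yields the claimed $\O(H\Delta_1/T+\nu\sqrt{H\Delta_1/T})$.

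The delicate item is~\ref{item:sconv}, where one must obtain a term decaying \emph{linearly} in $T$ on the optimization error while the statistical error stays at $O(\nu^2/(\mu T))$ — a plain fixed-stepsize averaged scheme cannot do both simultaneously. Here I would invoke the estimate-sequence construction of~\cite{KulunchakovMa19}: projected SGD with a fixed stepsize and their prescribed non-uniform averaging, followed by a single restart phase with a geometrically decreasing stepsize, initialized from a point $\theta_0$ with $F(\theta_0)-\inf_{\theta'}F(\theta')\le\Delta_0$. Their Corollary~32, instantiated with $L=H$, strong-convexity parameter $\mu$, noise level $\nu^2$, and warm start $\Delta_0$, gives $\O(\Delta_0\exp(-\mu T/H)+\nu^2/(\mu T))$. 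I expect this to be the main obstacle: one must confirm that their ``$L$-smooth, $\mu$-strongly-convex, unbiased-gradient-with-variance-$\nu^2$'' setting matches ours verbatim, and that the three effects (warm start, smooth–strongly-convex geometry, gradient noise) combine to land exactly on the stated two-term bound with no hidden dependence on $R$ and none on $\Delta_0$ in the statistical term. The privacy and uniform-concentration machinery of the earlier sections plays no role here — this proposition is a purely optimization-theoretic ingredient — so the whole argument reduces to careful bookkeeping of which classical result supplies which display.
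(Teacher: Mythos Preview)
Your proposal is correct and matches the paper's approach exactly: the paper does not give a standalone proof of this proposition but simply cites the same three external results (Bubeck, Kulunchakov--Mairal, Davis--Drusvyatskiy) and records in a footnote/remark precisely the SGD variants you describe---fixed-stepsize averaged projected SGD for the convex case, projected SGD with non-uniform averaging plus a single restart for the strongly-convex case, and projected SGD with a uniformly random past iterate for the non-convex case. Your additional bookkeeping (checking the oracle model matches, noting the two-phase structure needed for item~\ref{item:sconv}) is exactly the verification the paper leaves implicit.
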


\notarxiv{
\begin{remark}
  For convex functions, the algorithm is fixed-stepsize, averaged, projected
  SGD. For strongly-convex functions, the algorithm consists of projected SGD
  with a fixed stepsize and non-uniform averaging followed by a single restart
  with decreasing stepsize. Finally, in the non-convex case, the \textsf{Query}
  and \textsf{Update} sub-routine are also projected SGD with fixed stepsize
  while the \textsf{Aggregate} selects one of the past iterates uniformly at
  random.

\end{remark}}

\subsection{Proof of Theorem~\ref{thm:erm}}\label{app:proof-thm-erm}

\restateERM*

\begin{proof} 
	First note that the gradient estimation steps 
	(Step~\ref{step:grad_computation} and~\ref{step:mean_estimation}) in 
	Algorithm~\ref{alg:wgd} can 
	be viewed as answering $T$ adaptively chosen queries. 
	
	\paragraph{Privacy guarantees.} The privacy guarantee follows directly 
	from Theorem~\ref{thm:sq_highd}.
	
	\paragraph{Utility guarantees.}
	By 
	Proposition~\ref{prop:uc-grad}, we have the gradients are $(\tau, 
	\gamma/3)$-concentrated with $\tau = \sigma\sqrt{d 
	\log\prn*{\frac{RHm}{d\sigma}}/m
		+ \log\prn*{\frac{3n}{\gamma}}/m}$.
	Hence, Theorem~\ref{thm:sq_highd} guarantees that 
	\[
		(\bar{g}_0, \ldots, 
		\bar{g}_{T-1}) \sim_{\beta} (\bar{g}'_0, \ldots, 
		\bar{g}'_{T-1}),
	\]
	where $\beta = \min \crl*{1, \frac{2\gamma}{3} + \frac{d^2 T B
			\sqrt{\log(3dTn/\gamma)}}{\tau} \exp 
			\prn*{-\frac{n\eps}{48\sqrt{2dT\log(2/\delta)
					\log(2T/\delta)}}}}$ and $\forall i \in [T]$, $\bar{g}'_0$ is from 
					$\msf{O}_{\ff(\cdot; \cS), \nu^2}(\theta_t)$ with
					\[
						\nu^2 \le \frac{8 c_0 d T\tau^2
							\log(3Tdn/\gamma)
							\log^2(2T/\delta)}{n^2\eps^2} \le \frac{8 c_0 d^2 
							T\sigma^2
							\log(3Tdn/\gamma)
							\log^2(2T/\delta)\log(3\normpara \smooth 
							mn/d\sigma\gamma)}{n^2\eps^2}.
					\]
	Moreover, when $n \ge \tilde{\Omega}(1) 
	\sqrt{dT\log(2/\delta)\log(2T/\delta)\log(dmTB/\sigma\gamma)}/\eps$, 
	where 
	$\tilde{\Omega}(1)$ hides log-log factors, we have $\beta < \gamma$. 
	  \paragraph{Convergence rates} Finally, depending on the assumptions 
	  on the
	function $\ff(\cdot;\mc{S})$, we use the various results of
	Proposition~\ref{prop:conv-sgd} for the value of $\nu$ above. To make 
	the
	results simpler we note that for \ref{item:sconv} of
	Proposition~\ref{prop:conv-sgd}, we upper bound $\Delta_0$ by
	$G\normpara$ and for \ref{item:nonconv}, we upper 
	bound
	$\Delta_1$ by $HR^2$. This concludes the proof.

\end{proof}

\section{Proofs for Section~\ref{sec:sco}}\label{app:sco}

\subsection{Proofs for Theorem~\ref{thm:dp_sco_upper}}\label{app:sco-ub}

We begin with a result that guarantees that the (regularized) empirical risk
minimizer has good generalization properties. It relies on a combination of
convex analysis and stability arguments. This proof exists in the literature
(see, e.g.~\cite{ShalevShwartz2009StochasticCO}), we add it here for
completeness and with some small variation: (1) that the optimization is
constrained (2) that Assumption~\ref{ass:subG} might improve stability when
$\sigma\sqrt{d} \le G$.

\begin{proposition}[Generalization properties of regularized
  ERM]\label{prop:stability}
  Let $(Z_1, \ldots, Z_N) \simiid P$. Let $\ell:\Theta\times\ZZ \to \R$ be
  convex, $G$-Lipschitz with respect to the $\norm{\cdot}_2$ and such that
  Assumption~\ref{ass:subG} holds. Let us denote
  $\underline{G} = \min\crl{G, \sigma\sqrt{d}}$. Let
  \begin{equation*}
    \theta^\ast_{S, \lambda, \vartheta} \defeq \argmin_{\theta\in\Theta}
    \crl*{\ff(\theta; S) + \frac{\lambda}{2}\norm{\theta - \vartheta}_2^2}.
  \end{equation*}
  The following holds
  \begin{equation}
    \E\brk*{\ff(\theta^\ast_{S, \lambda, \vartheta}; P)} - \ff(\theta; P)
    \le \frac{\lambda}{2}\E\brk*{\norm{\theta - \vartheta}_2^2} +
    \Olog(1)\, \frac{G\underline{G}}{N\lambda}, \mbox{~~for all~~}
    \theta \in \Theta.
  \end{equation}
\end{proposition}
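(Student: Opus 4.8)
My plan is to follow the classical stability argument for regularized ERM, with a sharpening that exploits Assumption~\ref{ass:subG}. Fix the comparator $\theta\in\Theta$ (deterministic, with $\vartheta$ formed from data independent of $S$, as in our applications), and abbreviate $\theta^\ast\defeq\theta^\ast_{S,\lambda,\vartheta}$. Since $\theta'\mapsto \ff(\theta';S)+\tfrac\lambda2\norms{\theta'-\vartheta}_2^2$ is $\lambda$-strongly convex and $\theta^\ast$ minimizes it over the convex set $\Theta$, optimality gives $\ff(\theta^\ast;S)\le \ff(\theta;S)+\tfrac\lambda2\norms{\theta-\vartheta}_2^2-\tfrac\lambda2\norms{\theta^\ast-\vartheta}_2^2\le \ff(\theta;S)+\tfrac\lambda2\norms{\theta-\vartheta}_2^2$. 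Taking expectations and using $\E[\ff(\theta;S)]=\ff(\theta;P)$ (valid since $\theta$ is independent of $S$), it remains to bound the \emph{generalization gap} $\mathrm{gap}\defeq\E[\ff(\theta^\ast;P)-\ff(\theta^\ast;S)]$ by $\Olog(1)\,\tfrac{G\,\underline{G}}{N\lambda}$.

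For this I would run the replace-one-sample argument. Let $S^{(i)}$ be $S$ with $Z_i$ swapped for an i.i.d.\ copy $Z_i'$, set $\theta^{(i)}\defeq\theta^\ast_{S^{(i)},\lambda,\vartheta}$ and $u_i\defeq\theta^{(i)}-\theta^\ast$. Renaming $Z_i\leftrightarrow Z_i'$ in the two ways of writing $\E[\ff(\theta^\ast;P)]$ and $\E[\ff(\theta^\ast;S)]$ and averaging gives $\mathrm{gap}=\tfrac1{2N}\sum_{i}\E[W_i]$ with $W_i\defeq(\f(\theta^{(i)};Z_i)-\f(\theta^\ast;Z_i))+(\f(\theta^\ast;Z_i')-\f(\theta^{(i)};Z_i'))$. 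Adding the two strong-convexity/optimality inequalities for $\theta^\ast$ and $\theta^{(i)}$ yields $\lambda\norms{u_i}_2^2\le \tfrac1N W_i$, while convexity of $\f(\cdot;z)$ at $\theta^{(i)}$ and at $\theta^\ast$ gives $W_i\le \langle \nabla\f(\theta^{(i)};Z_i)-\nabla\f(\theta^\ast;Z_i'),\,u_i\rangle$. Combining, $\norms{u_i}_2\le\tfrac1{\lambda N}\norms{\nabla\f(\theta^{(i)};Z_i)-\nabla\f(\theta^\ast;Z_i')}_2$, hence
\[
\mathrm{gap}\;\le\;\frac1{2\lambda N}\,\E\norms{\nabla\f(\theta^{(1)};Z_1)-\nabla\f(\theta^\ast;Z_1')}_2^2 .
\]
Using only $G$-Lipschitzness, i.e.\ $\norms{\nabla\f(\theta';z)}_2\le G$, this already gives $\mathrm{gap}\le \tfrac{2G^2}{\lambda N}$, proving the statement with $\underline{G}$ replaced by $G$.

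To get $\underline{G}$, I would decompose the gradient difference as
\[
\nabla\f(\theta^{(1)};Z_1)-\nabla\f(\theta^\ast;Z_1')=\xi_1-\xi_1'+\bigl(\nabla\ff(\theta^{(1)};P)-\nabla\ff(\theta^\ast;P)\bigr),
\]
where $\xi_1\defeq\nabla\f(\theta^{(1)};Z_1)-\nabla\ff(\theta^{(1)};P)$ and $\xi_1'\defeq\nabla\f(\theta^\ast;Z_1')-\nabla\ff(\theta^\ast;P)$. Crucially $\theta^{(1)}$ is independent of $Z_1$ and $\theta^\ast$ is independent of $Z_1'$, so, conditioning on the point, Assumption~\ref{ass:subG} (coordinate variances at most $\sigma^2$) gives $\E\norms{\xi_1}_2^2\le d\sigma^2$ and $\E\norms{\xi_1'}_2^2\le d\sigma^2$, and a.s.\ each is bounded by $2G$. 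The remaining term is the fluctuation of the \emph{population} gradient between the two $O(G/\lambda N)$-close minimizers, which I would control by $\smooth$-smoothness of $\f$ (a standing assumption in this part of the paper): $\norms{\nabla\ff(\theta^{(1)};P)-\nabla\ff(\theta^\ast;P)}_2\le \smooth\norms{u_1}_2$; feeding this back into $\lambda\norms{u_1}_2^2\le\tfrac1N W_1$ with the population term kept absorbs it whenever $N\lambda\gtrsim\smooth$, leaving $\mathrm{gap}=O\bigl(\tfrac{d\sigma^2}{\lambda N}\bigr)$. Taking the minimum of this with the crude $\tfrac{2G^2}{\lambda N}$ bound above yields $\mathrm{gap}=\Olog(1)\,\tfrac{\underline{G}^{\,2}}{\lambda N}\le\Olog(1)\,\tfrac{G\,\underline{G}}{\lambda N}$, as claimed (the $\Olog$ and the weakening from $\underline{G}^{\,2}$ to $G\,\underline{G}$ leave room for the mild condition on $N\lambda$ and for logarithmic slack).

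I expect the main obstacle to be precisely this passage from the $G^2$ bound to the $G\,\underline{G}$ bound: one cannot worst-case $\norms{\nabla\f}_2$ by $G$ in both factors, and must instead work with \emph{on-average} rather than uniform stability, isolate the deterministic part $\nabla\ff(\cdot;P)$ of the stochastic gradient so that it cancels between the two fresh samples, and argue that the residual $\nabla\ff(\cdot;P)$-fluctuation across the two nearby minimizers is lower order (this is where smoothness and the lower bound on $N\lambda$ enter). The rest---strong convexity of the regularized objective, the exchangeability identity, and the optimality slack $\tfrac\lambda2\norms{\theta-\vartheta}_2^2$---is routine.
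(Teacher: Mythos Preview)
Your stability skeleton---optimality of the regularized minimizer plus a replace-one-sample identity---is the same as the paper's. The divergence is in how you extract the $\underline{G}$ factor, and there your route leaves a genuine gap.

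The paper's argument evaluates both perturbed gradients at the \emph{same} point. From strong monotonicity of $\nabla\ff_0$ and first-order optimality of $\theta_0,\theta_1$ one gets
\[
\tfrac{\lambda}{2}\norms{\theta_0-\theta_1}_2 \;\le\; \tfrac{1}{N}\,\norms{\nabla\f(\theta_1;Z_i)-\nabla\f(\theta_1;Z_i')}_2,
\]
with both gradients at $\theta_1$. Since $\theta_1$ depends on $Z_i'$, pointwise sub-Gaussianity is not enough here; instead the paper invokes the \emph{uniform} concentration bound of Proposition~\ref{prop:uc-grad} (applied with $m=1$): with probability $1-\alpha$, $\sup_{\theta}\norms{\nabla\f(\theta;Z)-\nabla\ff(\theta;P)}_2=\Olog(\sigma\sqrt{d})$. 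This covering-number result uses smoothness, but only \emph{logarithmically} in $H$, which is exactly what the $\Olog$ in the proposition absorbs. One then gets $\norms{\theta_0-\theta_1}_2=\Olog(\underline{G}/(N\lambda))$ with high probability, multiplies by $G$ via Lipschitzness to bound the loss difference, and sets $\alpha=\underline{G}/(N\lambda R)$ so that the failure event contributes at the same order.

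Your route keeps the two gradients at \emph{different} points $\theta^{(1)}$ and $\theta^\ast$, which lets you use pointwise sub-Gaussianity (nice), but leaves the residual $\nabla\ff(\theta^{(1)};P)-\nabla\ff(\theta^\ast;P)$. Bounding it by $H\norms{u_1}_2$ and absorbing requires $N\lambda\gtrsim H$. That is a \emph{polynomial} side condition, and it is not absorbed by either the $\Olog$ or by the slack between $\underline{G}^{\,2}$ and $G\underline{G}$: when $N\lambda<H$ and $\sigma\sqrt{d}\ll G$, your argument only yields $G^2/(N\lambda)$, which exceeds the claimed $G\underline{G}/(N\lambda)$ by the unbounded factor $G/\underline{G}$. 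So your closing sentence is where the argument breaks. The fix is exactly what the paper does: work at a single evaluation point and appeal to Proposition~\ref{prop:uc-grad} to control $\norms{\nabla\f(\theta;Z)-\nabla\ff(\theta;P)}_2$ uniformly in $\theta$; the smoothness then enters only inside a logarithm.
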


\begin{proof}
  We first show the stability of the minimizer of the regularized empirical
  risk. Let us consider $S_0 = \crl{Z_1, \ldots, Z_N}$ and $S_1 = \crl{Z'_1, \ldots, Z'_N}$
  where $Z_j = Z'_j$ for all $j \neq i$ in $\brk{N}$. We first show that
  \begin{equation*}
    \norm*{\theta^\ast_{S, \lambda, \vartheta} - \theta^\ast_{S', \lambda, \vartheta}}_2
    \le \Olog(1)\, \frac{\underline{G}}{N\lambda}.
  \end{equation*}
  For conciseness, we denote
  $\ff_b(\theta) \defeq \ff(\theta;S_b) +
  \tfrac{\lambda}{2}\norm{\theta-\vartheta}_2^2$ and
  $\theta^\ast_{S_b, \lambda, \vartheta} = \theta_b$ for $b\in\crl{0, 1}$. Since
  $\ff_0$ is $\lambda$-strongly-convex, its
  gradients are co-coercive, meaning
  \begin{equation*}
    \frac{\lambda}{2}\norm{\theta_0 - \theta_1}_2^2
    \le \tri*{\nabla \ff_0
      (\theta_0) - \nabla \ff_0
      (\theta_1),
      \theta_0 - \theta_1}.
  \end{equation*}

  First, let us note that
  $\nabla\ff_0(\theta_1) = \nabla \ff_1(\theta_1) + \tfrac{1}{N}(\nabla
  \f(\theta_1;Z_i) - \nabla\f(\theta_1;Z'_i))$. In other words,
  \begin{equation*}
    \frac{\lambda}{2}\norm{\theta_0 - \theta_1}_2^2
    \le \tri*{\nabla \ff_0(\theta_0), \theta_0 - \theta_1}
    + \tri*{\nabla \ff_1(\theta_1), \theta_1 - \theta_0}
    + \frac{1}{N}\tri*{\nabla \f(\theta_1;Z_i) - \nabla \f(\theta_1;Z'_i), \theta_1 - \theta_0}.
  \end{equation*}
  Since $\theta_b$ is the minimizer of $\ff_b(\cdot)$ constrained in $\Theta$
  for $b\in\crl{0, 1}$, by first-order optimiality condition, it holds that
  \begin{equation*}
    \tri{\nabla \ff_b(\theta_b), \theta_b - \theta_{1-b}} \le 0.
  \end{equation*}
  Consequently,
  \begin{equation*}
    \frac{\lambda}{2}\norm{\theta_0 - \theta_1}_2^2 \le
    \frac{1}{N}\tri*{\nabla \f(\theta_1;Z_i) - \nabla \f(\theta_1;Z'_i), \theta_1 - \theta_0}
    \le \frac{1}{N}\norm{\nabla \f(\theta_1;Z_i) - \nabla \f(\theta_1;Z'_i)}_2
    \norm{\theta_1 - \theta_0}_2.
  \end{equation*}
  Since $\f(\cdot;z)$ is $G$-Lipschitz for all $z\in\ZZ$, we have that
  $\norm{\nabla \f(\theta_1;Z_i) - \nabla \f(\theta_1;Z'_i)}_2 \le 2G$. However,
  with the addition of Assumption~\ref{ass:subG}, Proposition~\ref{prop:uc-grad}
  (applied with $m=1$) guarantees that with probability greater than $1-\alpha$,
  \begin{equation*}
    \sup_{\theta \in \Theta} \norm{\nabla \ff (\theta; Z_i) - \nabla \ff(\theta;P)}
    \le \Olog(1)\, \sigma\sqrt{d},
  \end{equation*}
  where we note that the dependence is only \emph{logarithmic} in $\alpha$. This
  immediately yields that with probability greater than $1-\alpha$,
  \begin{equation*}
    \frac{\lambda}{2}\norm{\theta_0 - \theta_1}_2
    \le \Olog(1)\, \frac{\underline{G}}{\lambda N}.
  \end{equation*}

  Finally, this implies that
  \begin{equation*}
    \mbox{for all~} z\in\ZZ,
    \E\brk*{\abs*{\f(\theta_0;z) - \f(\theta_1;z)}} \le G\E\brk*{\norm{\theta_0 - \theta_1}_2}
    \le \Olog(1) \, \frac{G\underline{G}}{\lambda N},
  \end{equation*}
  by $G$-Lipschitzness of $\f$ and setting
  $\alpha = \frac{\underline{G}}{\lambda N R}$, or in the language of stability
  (see e.g.~\cite{bousquet2002stability}),
  $S \to \theta^\ast_{S, \lambda, \vartheta}$ is
  $\tfrac{G\underline{G}}{\lambda N}$-uniformly-stable. Standard stability
  arguments let us conclude the proof.
\end{proof}

We now state and prove Theorem~\ref{thm:dp_sco_upper}.

\restateThmSCOUpperBound*

\begin{proof}
  The proof hinges on repeatedly using of Corollary~\ref{coro:erm_sc} and
  Proposition~\ref{prop:stability} after decomposing the excess risk. Recall
  that $\what{\theta}_t$ is the output of round $t$ i.e.
  \begin{equation*}
    \what{\theta}_t \approx \argmin_{\theta\in\Theta}\ff(\theta;S_t)
    + \frac{\lambda_t}{2}\norm{\theta-\what{\theta}_{t-1}}_2^2.
  \end{equation*}
  We denote by $\theta_t^\ast$ the \emph{true} minimizer at round $t$ i.e.
  \begin{equation*}
    \theta_t^\ast \defeq \argmin_{\theta\in\Theta}\ff(\theta;S_t)
    + \frac{\lambda_t}{2}\norm{\theta-\what{\theta}_{t-1}}_2^2.
  \end{equation*}
  Let us denote
  $\theta^\ast = \argmin_{\theta\in\Theta}\ff(\theta;P)$, we decompose the
  regret in the following way
  \begin{align*}
    \E\brk{\ff(\what{\theta}_T;P) - \ff(\theta^\ast;P)}
    & =
    \underbrace{\E\brk*{\ff(\what{\theta}_T;P) - \ff(\theta_T^\ast;P)}}_{\eqdef \Delta_0}
  + \underbrace{\sum_{t=2}^T \E\brk*{\ff(\theta^\ast_t;P) - \ff(\theta_{t-1}^\ast;P)}}_{\eqdef \Delta_1} \\
& + \underbrace{\E\brk*{\ff(\theta_1^\ast;P) - \ff(\theta^\ast;P)}}_{\eqdef \Delta_2}.
\end{align*}

By Proposition~\ref{prop:stability} and because $\Theta$ is bounded by $R$, we
directly have that
\begin{equation*}
  \Delta_2 \le \frac{\lambda_1 R^2}{2} + \Olog\prn*{\frac{G\Gbar}{\lambda_1 n_1 m}}.
\end{equation*}

Turning to $\Delta_1$, for every $t \in \crl{2, \ldots, T}$, again by Proposition~\ref{prop:stability}, it holds that
\begin{align*}
  \E\brk*{\ff(\theta^\ast_t;P) - \ff(\theta_{t-1}^\ast;P)}
  & \le \frac{\lambda_t}{2}\E\brk*{\norm{\theta^\ast_{t-1} - \what{\theta}_{t-1}}_2^2}
  + \Olog\prn*{\frac{G\Gbar}{\lambda_t n_t m}}. \\
  & \le \Olog\prn*{\frac{\lambda_t}{2}\frac{\sigma^2 d^2}{\lambda^2_{t-1} n_{t-1}^2 m \eps^2}
    + \frac{G\Gbar}{\lambda_t n_t m}} \\
  & \le \Olog\prn*{\frac{\sigma^2 d^2}{\lambda_{t-1} n_{t-1}^2 m \eps^2}
    + \frac{G\Gbar}{\lambda_t n_t m}},
\end{align*}
where the second inequality is an application of
Corollary~\ref{coro:erm_sc}\footnote{The condition on $n$ for the corollary
  holds when the condition on $n$ is satisfied in the Theorem statement.} and
the third is because $\lambda_{t-1} = \lambda_t / 4$. Noting that
$\lambda_{t-1} n^2_{t-1} = 2^{t-1}\lambda n$, we have
\begin{equation*}
  \Delta_1 \le \Olog\prn*{(T-1)\frac{\sigma^2 d^2}{\lambda n^2 m \eps^2}
    + \frac{G\Gbar}{\lambda n m}\sum_{t=2}^T 2^{-t}} =
  \Olog\prn*{\frac{\sigma^2 d^2}{\lambda n^2 m \eps^2} + 
  \frac{G\Gbar}{\lambda n m}},
\end{equation*}
where we use that $T$ is logarithmic. Finally, using
Corollary~\ref{coro:erm_sc}, and that $\ff(\cdot;P)$ is $G$-Lipschitz, we have that
\begin{equation*}
  \Delta_0 \le \E\brk*{G\norm{\theta_T^\ast - \what{\theta}_T}_2}
  \le G\sqrt{ \E\brk*{\norm{\theta_T^\ast - \what{\theta}_T}_2^2}}
  = \Olog\prn*{\frac{G\sigma d}{2^T\lambda n \sqrt{m} \eps}}.
\end{equation*}

Combining the upper bounds, we have
\begin{equation*}
  \E\brk*{\ff(\what{\theta}_T;P) - \ff(\theta^\ast;P)}
  = \Olog\prn*{
    \frac{G\sigma d}{2^T\lambda n \sqrt{m} \eps}
    + \frac{\sigma^2 d^2}{\lambda n^2 m \eps^2} + \frac{G\Gbar}{\lambda n m}
    + \frac{\lambda_1 R^2}{2} + \frac{G\Gbar}{\lambda_1 n_1 m}},
\end{equation*}
and setting $T = \ceil{\log_2\prn{\tfrac{Gn\sqrt{m}\eps}{\sigma d}}}$ and
$\lambda = \sqrt{\tfrac{\sigma^2d^2}{n^2m\eps^2} +
  \tfrac{G\Gbar}{nm}} / R$ yields the final result.

\end{proof}

\subsection{Proofs of Theorem~\ref{thm:dp_sco_lower}}\label{app:sco-lb}

\restateThmSCOLowerBound*

The first term is a lower bound for SCO without any
constraints~\citep{LevyDu19,AgarwalBaRaWa12}. We only prove the second term
here. Note that without loss of generality, we can assume
$G \ge 20\sigma\sqrt{d}$ and prove a lower bound of
$\Omega(R\Gsig \sqrt{d}/n\sqrt{m} \eps )$. Else, we set $\sigma' = G/(20\sqrt{d})$ and
embed the original problem into a lower-dimensional (thus easier) problem where
the gradients are $\sigma'^2$sub-Gaussian. In the rest of the section, we
consider $\Theta = \mathbb{B}_2^d(0, R)$ for $R > 0$. As we explained in
Section~\ref{sec:sco_lower}, we consider the following loss\footnote{The
  negative sign is here for convenience; a positive sign would entail reducing
  it to finding the \emph{negative} normalized mean.}
\begin{equation*}
  \f(\theta;z) \defeq -\tri{\theta, z}.
\end{equation*}
Finally, we define (a collection) of data distributions. For a mean
$\mu\in\R^d$, a covariance $\Sigma \in \R^{d\times d}$ and $B > 0$, we consider
the family of $\ell_\infty$-truncated Gaussians. Recall the definition in~\eqref{eqn:truncated_gaussian},
\begin{equation*}
  Z \sim \Gtr(\mu, \Sigma, B) \mbox{~~if~~} Z_0 \sim \msf{N}(\mu, \Sigma)
  \mbox{~~and set for all $j \in \brk{d}$~~} Z(j) = \frac{Z_0(i)}{\max\crl{1, \abs{Z_0(j)} / 
  B}}.
\end{equation*}
In other words, the standard high-dimension Gaussian distribution where the mass
outside of $\mathbb{B}_{\infty}^d(0, B)$ has been radially projected back on 
the
sphere on each dimension.

Consequently, considering the data distribution $P = \Gtr(\mu, 
\sigma^2I_d, G/\sqrt{d})$,
$\f$ is almost surely $G$-Lipschitz. Additionally, both
assumptions~\ref{ass:smoothness} and \ref{ass:subG} hold.

We now formally state the reduction from SCO to Gaussian mean-estimation. The
main difficulty is that the mean of $\Gtr(\mu, \sigma^2I_d, G/\sqrt{d})$ and
$\msf{N}(\mu, \sigma^2I_d)$ do not coincide. However, we show that 
when $G$ is
sufficiently large compared to $\norm{\mu}_2$---which implies that we rarely
clip---then the reduction holds.

\begin{proposition}[Reduction from SCO to Gaussian mean estimation with item-level DP constraints]
  \label{prop:sco-gaussian-mean}
  Let $B > 0, \sigma > 0, G > 0$ such that $B + 10\sigma\sqrt{d} < G$, we
  consider the following collections of distributions
  \begin{equation*}
    \mc{P}_{\sigma, B} \defeq \crl*{\msf{N}(\mu, \sigma^2 I_d) : \norm{\mu}_2 \in [B/2, B]}
    \mbox{~~and~~}
    \mc{P}^\mrm{tr}_{\sigma, B, G/\sqrt{d}} \defeq \crl*{\Gtr(\mu, \sigma^2 
    I_d, G/\sqrt{d}) :
      \norm{\mu}_2 \in [B/2, B]}.
  \end{equation*}

  The following reduction holds
  \begin{align*}
    \inf_{\substack{\alg:\ZZ\to\Theta\\\alg\in\mc{A}^{\msf{item}}_{\eps, \delta}}}
    \sup_{P\in\mc{P}^\mrm{tr}_{\sigma, B, G/\sqrt{d}}}
      \E_{P} & \brk*{\ff(\A(Z^n); P) - \inf_{\theta'\in\Theta} \ff(\theta';P)}
    \ge \\
             & \frac{BR}{4}\inf_{\substack{\what{u}:\ZZ\to\mathbb{S}^{d-1} \\ \what{u}\in\mc{A}^{\msf{item}}_{\eps, \delta}}}
    \sup_{P \in
    \mc{P}_{\sigma, B}}
    \E_{P}\brk*{\norm*{\what{u}(Z^n) - \mu / \norm{\mu}_2}_2^2}+ \O\prn*{R\sigma e^{-10d}},
  \end{align*}
  where we recall that $\mc{A}^{\msf{item}}_{\eps, \delta}$ is the set of
  $(\eps, \delta)$-item-level DP algorithm for which the domain and co-domain
  are clear from context.
\end{proposition}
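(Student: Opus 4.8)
The plan is to reduce stochastic convex optimization (with the linear loss $\f(\theta;z) = -\tri{\theta, z}$) to estimating the direction of the mean of an untruncated Gaussian, in two stages: first connecting the SCO excess risk to direction estimation for the \emph{truncated} distribution, then moving from the truncated distribution to the untruncated one at an exponentially small cost.

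First I would observe that for the population objective
$\ff(\theta;P) = -\tri{\theta, \E_P[Z]}$ on $\Theta = \mathbb{B}_2^d(0,R)$, the
minimizer is $\theta^\ast = R\,\E_P[Z]/\norm{\E_P[Z]}_2$ and the optimal value
is $-R\norm{\E_P[Z]}_2$. A short computation shows that for any $\theta$ with
$\norm{\theta}_2 \le R$, writing $\bar\theta = \theta/\norm{\theta}_2$ (or any
unit vector if $\theta=0$) and $u = \E_P[Z]/\norm{\E_P[Z]}_2$, we have
$\ff(\theta;P) - \ff(\theta^\ast;P) = R\norm{\E_P[Z]}_2 - \tri{\theta, \E_P[Z]}
\ge \norm{\E_P[Z]}_2\!\left(R - \tri{\theta, u}\right)
\ge \tfrac{R\norm{\E_P[Z]}_2}{2}\norm{\bar\theta - u}_2^2$,
using $R - \tri{\theta,u} \ge R - R\tri{\bar\theta,u} = \tfrac{R}{2}\norm{\bar\theta-u}_2^2$ and $\norm{\theta}_2 \le R$. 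So any algorithm for SCO yields, by normalizing its output, a direction estimator whose squared error is bounded by $\tfrac{2}{R\norm{\E_P[Z]}_2}$ times the excess risk. When $P = \Gtr(\mu,\sigma^2 I_d, G/\sqrt{d})$ with $\norm{\mu}_2 \in [B/2, B]$ and $B + 10\sigma\sqrt{d} < G$, Lemma~\ref{lem:mean_diff} guarantees $\norm{\E_P[Z] - \mu}_2 = O(\sigma e^{-10d})$, so $\norm{\E_P[Z]}_2 \ge B/2 - O(\sigma e^{-10d}) \ge B/4$ and, crucially, the direction $\E_P[Z]/\norm{\E_P[Z]}_2$ differs from $\mu/\norm{\mu}_2$ by $O(\sigma e^{-10d}/B)$ in $\ell_2$.

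Next I would transfer from the truncated to the untruncated model. Any $(\eps,\delta)$-item-level DP algorithm $\what u$ that estimates $\mu/\norm{\mu}_2$ from i.i.d.\ samples of $\msf{N}(\mu,\sigma^2 I_d)$ can be simulated from truncated samples: given $Z^n \simiid \Gtr(\mu,\sigma^2 I_d, G/\sqrt{d})$ we cannot literally untruncate, but we argue the other direction — given untruncated samples we can apply the coordinate-wise clipping map $\Pi$ to obtain exact samples from the truncated law, so an algorithm good on $\mc{P}^{\mrm{tr}}$ composed with $\Pi$ is a $(\eps,\delta)$-DP algorithm good on $\mc{P}_{\sigma,B}$ (post-processing and the clipping being applied sample-wise preserves the item-level neighboring relation). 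Combining this simulation with the first-stage inequality, the triangle inequality $\norm{\what u(Z^n) - \mu/\norm{\mu}_2}_2^2 \le 2\norm{\what u(Z^n) - \E_P[Z]/\norm{\E_P[Z]}_2}_2^2 + 2\,O(\sigma^2 e^{-20d}/B^2)$, and absorbing the $O(\sigma^2 e^{-20d})$ terms into an additive $O(R\sigma e^{-10d})$ (after multiplying through by $BR/4$), yields the claimed chain of inequalities.

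The main obstacle — and the reason this proposition is stated as a lemma rather than being obvious — is handling the mismatch between $\E_P[Z]$ and $\mu$ carefully enough that all the error terms genuinely collapse to $O(R\sigma e^{-10d})$ rather than something polynomially large; this is exactly what Lemma~\ref{lem:mean_diff} buys us, and I would want to double-check that the hypothesis $B + 10\sigma\sqrt{d} < G$ (so that along each coordinate the truncation level $G/\sqrt{d}$ exceeds $\norm{\mu}_\infty + 10\sigma$ typically) is the right regime for that lemma to apply with the stated exponent. The remaining bookkeeping — verifying the quadratic growth inequality, checking that normalization of the SCO output is a valid post-processing step for DP, and confirming the clipping map is coordinate-wise and hence DP-compatible — is routine. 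Finally, one composes with Proposition~\ref{prop:item-to-user} (to reduce user-level to item-level with variance $\sigma^2/m$) and Lemma~6.7 of \cite{kamath2019privately} (the $\Omega(d^2\sigma^2/(n^2\eps^2))$ lower bound for Gaussian mean estimation, which also lower-bounds direction estimation via Corollary~\ref{cor:lb-direction}) to conclude the $\Omega(R\Gsig\sqrt{d}/(n\sqrt{m}\eps))$ bound of Theorem~\ref{thm:dp_sco_lower}.
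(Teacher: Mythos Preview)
Your overall strategy matches the paper's: relate the linear-loss excess risk to the squared direction error of the mean, invoke Lemma~\ref{lem:mean_diff} to control the gap between $\E_P[Z]$ and $\mu$, and then pass from truncated to untruncated Gaussians by the clipping simulation. The final paragraph on composing with Proposition~\ref{prop:item-to-user} and Corollary~\ref{cor:lb-direction} is also correct.

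However, there is a genuine gap in your quadratic growth step. You claim
\[
  R - \tri{\theta, u} \;\ge\; R - R\tri{\bar\theta, u} \;=\; \tfrac{R}{2}\norm{\bar\theta - u}_2^2,
\]
justified by $\norm{\theta}_2 \le R$. But $\tri{\theta, u} = \norm{\theta}_2\tri{\bar\theta,u}$, so the first inequality is equivalent to $(R - \norm{\theta}_2)\tri{\bar\theta, u} \ge 0$, which \emph{fails} whenever $\tri{\bar\theta, u} < 0$ and $\norm{\theta}_2 < R$. Concretely, if an SCO algorithm outputs a short vector pointing roughly opposite to $u$, its excess risk is close to $R\norm{\E_P[Z]}_2$, yet $\norm{\bar\theta - u}_2^2$ can be close to $4$, so the claimed inequality is violated by a factor of~$2$. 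Hence normalizing the SCO output to the sphere is \emph{not} a valid post-processing step for this bound.

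The paper handles this differently: it compares against $\theta_0 = R\mu/\norm{\mu}_2$ and uses the direction estimator $\theta/R$ (not $\theta/\norm{\theta}_2$), for which the inequality
\[
  1 - \tri*{\tfrac{\theta}{R}, \tfrac{\mu}{\norm{\mu}_2}} \;\ge\; \tfrac{1}{2}\norm*{\tfrac{\theta}{R} - \tfrac{\mu}{\norm{\mu}_2}}_2^2
\]
holds for \emph{all} $\theta \in \Theta$ simply because $\norm{\theta/R}_2 \le 1$. Since $\theta/R$ need not lie on $\mathbb{S}^{d-1}$, the paper then invokes Yao's minimax principle to argue that for linear losses the minimax-optimal SCO algorithm always outputs on the boundary $\norm{\theta}_2 = R$, so one may restrict the infimum to such algorithms without loss. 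This is the missing ingredient in your argument; once you add it (or equivalently use $\theta/R$ and the Yao step), the rest of your proposal goes through, up to slightly worse constants coming from your use of the triangle inequality to pass from $\E_P[Z]/\norm{\E_P[Z]}_2$ to $\mu/\norm{\mu}_2$ (the paper instead absorbs this into the additive $O(R\sigma e^{-10d})$ term directly at the comparison-point step).
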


Before proving the proposition, we prove a Lemma that says, as previewed, that
when $G$ is large enough compared to $\norm{\mu}_2$ and $\sigma$, 
then the expectation of
$\Gtr(\mu, \sigma^2 I_d, G/\sqrt{d})$ and $\mu$ are exponentially close 
in $\ell_2$-norm.
\restateTruncatedMean*
\begin{proof}[Proof of Lemma~\ref{lem:mean_diff}]
	It would be enough to show that $\forall i \in [d]$,
	\[
		|\expectsub{Z \sim \Gtr(\mu, \sigma^2 I_d, \normgrad/\sqrt{d})}{Z}(i) 
		- 
		\mu(i)| = O(\sigma e^{-10d}/\sqrt{d}).
	\]
	Let $\alpha = \frac{\mu(i) + G/\sqrt{d}}{\sigma}$, $\beta = 
	\frac{\mu(i) - G/\sqrt{d}}{\sigma}$ and $\phi(x) = 
	\frac{1}{\sqrt{2\pi}}e^{-\frac{1}{2}x^2}$ be the density function 
	of $N(0,1)$. We have
	\[
		\expectsub{Z \sim \Gtr(\mu, \sigma^2 I_d, \normgrad/\sqrt{d})}{Z}(i) 
		= \mu(i) -
		\sigma \frac{\phi(\alpha) - \phi(\beta) 
		}{\int_{\alpha}^{\beta}\phi(x)dx}.
	\]
	Plugging in $\norm{\mu}_2 + 10\sqrt{d}\sigma < \normgrad$ we obtain 
	the lemma.
\end{proof}
We can now prove the proposition.
\begin{proof}
  Let $P = \Gtr(\mu, 
  \sigma^2I_d, G/\sqrt{d})$
  and denote $\mu^\mrm{tr} = \E_P\brk{Z}$ the mean of the truncated
  Gaussian. We consider $\theta_0 = - R \tfrac{\mu}{\norm{\mu}_2}$, in other
  words the minimum of $\ff(\theta;P)$, if the Gaussian was not truncated. Let $\theta\in\Theta$, we
  have that
  \begin{align*}
    \ff(\theta;P) - \ff(\theta^\ast;P)
    & \ge -\ff(\theta; P) - \ff(\theta_0;P) \\
    & = -\tri{\theta - \theta_0, \mu^\mrm{tr}} \\
    & = -\tri{\theta - \theta_0, \mu} - \tri{\theta - \theta_0, \mu^\mrm{tr} - \mu} \\
    & \ge -\tri{\theta - \theta_0, \mu} + 2 \inf_{\theta'} \tri{\theta', \mu^\mrm{tr} - \mu} \\
    & = -\tri{\theta - \theta_0, \mu} + \O(R\sigma e^{-10d}),
  \end{align*}
  where the final line uses the fact that
  $\inf_{\norm{v}_2 \le R}\tri{u, v} = - R \norm{u}_2$ and
  Lemma~\ref{lem:mean_diff}.

  Moreover, we have
  \begin{align*}
    \tri{\theta_0 - \theta, \mu}
    & = R\norm{\mu}_2 \prn*{1 - \tri*{\tfrac{\theta}{R}, \tfrac{\mu}{\norm{\mu}_2}}} \\
     & \ge \frac{R\norm{\mu}_2}{2}\prn*{\norm*{\tfrac{\theta}{R}}_2^2
      + \norm*{\tfrac{\mu}{\norm{\mu}_2}}_2^2 - 2\tri*{\tfrac{\theta}{R}, \tfrac{\mu}{\norm{\mu}_2}}} \\
    & =\frac{R\norm{\mu}_2}{2}\norm*{\tfrac{\theta}{R} - \tfrac{\mu}{\norm{\mu}_2}}_2^2,
  \end{align*}
  where we used that $\norm{\theta / R} \le 1$ and completed the square.

  We now finally prove the main statement of the proposition. The first
  observation is that, since the loss is linear, we only need to consider
  estimators $\alg:\ZZ^n \to \Theta$ such that $\norm{\alg(z^n)}_2 = R$ for all
  $z^n \in \ZZ^n$, as the minimum is always on the boundary\footnote{ To make
    this rigorous, we consider Yao's minimax principe. It holds that
    $\min_{\alg: |\alg|_2 \le \normpara} \max_{\mu} \E \brk{\bar{\ff}(\alg(Z^n);
      P)} = \max_{\cD} \min_{\alg: |\alg|_2 \le \normpara} \EE_{\mu \sim \cD}
    \E\brk{\bar{\ff}(\alg(Z^n);P) | \mu}$ where
    $\bar{\ff}(\theta;P) \defeq \ff(\theta;P) - \inf_{\theta'}\ff(\theta';P)$
    and $\cD$ is a prior over $\mu$. For a given prior $\cD$, the Bayes optimal
    classifier is the minimum of the posterior mean, which means that
    $\alg(Z^n)$ minimizes $\tri{\theta, \E[\mu|Z^n]}$ over
    $\mathbb{B}_2^d(0, R)$ and thus has norm $R$. We can thus constrain the
    class of estimators to be of norm exactly $R$ for any prior $\cD$. Another
    application of Yao's minimax principle guarantees that this is also the case
    for the original (minimax) problem.}. Consequently, we have
 \begin{align*}
   & \inf_{\alg: |\alg|_2 \le R} \sup_{P\in\mc{P}^\mrm{tr}_{\sigma, B, 
   G/\sqrt{d}}} \E 
     \brk{\ploss(\alg(Z^n); P) - \min_{\theta' \in \Theta 
     }\ploss(\theta';P)} \\
   = & \inf_{\alg: |\alg|_2 = R} \sup_{P\in\mc{P}^\mrm{tr}_{\sigma, B, 
   G/\sqrt{d}}} \E 
       \brk{\ploss(\alg(Z^n); P) - \min_{\theta' \in \Theta 
       }\ploss(\theta';P)} \\
   \ge  & \inf_{\alg: |\alg|_2 = R} \sup_{P\in\mc{P}^\mrm{tr}_{\sigma, B, 
   G/\sqrt{d}}} \E
          \frac{R\norm{\mu}_2}{2}\norm*{\tfrac{\alg(Z^n)}{R} - \tfrac{\mu}{\norm{\mu}_2}}_2^2
          + O\Paren{\normpara \rho e^{-10d}} 
           \\ 
   \ge & \inf_{\alg: |\alg|_2 = R} \sup_{P\in\mc{P}^\mrm{tr}_{\sigma, B, 
   G/\sqrt{d}}}\frac{RB}{4} 
         \E \norm*{\tfrac{\alg(Z^n)}{R} - \tfrac{\mu}{\norm{\mu}_2}}_2^2
         + O\Paren{\normpara \rho e^{-10d}} 
          \\
   = & \inf_{\what{u}: \norm{\hat{u}} = 
   1}\sup_{P\in\mc{P}^\mrm{tr}_{\sigma, B, G/\sqrt{d}}}\frac{RB}{4} 
       \E \norm*{\what{u}(Z^n) - \tfrac{\mu}{\norm{\mu}_2}}_2^2
       + O\Paren{\normpara \rho e^{-10d}} \\
   \ge & \inf_{\what{u}: \norm{\hat{u}} = 1}\sup_{P\in\mc{P}_{\sigma, B}}\frac{RB}{4} 
       \E \norm*{\what{u}(Z^n) - \tfrac{\mu}{\norm{\mu}_2}}_2^2
       + O\Paren{\normpara \rho e^{-10d}},
 \end{align*}
 where the last line uses that we can always sample from
 $\Gtr(\mu, \sigma^2I_d, G/\sqrt{d})$ using samples from $\msf{N}(\mu, 
 \sigma^2I_d)$ and
 truncating them, thus the problem over $\mc{P}^\mrm{tr}_{\sigma, B, G}$ is
 harder than over $\mc{P}_{\sigma, B}$. This concludes the proof.
\end{proof}

Because of this reduction, for the remainder of this proof we consider Gaussian
mean estimation with user-level DP constraints. Recall that in this setting, we
have $n$ users, each having $m$ i.i.d. samples from
$\msf{N}(\mu, \sigma^2 I_d)$. However, the lower bound
of~\cite{kamath2019privately} only holds for \emph{item-level} DP
constraints. In the next proposition, we show that mean estimation of
$\msf{N}(\mu, \sigma^2 I_d)$ with $n$ users and $m$ samples per user under
user-level DP constraints is equivalent to mean estimation of
$\msf{N}(\mu, \tfrac{\sigma^2}{m}I_d)$ with $n$ samples under \emph{item-level
  constraints}. In other words, any user-level DP estimator taking as input
$n\cdot m$ samples is equivalent to an item-level DP estimator taking as input
$n$ samples corresponding the each user's average.

\restateItemtoUser*

\begin{proof}
  First of all, note that for Gaussians with unknown mean but known variance,
  the sample mean is a sufficient statistic. As such, we have that for all
  $u\in\brk{n}$
\begin{equation*}
  \mbox{the distribution of~} (Z^{(u)}_1, \ldots, Z^{(u)}_m) | \bar{Z}^{(u)}
  \mbox{~does not depend on $\mu$}.
\end{equation*}
 Let us now consider an arbitrary user-level DP estimator $\msf{A}^\msf{user}$
  and show how to construct an equivalent item-level DP estimator. When provided
  with $(\bar{Z}^{(1)}, \ldots, \bar{Z}^{(n)})$, for each $j\le m$, we can
  sample
  \begin{equation}
    \tilde{S}_u = (\tilde{Z}^{(u)}_1, \ldots \tilde{Z}^{(u)}_m) \simiid (Z^{(u)}_1, \ldots, Z^{(u)}_m) | \bar{Z}^{(u)}
  \end{equation}
  and return
  $\msf{A}^{\msf{item}}((\bar{Z}^{(u)})_{u\le n}) =
  \msf{A}^{\msf{user}}((\tilde{S}_1, \ldots, \tilde{S}_n))$. Since the
  distributions are equal given $\bar{Z}^{(u)}$, in expectation the error is the
  same.
\end{proof}

This proposition allows us to reduce Gaussian mean estimation with user-level
DP, to Gaussian mean estimation with item-level DP albeit with the variance
divided by $m$. We thus conclude with (a slight modification of) the results
of~\cite{kamath2019privately}. Indeed, we differ only in that their results show
that mean estimation is hard, whereas we require that estimating the
\emph{direction of the mean} is hard.

First, let us recall the a modified version of the result 
in~\cite{kamath2019privately}\footnote{It is not guaranteed 
	in the lower bound construction of~\cite{kamath2019privately}
	that $B/2 \le \norm{\mu}_2 \le B$. In their construction, the mean is 
	taken uniformly from $[-\sqrt{2}B/\sqrt{d}, \sqrt{2}B/\sqrt{d}]^d$. 
	However, the probability that the 
	mean 
	in the lower bound construction being out of this range is exponentially 
	small in $d$. Hence the same lower bound can be obtained by 
	straightforward 
	modifications of the construction.}.

\begin{proposition}[{\citet[][Lemma~6.7]{kamath2019privately}}]\label{prop:dp-gauss-lb}
  Let $Z^n\simiid \msf{N}(\mu, \sigma^2 I_d)$ and assume\\
  $\delta \le \frac{\sqrt{d}}{48\sqrt{2}Bn\sqrt{\log(100Rn/\sqrt{d})}}$, then
  it holds that if $n < d\sigma/(512B \eps)$,
  \[
    \inf_{\hat{\mu}, \what{\mu}\in\mc{A}_{\eps, \delta}^{\msf{item}}} \sup_{\mu:
      B/2 \le \norm{\mu}_2 \le B} \E \brk*{\norm{\what{\mu}(Z^n) - 
      \mu}_2^2} \ge
    \frac{B^2}{6}.
  \]
\end{proposition}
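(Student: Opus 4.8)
The plan is to derive this from \citet[][Lemma~6.7]{kamath2019privately} essentially as a black box, the only genuine work being to reconcile the hypotheses. Their lower bound is stated for a prior that draws $\mu$ with i.i.d.\ coordinates uniform on an interval of width $\Theta(\sigma/\sqrt d)$ (equivalently $B \asymp \sigma\sqrt d$), and it does not explicitly impose $B/2 \le \norm{\mu}_2 \le B$; as flagged in the footnote, we will recover the annulus constraint by observing that under this product prior $\norm{\mu}_2$ already concentrates in the desired range except with probability exponentially small in $d$. So first I would recall the structure of their argument, then patch in the restricted prior, then, if $B$ is not of the order $\sigma\sqrt d$, reduce to the $B\asymp\sigma\sqrt d$ case by an embedding.

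The core of \citet{kamath2019privately} is a \emph{fingerprinting} (tracing) attack, following \citet{bun2019average}-type machinery and its Gaussian instantiations. Fix the coordinate-wise prior $\pi$ above, draw $\mu\sim\pi$ and $Z^n\simiid\msf{N}(\mu,\sigma^2 I_d)$, and suppose for contradiction that some $(\eps,\delta)$-item-level DP $\what\mu$ achieves $\E\norm{\what\mu(Z^n)-\mu}_2^2 \le B^2/6$. For each $i\le n$ one forms a correlation statistic of the shape $\mathrm{W}_i \defeq \langle Z_i - \mu,\ \what\mu(Z^n)-\mu\rangle$ and argues two things: (a) accuracy, together with a Gaussian Stein/integration-by-parts identity applied coordinate by coordinate, forces $\sum_{i} \E[\mathrm{W}_i]$ to be large, of order $\sigma(d - \text{err}/\sigma^2)$; and (b) replacing $Z_i$ by an independent fresh draw $Z_i'$ makes the analogous statistic have essentially zero mean, so $(\eps,\delta)$-DP forces each $\E[\mathrm{W}_i] \lesssim \eps\cdot(\text{range}) + \delta\cdot(\text{range})^2$, where the assumed upper bound on $\delta$ is exactly what is needed to kill the $\delta$-term. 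Summing (b) over $i$ and matching against (a) yields $n \gtrsim d\sigma/(B\eps)$; contrapositively, $n < d\sigma/(512 B\eps)$ forces error at least $B^2/6$. I would cite this rather than re-running the bookkeeping.

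Next I would impose the norm constraint. Under $\pi$, $\norm{\mu}_2^2 = \sum_{j\le d}\mu_j^2$ is a sum of $d$ i.i.d.\ bounded random variables each of mean $\Theta(\sigma^2)$, so by Hoeffding's inequality $\norm{\mu}_2^2$ lies within a constant multiplicative factor of $d\sigma^2$ except with probability $e^{-\Omega(d)}$; choosing the interval width and the target $B$ appropriately makes the event $\mathcal E \defeq \{B/2 \le \norm{\mu}_2 \le B\}$ satisfy $\pi(\mathcal E) \ge 1 - e^{-\Omega(d)}$. The conditional prior $\pi(\cdot\mid\mathcal E)$ is then supported on the annulus, and for any estimator,
\[
\E_{\mu\sim\pi(\cdot\mid\mathcal E)}\E\norm{\what\mu-\mu}_2^2 \ \ge\ \E_{\mu\sim\pi}\E\norm{\what\mu-\mu}_2^2 \ -\ e^{-\Omega(d)}\cdot O(d\sigma^2)\ \ge\ \frac{B^2}{6},
\]
for $d$ larger than an absolute constant, since $B^2 = \Theta(d\sigma^2)$ dominates the exponentially small correction. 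As the worst-case (sup over $\mu$ in the annulus) risk is at least the Bayes risk under $\pi(\cdot\mid\mathcal E)$, the claimed minimax bound follows. For general $B$, I would rescale: set $\sigma' \defeq B/(c\sqrt d) \le \sigma$, embed the $\sigma'$-problem (which is harder, as one can add independent noise to raise the variance to $\sigma$), and apply the above with $\sigma'$, which reproduces exactly the stated threshold $n < d\sigma/(512 B\eps)$ once one checks that $d\sigma'/(512 B\eps) = d/(512 c\eps)$ agrees with their normalization.

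The main obstacle is not in our adaptation — the concentration patch is routine — but in faithfully importing their constants: one must verify that the annulus restriction and the (possible) rescaling preserve the precise hypotheses ($\delta$-bound, the $n$-threshold, the $B^2/6$ constant) as stated in their Lemma~6.7, since the fingerprinting argument's quantitative claims are sensitive to the choice of prior width. I would therefore spend most of the effort checking that the parameters line up, and otherwise defer to \citet{kamath2019privately} for the tracing-attack calculation itself.
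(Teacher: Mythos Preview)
Your approach matches the paper's: the proposition is not proved in the paper but is cited directly from \citet[][Lemma~6.7]{kamath2019privately}, and the only adaptation the paper makes is exactly the one you identify---a footnote observing that under their product prior on $[-\sqrt{2}B/\sqrt{d},\sqrt{2}B/\sqrt{d}]^d$, the event $\{B/2 \le \norm{\mu}_2 \le B\}$ fails with probability exponentially small in $d$, so restricting to the annulus loses only an $e^{-\Omega(d)}\cdot O(B^2)$ term. Your concentration patch is precisely this.

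One remark on your rescaling digression: it is unnecessary, and as written it does not actually recover the stated threshold. In Kamath et~al.'s construction the prior is already parameterized by $B$ (coordinates uniform on $[-\sqrt{2}B/\sqrt{d},\sqrt{2}B/\sqrt{d}]$), so their Lemma~6.7 applies for general $B$ and directly yields the condition $n < d\sigma/(512B\eps)$; there is no constraint $B \asymp \sigma\sqrt{d}$ to remove. Moreover, your reduction---replace $\sigma$ by $\sigma' = B/(c\sqrt{d}) \le \sigma$ and appeal to the $\sigma'$-lower bound---only gives the threshold $n < d\sigma'/(512B\eps)$, which is strictly weaker than the claimed $n < d\sigma/(512B\eps)$ whenever $\sigma' < \sigma$. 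So drop the rescaling and invoke their lemma directly.
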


\begin{corollary}[Estimating the direction of the mean is hard]\label{cor:lb-direction}
  Let $Z^n\simiid \msf{N}(\mu, \sigma^2 I_d)$, set
  $B = \frac{d\rho}{512n\eps}$ and assume that
  $\delta \le \frac{\sqrt{d}}{48\sqrt{2}Bn\sqrt{\log(100Rn/\sqrt{d})}}$, then
  it holds that if $n < d\sigma/(512B \eps)$,
  \[
    \inf_{\substack{\what{u}: \norm*{\what{u}}_2 = 1 \\
        \what{u}\in\mc{A}_{\eps, \delta}^{\msf{item}}}}
    \sup_{P\in\mc{P}_{\sigma, B}} \E \brk*{\norm*{\what{u}(Z^n) -
        \frac{\mu}{\norm{\mu}_2}}_2^2} \ge \frac{1}{10}.
  \]
\end{corollary}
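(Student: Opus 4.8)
\textbf{Proof proposal for Corollary~\ref{cor:lb-direction}.}
The plan is to reduce privately estimating the \emph{direction} $\mu/\norm{\mu}_2$ to privately estimating $\mu$ itself, and then invoke the item-level Gaussian mean-estimation lower bound of Proposition~\ref{prop:dp-gauss-lb}. First I would fix an arbitrary $(\eps,\delta)$-item-level DP estimator $\what{u}$ with $\norm{\what{u}}_2 = 1$ and set $\what{\mu} \defeq \tfrac{3B}{4}\,\what{u}$. Since $\what{\mu}$ is obtained from $\what{u}$ by post-processing (multiplication by a fixed scalar), it is itself $(\eps,\delta)$-item-level DP, hence an admissible competitor in the infimum appearing in Proposition~\ref{prop:dp-gauss-lb}.

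The key step is an exact algebraic identity. For $\mu\in\R^d$ with $\norm{\mu}_2 = r$, write $v \defeq \mu/\norm{\mu}_2$; since $\what{u}$ and $v$ are unit vectors, $\tri{\what{u},v} = 1 - \tfrac12\norm{\what{u}-v}_2^2$, and expanding the square gives
\[
  \norm{\what{\mu}-\mu}_2^2 = \Big(\tfrac{3B}{4}-r\Big)^2 + \tfrac{3B}{4}\,r\,\norm{\what{u}-v}_2^2 .
\]
For $r\in[B/2,B]$ one has $(\tfrac{3B}{4}-r)^2 \le B^2/16$ and $\tfrac{3B}{4}r \le \tfrac{3B^2}{4}$, so taking a supremum over $P\in\mc{P}_{\sigma,B}$ and then expectations,
\[
  \sup_{P\in\mc{P}_{\sigma,B}}\E\brk*{\norm{\what{\mu}-\mu}_2^2}
  \le \tfrac{B^2}{16} + \tfrac{3B^2}{4}\sup_{P\in\mc{P}_{\sigma,B}}\E\brk*{\norm{\what{u}-\mu/\norm{\mu}_2}_2^2}.
\]

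To finish, I would invoke Proposition~\ref{prop:dp-gauss-lb}: under the choices of $B$, $\delta$ and $n$ in the statement (calibrated so that $n$ lies below the threshold $d\sigma/(512B\eps)$ and the bound on $\delta$ holds), its left-hand side is at least $B^2/6$; since $\what{\mu}$ competes in that infimum, $\sup_{P}\E\norm{\what{\mu}-\mu}_2^2 \ge B^2/6$. Combined with the displayed inequality this yields
\[
  \sup_{P\in\mc{P}_{\sigma,B}}\E\brk*{\norm{\what{u}-\mu/\norm{\mu}_2}_2^2}
  \ge \frac{4}{3B^2}\Big(\frac{B^2}{6}-\frac{B^2}{16}\Big) = \frac{5}{36} \ge \frac1{10},
\]
and since $\what{u}$ was arbitrary, taking the infimum over $(\eps,\delta)$-DP direction estimators proves the claim. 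The argument is mostly bookkeeping; the two points needing care are the choice of the scaling constant $\tfrac{3B}{4}$, which has to balance the radial slack $(\tfrac{3B}{4}-r)^2\le B^2/16$ against the multiplicative factor $\tfrac34 B^2$ so that $\tfrac16-\tfrac1{16}=\tfrac5{48}$ stays bounded away from $0$ (this is what makes the final constant a genuine $\tfrac{1}{10}$ rather than something vanishing), and verifying that the corollary's settings of $B$, $n$, $\delta$ really do meet the hypotheses of Proposition~\ref{prop:dp-gauss-lb} so that no privacy budget is lost in the post-processing reduction. I do not anticipate any genuine obstacle beyond this constant-chasing.
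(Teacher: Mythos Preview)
Your proposal is correct and follows essentially the same approach as the paper: both construct the mean estimator $\what{\mu}=\tfrac{3B}{4}\what{u}$ from the direction estimator and compare against Proposition~\ref{prop:dp-gauss-lb}. Your derivation is in fact slightly cleaner—you use the exact identity $\norm{\what{\mu}-\mu}_2^2=(\tfrac{3B}{4}-r)^2+\tfrac{3B}{4}r\,\norm{\what{u}-v}_2^2$ and argue directly, whereas the paper proceeds by contradiction and bounds the cross term a bit more loosely (with $\norm{\mu}_2^2$ in place of $\tfrac{3B}{4}\norm{\mu}_2$), but the idea and the constants line up.
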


\begin{proof}
  We prove the corollary by contradiction. Assume there exists  an $(\eps, \delta)$-DP
  estimator $\what{u}$ such that
  \[
    \sup_{P\in\mc{P}_{\sigma, B}} \E \brk*{\norm*{\what{u}(Z^n) -
        \frac{\mu}{\norm{\mu}_2}}_2^2} < \frac{1}{10}.
  \]
    
  Then let $\what{\mu} = \frac{3}{4B} \what{u}$,
  \begin{align*}
    \E [\norm*{\what{\mu}(Z^n) - \mu}_2^2]
    & = \E\brk*{\norm*{\frac{3B}{4} \what{u}(Z^n) - \mu}_2^2} \\
    & \le \E\brk*{\norm*{\frac{3B}{4} \what{u}(Z^n) - \norm{\mu}_2 \cdot \what{u}(Z^n)}}
      + \E\brk*{\norm*{\norm{\mu}_2 \cdot \what{u}(Z^n) - \mu}_2^2} \\
    & = \prn*{\frac{3B}{4} - \norm{\mu}_2}^2 + \norm{\mu}_2^2 \, \E\brk*{
      \norm*{\what{u}(Z^n) - \frac{\mu}{\norm{\mu}_2}}_2^2} \\
    & \le \frac{B^2}{16} + \frac{B^2}{10} \\
    & < \frac{B^2}{6},
  \end{align*}
  which contradicts with Proposition~\ref{prop:dp-gauss-lb}.
\end{proof}

 Applying
Corollary~\ref{cor:lb-direction} with $B = \sigma / \sqrt{m}$ concludes the 
proof
of the lower bound.

\arxiv{%
\section{Proofs from Section~\ref{sec:warm-up}}

\subsection{Proofs from 
Section~\ref{sec:finite-ub}}\label{app:proof-finite-ub}

\restateFiniteHyp*

\begin{proof} We first state the privacy guarantee followed by the utility
    guarantee.

    \paragraph{Proof of~\ref{item:privacy-priv-selection}} Since each
    $\msf{A}_k$ is $\eps/3$-user-level DP, Theorem~3.2 in~\cite{liu2019private}
    guarantees that the output of Algorithm~\ref{alg:private-selection} is
    $\eps$-user-level DP.

    \paragraph{Proof of~\ref{item:utility-priv-selection}} The proof is adapted
    from Theorem~5.2 in~\cite{liu2019private}. First of all, with probability
    greater than $1-\alpha_1$, as we prove in
    Lemma~\ref{lem:concentration-finite}, the data are uniformly concentrated for
    all $\theta^{(k)}$, meaning
      \begin{equation*}\label{eq:concentration-finite}
        \max_{k \in K} \max_{u \in \brk{n}} \abs*{\ff(\theta^{(k)}; S_u)
          - \ff(\theta^{(k)};P)} \le
        \crl*{\frac{B}{2}\sqrt{\frac{\log(\abs{\Theta} \cdot n) + 
        \log(2/\alpha_1)}{m}} \eqdef \tau}.
  \end{equation*}
  We condition on this event (Event 1) for the rest of the proof. Let
  $\alpha_1 \in (0, 1]$ and $\gamma \in (0, 1]$. Let $\tstop$ denotes the 
  time that the algortihm exists the loop, which is number of queries the 
  algorithm makes. 
  
  Let us denote $k^\ast$, the best hypothesis in $\Theta$ i.e.
  \begin{equation*}
  	k^\ast = \argmin_{k \le K} \ff(\theta^{(k)}; \mc{S}).
  \end{equation*}
  We choose $\gamma$ such that $k^\ast$ is queried with probability 
  greater than
  $1-\alpha_1$, i.e., if $E_{\neg k^*}$ is the event (denote $\neg E_{\neg 
  k^*}$ as Event 2) that the 
  algorithm 
  finishes
  without querying $k^*$, we choose $\gamma$ such that
  $\P(E_{\neg k^*}) \le \alpha_1$. More precisely,
  \begin{align*}
	\P(E_{\neg k^*})
	& = \sum_{l = 1}^\infty
	\P(E_{\neg k^*}|\tstop = l)
	\P(\tstop = l)  \\
	& = \sum_{l=1}^\infty \prn*{1-\frac1K}^l \cdot 
	\prn*{1-\gamma}^{l-1}\cdot \gamma \\
	&= 
	\prn*{1-\frac1K}\gamma\sum_{l=0}^\infty \brk*{\prn*{1-\frac1K} 
		\prn*{1-\gamma}}^l \\
	& = \frac{\prn*{1-\frac1K}\gamma}{1 - 
		\prn*{1-\frac1K} 
		\prn*{1-\gamma}}.
\end{align*}

  Choosing $\gamma = \alpha_1 / K$ guarantees that
  $\P(E_{\neg k^*}) \le \alpha_1$.   
  Let $L \defeq \tfrac{\log(1/\alpha_1)}{\gamma} = 
  \log(1/\alpha_1)\tfrac{K}{\alpha_1}$, we 
  have
 
   \begin{align*}
 	\P(\tstop > L)  = 
 	\P(\omega_1 = \ldots = \omega_L = 0)  = (1-\gamma)^L  \le \exp(-L 
 	\gamma) = \alpha_1.
 \end{align*}

  Hence with probability at least $1 - \alpha_1$, the algorithm ends in less 
  than
  $L$ throws (Event 3). Conditioned on this event, by 
  Theorem~\ref{thm:winsorized} 
  and union bound, with 
  probability greater than
  $1 - L \cdot \tfrac{B}{\tau}\exp(-n\eps / 8)$, the output of $\msf{A}_{J_t}$
  for all $t \le \tstop$ is
  \begin{equation*}
    \msf{A}_{J_t}(S) = \ff(\theta^{(J_t)}; \cS) + \msf{Lap}\prn*{\frac{8\tau}{n\eps}}
    = \frac{1}{m\cdot n}\sum_{j \in \brk{m}, u \in \brk{n}}\f\prn*{\theta^{(J_t)};z^{(u)}_j}
    + \msf{Lap}\prn*{\frac{8\tau}{n\eps}},
  \end{equation*}
which we denote as Event 4.
  For a Laplace distribution, computing the tail gives that
  $\P(\abs{\msf{Lap}(\lambda)} \ge u) \le \exp(-u / \lambda)$ and with a union
  bound and change of variables it holds that if $Y_1, Y_2, \ldots, Y_L 
  \simiid \msf{Lap}(\frac{8\tau}{n\eps})$, then
  with probability greater than $1-\alpha_1$
  \begin{equation*}
    \max_{i=1,\ldots L} \abs{Y_i} \le 
    \frac{8\tau}{n\eps}\log\prn*{\frac{L}{\alpha_1}}.
  \end{equation*}
  In other words, except with probability $\alpha_1$, the noise is bounded 
  by
  $\tfrac{8\tau}{n\eps}\log(L / \alpha_1)$ (Event 5). Conditioned on all 
  these events, the
  parameter $\theta^{(J_{t^\ast})}$ that the algorithm outputs is 
  sub-optimal by
  at most $\tfrac{16\tau}{n\eps}\log(L / \alpha_1)$ as in the worst-case the 
  noise
  is $+\tfrac{8\tau}{n\eps}\log(L / \alpha_1)$ for $J_{t^*}$ and
  $-\tfrac{8\tau}{n\eps}\log(L / \alpha_1)$ for $k^\ast$.
    Setting %
    $\alpha_1 = \alpha/5$ and as we assume that
  $n \ge \frac{8}{\eps}\log\prn*{\tfrac{25\log(5/\alpha)}{\alpha^2} \cdot
  	\tfrac{KB}{\tau}}$, we conclude the proof by taking a union bound over all 
  	5 events.
  
\end{proof}

\subsection{Proofs from Section~\ref{sec:finite-lb}}\label{app:proof-finite-lb}

\restateFiniteHypLb*

The lower bounds relies on two results: the standard reduction from (stochastic)
optimization to multiple hypothesis testing and lower bounded the testing
error. The first step is folklore and appears in a number of
works~\cite{LevyDu19, AgarwalBaRaWa12, Yu97}. For the second step, we state a
result of~\cite{acharya2020differentially}, that extends the classical Fano's
inequality for multiple hypothesis test under privacy constraints.

\begin{lemma}[{\cite[][Lemma~1]{LevyDu19}}] Let $\mc{P}$ be a collection of
    distributions over a common sample space $\ZZ$ and a loss function
    $\f:\Theta\times\ZZ \to \R_+$. For $P, Q\in\mc{P}$, define
	\begin{equation*}
	\mathsf{sep}_{\ff}(P, Q; \Theta) := \sup\left\lbrace \Delta \ge 0
	\;\middle|\; \begin{array}{c} \ff(\theta; P) - \min_{\theta'} \ff(\theta'; 
	P) \leq \Delta 
		~\mbox{implies}~
		\ff(\theta; Q)  - \min_{\theta'} \ff(\theta'; Q) \geq \Delta \\
		\ff(\theta; Q) - \min_{\theta'} \ff(\theta'; Q)  \leq \Delta 
		~\mbox{implies}~
		\ff(\theta; P) - \min_{\theta'} \ff(\theta'; P)  \geq \Delta
	\end{array}\right\rbrace.
    \end{equation*}
    Let $\mc{V}$ be a finite index set and
    $\mc{P}_{\mc{V}} \defeq \crl*{P_v}_{v\in\mc{V}}$ be a collection of
    distributions contained in $\mc{P}$ such that
    $\min_{v\neq v'}\msf{sep}(P_v, P_{v'}, \Theta) \ge \Delta$. Then for
    $V\sim \msf{Uniform}\prn{\mc{V}}$ and $Z^n|V = v \simiid P_v$, it holds that
    \begin{equation*}
      \minimaxitem \ge \Delta \inf_{\psi \in \mc{A}^{\msf{item}}_\eps}
      \mathbb{Q}(\psi(Z^n) \neq V),
    \end{equation*}
    where $\mathbb{Q}$ is the joint distribution over $V$ and $Z^n$.
  \end{lemma}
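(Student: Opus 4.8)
The plan is to run the standard reduction from optimization to multiple hypothesis testing, taking care that privacy is preserved throughout. Given any $\eps$-item-level DP estimator $\A$ for the optimization problem (with the fixed loss $\f$), I would define a test $\psi_{\A}\colon\ZZ^n\to\mc{V}$ by letting $\psi_{\A}(z^n)$ be any index $v$ for which $\ff(\A(z^n);P_v)-\inf_{\theta'}\ff(\theta';P_v)<\Delta$, breaking ties and defaulting arbitrarily if no such index exists. Since $\psi_{\A}$ is a measurable post-processing of the (possibly randomized) output of $\A$, closure of differential privacy under post-processing gives that $\psi_{\A}$ is $\eps$-DP, i.e.\ $\psi_{\A}\in\mc{A}^{\msf{item}}_\eps$.

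The key structural step is to show the test is consistent whenever $\A$ has small excess risk. Suppose $V=v$ and the excess risk of $\theta=\A(Z^n)$ under $P_v$ is strictly below $\Delta$. Then $v$ is a valid index for $\psi_{\A}$. Moreover, for every $v'\neq v$, the hypothesis $\min_{v\neq v'}\mathsf{sep}_{\ff}(P_v,P_{v'},\Theta)\ge\Delta$ together with the definition of $\mathsf{sep}_{\ff}$ forces the excess risk of $\theta$ under $P_{v'}$ to be at least $\Delta$, so $v'$ is not valid. Hence $\psi_{\A}(Z^n)=v$. Contrapositively, $\{\psi_{\A}(Z^n)\neq V\}\subseteq\{\ff(\A(Z^n);P_V)-\inf_{\theta'}\ff(\theta';P_V)\ge\Delta\}$, so Markov's inequality yields
\[
\Delta\,\mathbb{Q}\bigl(\psi_{\A}(Z^n)\neq V\bigr)\;\le\;\E_{\mathbb{Q}}\bigl[\ff(\A(Z^n);P_V)-\inf_{\theta'}\ff(\theta';P_V)\bigr].
\]

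Finally I would bound the right-hand side by the minimax risk: since $V\sim\msf{Uniform}(\mc{V})$ and $Z^n\mid V\simiid P_V$, the expectation over $\mathbb{Q}$ is an average of per-distribution risks of $\A$, hence at most $\sup_{P}\E_{Z^n\sim P^n}[\ff(\A(Z^n);P)-\inf_{\theta'}\ff(\theta';P)]$. Taking the infimum over $\A\in\mc{A}^{\msf{item}}_\eps$ and using that the maps $\psi_{\A}$ range within $\mc{A}^{\msf{item}}_\eps$ gives
\[
\minimaxitem\;\ge\;\inf_{\A\in\mc{A}^{\msf{item}}_\eps}\Delta\,\mathbb{Q}\bigl(\psi_{\A}(Z^n)\neq V\bigr)\;\ge\;\Delta\inf_{\psi\in\mc{A}^{\msf{item}}_\eps}\mathbb{Q}\bigl(\psi(Z^n)\neq V\bigr),
\]
which is the claimed bound. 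The only genuinely delicate point is that the supremum in the definition of $\mathsf{sep}_{\ff}$ need not be attained, so the separation implication may fail at the exact level $\Delta$; I would handle this by running the entire argument with $\Delta$ replaced by an arbitrary $\Delta'<\Delta$ (for which the implication does hold), obtaining the inequality with $\Delta'$, and then letting $\Delta'\uparrow\Delta$. Everything else—post-processing closure of $\eps$-DP, Markov's inequality, and the averaging bound—is routine.
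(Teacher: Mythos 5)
The paper does not include a proof of this lemma; it is imported verbatim from \cite{LevyDu19}. Your argument is the standard reduction from excess-risk minimization to $\mathcal{V}$-ary hypothesis testing via a post-processed decision rule (which preserves $\eps$-DP), with the Markov step and the $\Delta'\uparrow\Delta$ limit handling the non-attainment of the supremum in $\mathsf{sep}_{\ff}$ exactly as one would expect, so it is correct and matches the cited approach.
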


  \begin{proposition}[{\cite[][Corollary~4]{acharya2020differentially}}]
    \label{prop:private-fano}
    Let $\mc{V}$ be a finite index set and
    $\mc{P}_{\mc{V}} \defeq \crl*{P_v}_{v\in\mc{V}}$ be a collection of
    distributions contained in $\mc{P}$. Then for
    $V\sim \msf{Uniform}\prn{\mc{V}}$ and $Z^n|V = v \simiid P_v$, it holds that
    \begin{equation*}
      \inf_{\psi \in \mc{A}^{\msf{item}}_\eps}
      \mathbb{Q}(\psi(Z^n) \neq V) \ge \frac{1}{4}\max\crl*{1 - \frac{I(Z^n;V) + \log 2}{\log 
          \abs{\mc{V}}},
        \min\crl*{1, \frac{\abs{\mc{V}}}{\exp(c_0 n\eps 
            \mathrm{d}_\msf{TV}(\mc{P}_\mc{V}))}}},
    \end{equation*}
    where
    $c_0 = 10, \mrm{d}_\msf{TV}(\mc{P}_\mc{V}) \defeq \max_{v\neq v'}\norm{P_v -
      P_{v'}}_\msf{TV}$ and $I(X; Y)$ is the (Shannon) mutual information.
\end{proposition}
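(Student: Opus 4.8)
The plan is to reduce optimization over a finite hypothesis class to a multiple hypothesis test, control that test at the \emph{item} level with the private Fano inequality (Proposition~\ref{prop:private-fano}), and finally lift to the user level by passing to $m$-fold product measures. First I would set up the hard instance: assume without loss of generality that $K = 2^d$ (otherwise replace $K$ by $2^{\floor{\log_2 K}}$, which changes $\log K$ by at most a factor two). Take $\cZ = \Theta \defeq \crl{-1,+1}^d$, so $\abs{\cZ} = \abs{\Theta} = 2^d$, and the loss $\f(\theta;z) \defeq B\sum_{j\le d}\mathbf{1}\crl{\theta_j = z_j} \in \mc{F}_B$. By Gilbert--Varshamov (e.g.\ \citep[Ex.~4.2.16]{Vershynin19}), fix a packing $\mc{V}\subset\crl{\pm1}^d$ with pairwise Hamming distance at least $d/4$ and $\abs{\mc{V}} \ge e^{d/8}$. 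For $v\in\mc{V}$ let $P_v$ be the distribution on $\crl{\pm e_j}_{j\le d}$ assigning mass $\tfrac{1+\Delta}{2d}$ to $v_j e_j$ and $\tfrac{1-\Delta}{2d}$ to $-v_j e_j$, for $\Delta\in(0,1)$ to be chosen. A direct computation gives $\ff(\theta;P_v) = B\sum_{j\le d}\tfrac{1+\theta_j v_j\Delta}{2d}$, minimized over $\Theta$ at $\theta_v^\ast = -v$ with value $B\tfrac{1-\Delta}{2}$.

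Second I would lower bound the separation between the $P_v$. Using
\[
\msf{sep}_{\ff}(P_v,P_{v'},\Theta) \ge \tfrac12\min_{\theta\in\Theta}\crl*{\ff(\theta;P_v) - \ff(\theta_v^\ast;P_v) + \ff(\theta;P_{v'}) - \ff(\theta_{v'}^\ast;P_{v'})},
\]
together with $\ff(\theta;P_v) - \ff(\theta_v^\ast;P_v) = \tfrac{B\Delta}{d}\dham(\theta,-v)$ and the triangle inequality $\dham(\theta,-v) + \dham(\theta,-v') \ge \dham(v,v') \ge d/4$, one obtains $\msf{sep}_{\ff}(P_v,P_{v'},\Theta) \ge \tfrac{B\Delta}{8}$. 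Next I would bound the information quantities: $\dkls{P_v}{P_{v'}} = \Delta\log\tfrac{1+\Delta}{1-\Delta} \le 3\Delta^2$ for $\Delta$ bounded away from $1$, hence $I(Z^n;V)\le 3n\Delta^2$ by independence of the samples, and Pinsker gives $\mrm{d}_\msf{TV}(\mc{P}_\mc{V}) \le \sqrt{3/2}\,\Delta$.

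Third, I would plug these into Proposition~\ref{prop:private-fano} to get, for any $\Delta\in[0,1]$,
\[
\minimaxitem \ge \frac{B\Delta}{32}\max\crl*{1 - \frac{3n\Delta^2 + \log 2}{d/8},\; \min\crl*{1, \frac{e^{d/8}}{e^{30n\eps\Delta}}}},
\]
and then optimize $\Delta$. Since $d \ge 32\log 2$, setting $\Delta = \sqrt{d/(96n)}$ makes the first bracket $\ge 1/2$, producing the statistical term $\Omega(B\sqrt{d/n})$; setting $\Delta \asymp d/(n\eps)$ makes the second $\ge 1/2$, producing the privacy term $\Omega(Bd/(n\eps))$. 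Taking the larger of the two gives the item-level bound $\Omega\prn{B\sqrt{d/n} + Bd/(n\eps)}$, and the stated lower bound on $n$ is exactly what is needed so that both choices of $\Delta$ stay in $[0,1]$.

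Finally, to obtain the user-level statement I would replace each $P_v$ by its $m$-fold product $P_v^m$ on $\cZ^m$ (one user's batch). The separation is unchanged, since it depends only on $\ff(\cdot;P_v)$; meanwhile tensorization gives $\dkls{P_v^m}{P_{v'}^m} \le 3m\Delta^2$ and $\mrm{d}_\msf{TV}(\crl{P_v^m}) \le \sqrt{3m/2}\,\Delta$ (the latter bounding TV of products by $\sqrt m$ times the single-coordinate TV via Pinsker). Re-running the optimization with these $m$-dependent scalings gives $\Delta^\ast = \max\crl{\sqrt{d/(96mn)},\; \tfrac1{192}\tfrac{d}{n\sqrt{m}\eps}} \le 1$ under the hypotheses on $n$, and substituting $d = \log_2 K = \log\abs{\Theta}$ yields the claim. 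The main obstacle is the separation computation: pinning down the constant $\tfrac{B\Delta}{8}$ requires carefully combining the packing property $\dham(v,v')\ge d/4$ with the inner minimization over $\theta$, and one must also verify that both candidate values of $\Delta$ remain valid probabilities in the assumed regime for $n$.
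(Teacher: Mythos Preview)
Your proposal does not address the stated proposition. Proposition~\ref{prop:private-fano} is the private Fano inequality itself, quoted from \cite[Corollary~4]{acharya2020differentially}; the paper does not prove it but imports it as a black box. What you have written is instead a proof of Theorem~\ref{thm:finite-hyp-lb} (the lower bound for finite hypothesis classes under pure user-level DP), which \emph{applies} Proposition~\ref{prop:private-fano} as a tool. You even say so explicitly in your first sentence.

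If the intended target was Theorem~\ref{thm:finite-hyp-lb}, then your argument is correct and essentially identical to the paper's: same hard instance on $\crl{\pm1}^d$ with the loss $\f(\theta;z)=B\sum_j\mathbf{1}\crl{\theta_j=z_j}$, same Gilbert--Varshamov packing, same separation bound $B\Delta/8$, same KL and TV estimates, same two choices of $\Delta$ to extract the statistical and privacy terms, and the same passage to $m$-fold products for the user-level conclusion. There is no substantive difference. If, on the other hand, you were genuinely asked to prove Proposition~\ref{prop:private-fano}, you have not done so; that would require reproducing the private Fano argument of \cite{acharya2020differentially} (a coupling/packing argument tailored to $\eps$-DP mechanisms), which is entirely absent from your write-up.
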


\begin{proof}[Proof of Theorem~\ref{thm:finite-hyp-lb}] We follow the 
standard
    steps: we first compute the separation, we bound the testing error for any
    (constrained) estimator in the item-level DP case (with
    Proposition~\ref{prop:private-fano}) and finally, we show how to adapt the
    proof to obtain the user-level DP lower bound.
    \paragraph{Separation} For simplicity, assume $K = 2^d$, if not, the problem
    is harder than for $\underline{K} = 2^{\floor{\log_2K}} \le K$ which is of
    the same order. Let us define the sample space $\ZZ$, the parameter set
    $\Theta$ and the loss function $\f$ we consider.

  We define
  \begin{equation*}
    \ZZ = \Theta \defeq \crl{-1, +1}^d\mbox{~~and~~}
    \f(\theta; z) \defeq B\sum_{j \le d}\mathbf{1}_{\theta_j = z_j}.
  \end{equation*}

  We consider $\mc{V}$ an $d/2$-$\ell_1$ packing of $\crl{\pm 1}^d$ of size at
  least $\exp(d/8)$---which the Gilbert-Varshimov bound (see e.g.,
  {\cite[][Ex. 4.2.16]{Vershynin19}}) guarantees the existence of---and consider
  the following family of distribution $\mc{P} = \crl{P_v: v\in\mc{V}}$ such
  that if $X \sim P_v$ then
  \begin{equation}\label{eqn:bcube_prob}
    X = \begin{cases}
      v_j e_j & \mbox{~~with probability~~} \frac{1+\Delta}{2d}\\
      -v_j e_j & \mbox{~~with probability~~} \frac{1-\Delta}{2d}.
    \end{cases}
  \end{equation}
  For $\theta\in\Theta$, we have that
  \begin{equation*}
    \ff(\theta;P_v) = \E_{P_v}\brk*{B\sum_{j\le d} \mathbf{1}_{\theta_j = Z_j}} 
    =
    B\sum_{j \le d}\frac{1+\theta_j v_j \Delta}{2d}.
  \end{equation*}
  Naturally, $\ff(\theta;P_v)$ achieves its minimum at $\theta_v^* = -v$ such
  that $\inf_{\theta'\in\theta}\ff(\theta;P_v) = B\tfrac{1-\Delta}{2}$. We now
  compute the separation by noting that
  \begin{equation}\label{eqn:bcube_loss}
    \mathsf{sep}_{\ff}(P_v, P_{v'}, \Theta) \ge \frac{1}{2} \min_{\theta'\in\Theta}
    \crl*{\ff(\theta'; P_v) + \ff(\theta';P_{v'}) - \ff(\theta_v^*;P_v) - \ff(\theta_{v'}^*; P_{v'})}.
  \end{equation}
  A quick computation shows that $\msf{sep}_{\ff}(P_v, P_{v'}, \Theta) \ge 
  \tfrac{B\Delta}{8}$ by
  noting that $\mrm{d}_{\msf{Ham}}(v, v') \ge d/4$.

  \paragraph{Obtaining the item-level lower bound} We can now use the results of
  Proposition~\ref{prop:private-fano}. We have that
  $\min_{v\neq v'}\msf{sep}_{\ff}(P_v, P_{v'}, \Theta) \ge
  \tfrac{B\Delta}{8}$. The identity
  $\mathrm{D}_{\mathrm{KL}}(P_v, P_{v'}) = 
  \Delta\log\tfrac{1+\Delta}{1-\Delta}
  \le 3\Delta^2$ implies that $I(Z^n;V) \le 3n\Delta^2$. Similarly, 
  Pinsker's inequality guarantees that
  \begin{equation*}
    \mrm{d}_\msf{TV} \le \sqrt{\frac{1}{2}\max_{v\neq v'}
      \mrm{D}_\mrm{KL}(P_v, P_{v'})} \le \sqrt{3/2}\Delta.
  \end{equation*}
  
  We put everything together and it holds that for $\Delta \in \brk{0, 1}$,
  \begin{equation}
    \minimaxitem \ge \frac{B\Delta}{32}\max\crl*{1 - \frac{3n\Delta^2 + 
    \log 2}{d/8},
      \min\crl*{1, \frac{\exp(d/8)}{\exp(30n\eps \Delta)}}}.
  \end{equation}
  Since $d \ge 32\log 2$, $\Delta = \sqrt{d/(96n)}$ guarantees that
  $1 - \frac{3n\Delta^2 + \log 2}{d/8} \ge 1/2$. On the other hand, setting
  $\Delta = \tfrac{5}{960}\tfrac{d}{n\eps}$, guarantees that
  $\min\crl*{1, \frac{\exp(d/8)}{\exp(30n\eps \Delta)}} \ge 1/2$. The 
  assumption
  on $n$ guarantees that these two values are in $\brk{0, 1}$ and thus setting
  $\Delta^* = \max\crl*{{\sqrt{d/(96n)}}, \tfrac{1}{192}\tfrac{d}{n\eps}}$ 
  which
  implies that
  \begin{equation*}
    \minimaxitem \ge \frac{B}{32}\crl*{\sqrt{\frac{d}{96n}} + \frac{1}{192}\frac{d}{n\eps}}.
  \end{equation*}

  \paragraph{Concluding for user-level DP} Let $m\in \N, m\ge 1$. For the
  user-level DP lower bound, the proof remains the same except that the
  collection $\mc{P}_{\mc{V}}$ becomes $\crl{P_v^m}_{v\in\mc{V}}$ i.e. the
  $m$-fold product distribution of $P_v$. The separation remains exactly the
  same but we now have
  \begin{equation*}
    \mrm{D}_\mrm{KL}(P_v^m, P_{v'}^m) \le 
    3m\Delta^2\mbox{~~and~~}\mrm{d}_\msf{TV}(\mc{P}_\mc{V})
    \le \sqrt{\frac{3m}{2}}\Delta.
  \end{equation*}
  Under the assumption
  $\Delta^* = \max\crl*{{\sqrt{d/(96mn)}},
    \tfrac{1}{192}\tfrac{d}{n\sqrt{m}\eps}}$ is less than $1$ and thus concludes
  the proof.
\end{proof}
}

\end{document}